\documentclass[journal]{IEEEtran}

\usepackage[utf8]{inputenc} 
\usepackage{graphicx}      

\usepackage{mathtools}     
\usepackage{amssymb}       
\usepackage{dsfont}        

\usepackage[ruled,vlined,linesnumbered]{algorithm2e}
\SetKwComment{Comment}{$\triangleright$ }{} 

\usepackage{array}         

\usepackage{hyperref}
\usepackage[capitalize,noabbrev]{cleveref}
\hypersetup{
    colorlinks=true,
    linkcolor=blue,
    citecolor=blue,
    urlcolor=cyan,
    pdftitle={The Normalized Maximum Likelihood for Regular Non-Smooth Models: Measure-Theoretic Foundations and Geometric Sampling},
    pdfauthor={Trenton Lau and Gary P. T. Choi},
}

\newtheorem{theorem}{Theorem}
\newtheorem{lemma}{Lemma}
\newtheorem{proposition}{Proposition}
\newtheorem{corollary}{Corollary}
\newtheorem{definition}{Definition}
\newtheorem{assumption}{Assumption}
\newtheorem{remark}{Remark}

\newcommand{\R}{\mathbb{R}}

\newcommand{\indicator}{\mathds{1}}

\newcommand{\LD}{\mathcal{L}^N}
\newcommand{\LK}{\mathcal{L}^k}

\newcommand{\HDK}{\mathcal{H}^{N-k}}

\DeclareMathOperator{\detop}{det}
\DeclareMathOperator{\Ker}{ker}

\newcommand{\TV}{\text{TV}}

\newcommand{\Jcons}{J_{\text{cons}}}
\newcommand{\JK}{J_K}
\newcommand{\mle}{\hat{\theta}}
\newcommand{\Dc}{\mathcal{D}_C}

\newcommand{\E}{\mathbb{E}}

\DeclareMathOperator{\Var}{Var}

\DeclareMathOperator{\Bias}{Bias}
\newcommand{\JcalA}{J_{\mathcal{A}}}

\DeclareMathOperator*{\argmin}{arg\,min}

\begin{document}

\title{The Normalized Maximum Likelihood for Regular Non-Smooth Models: Measure-Theoretic Foundations and Geometric Sampling}

\author{Trenton~Lau 
        and~Gary~P.~T.~Choi 

\thanks{T. L. and G. P. T. C. are with the Department of Mathematics, The Chinese University of Hong Kong, Hong Kong (e-mail: trentonlau@cuhk.edu.hk; ptchoi@cuhk.edu.hk).}
}

\markboth{ }
{Lau \MakeLowercase{\textit{et al.}}: Foundations and Geometric MCMC for Non-Smooth NML}

\maketitle
\begin{abstract}
The Normalized Maximum Likelihood (NML) codelength, or stochastic complexity, represents a principled criterion for universal coding. While recent coarea-based formulations provided a calculation method for smooth models, this framework collapses for the non-smooth estimators ubiquitous in modern machine learning (e.g., Lasso, Sparse SVMs). In this work, we provide a rigorous framework for computing the NML for regular path-differentiable Lipschitz (PDL) estimators. By applying classical geometric measure theory and bridging the coarea formula with conservative Jacobians, we prove that the stochastic complexity for non-smooth models is well-posed and theoretically consistent with the outputs of modern Automatic Differentiation. To compute this quantity exactly, we introduce the Propose-and-Project Metropolis--Hastings (PDL-PPMH) sampler, a geometric MCMC algorithm capable of traversing the non-differentiable level sets of the maximum likelihood estimator. We theoretically justify its components, including a stochastic tangent space proposal and a provably convergent non-smooth projection solver. We demonstrate the method's robustness by sampling from a high-dimensional Lasso posterior ($P=2000$), while simultaneously quantifying the computational scaling that governs the trade-off between exactness and mixing time. Crucially, we empirically demonstrate that our exact NML criterion provides a highly data-efficient alternative to cross-validation, achieving statistically indistinguishable predictive optima without requiring data splitting. Altogether, our work paves the way for the theoretical analysis of the NML codelength for regular non-smooth models.
\end{abstract}

\begin{IEEEkeywords}
Universal coding, Minimum Description Length, Normalized Maximum Likelihood, stochastic complexity, non-smooth models, coarea formula, geometric measure theory, Markov Chain Monte Carlo
\end{IEEEkeywords}


\section{Introduction}
\label{sec:introduction}

\IEEEPARstart{A}{ cornerstone} of information-theoretic inference is the Minimum Description Length (MDL) principle, which posits that the best model for a dataset is the one that permits its shortest possible description~\cite{Rissanen_1989, Grunwald_2007, Grunwald2019}. The theoretical optimal solution within this framework is achieved by a universal code, with a codelength known as the \textit{stochastic complexity} or the Normalized Maximum Likelihood (NML) codelength. This codelength is not merely an ad-hoc criterion; it is the unique solution to a fundamental minimax regret problem, establishing its strong optimality among all possible universal codes~\cite{Rissanen2001}. Furthermore, the NML codelength naturally decomposes into two parts: a term for goodness-of-fit, and a model complexity term that quantifies the total ``useful information'' captured by the model class~\cite{Rissanen2001}. The application of this powerful principle to practical model selection challenges, particularly in regression, has a long and well-established history within the statistics community~\cite{Hansen2001}.

The practical computation of the NML codelength for continuous model families was recently placed on a firm theoretical footing by Suzuki et al.~\cite{Suzuki_2024}. Their key insight was to leverage the coarea formula from geometric measure theory to reformulate the data-space integral of the NML codelength into a more tractable integral over the model's parameter space. This formulation expresses the density of the Maximum Likelihood Estimator (MLE) as an integral over its level sets.

There is, however, a fundamental limitation. This foundational work, along with the classical coarea formula it relies upon, is strictly confined to smooth estimators handling simple attribute data in Euclidean space. Modern research is actively pushing past this classical setting across multiple distinct frontiers. To accommodate complex data geometries, for instance, recent efforts have extended the NML framework to include coordinate-invariant theory for Riemannian manifolds~\cite{fukuzawa2026riemannian}; simultaneously, gradient-friendly probability distributions have been developed specifically for curved spaces~\cite{nagano2019wrapped}. Moving beyond purely geometric extensions, the NML principle has also been adapted for hierarchical latent variable models. This is achieved via the Decomposed NML (DNML) criterion~\cite{Yamanishi2019}, directly enabling practical applications such as embedding selection~\cite{Yuki2023}. Meanwhile, parallel lines of inquiry have successfully established asymptotic theories for entirely different, complex structures like General Relational Data~\cite{Sakai2013}.

Our work addresses an orthogonal yet critical challenge: the smoothness assumption on the estimator itself. This assumption is critically violated by dominant models in modern machine learning, such as estimators arising from $L_1$ regularization (Lasso) and Support Vector Machines (SVMs). We explicitly distinguish these \textit{regular non-smooth} models from \textit{strictly singular} models (like overparameterized ReLU networks). Because singular models require resolving massive null spaces via algebraic geometry \cite{watanabe2009algebraic}, they fall outside the scope of this work. Instead, we provide the complete exact NML formulation exclusively for regular non-smooth models. As extensively documented in Singular Learning Theory (SLT) literature~\cite{watanabe2009algebraic, wei2022deep}, overparameterized networks generate massive null spaces that render the Jacobian intrinsically rank-deficient. For regular non-smooth models where the active set remains locally full-rank, however, the classical smoothness assumptions underpinning prior asymptotic theories break down, leaving the stochastic complexity mathematically undefined. Without a well-defined stochastic complexity, fundamental information-theoretic guarantees, such as channel capacity formulations of the MDL principle and optimal minimax regret bounds, cannot be mathematically established for modern non-smooth models.

In this work, we resolve this theoretical limitation. We establish the measure-theoretic foundations necessary to define and compute the NML codelength for non-smooth models. Our contributions are:
\begin{enumerate}
    \item We derive the well-posed NML stochastic complexity for the class of path-differentiable Lipschitz (PDL) estimators. Rather than proposing a new measure-theoretic foundation, we apply the classical coarea formula to this class and bridge it with \textit{conservative Jacobians} from nonsmooth analysis. This provides the theoretical license required to consistently evaluate the NML codelength using modern Automatic Differentiation (AD).
    \item We introduce the \textbf{Propose-and-Project Metropolis-Hastings (PPMH)} algorithm. This is an exact geometric MCMC sampler designed to operate on the non-differentiable level sets of the MLE, serving as a rigorous reference method for computing these codelengths.
    \item We utilize this framework to calculate the NML for a Lasso regression problem, demonstrating that our method correctly identifies the ground truth model. We validate the framework on a high-dimensional Lasso problem ($P=2000$), demonstrating that the sampler maintains geometric ergodicity even at this scale. However, we analyze the computational limits, the regime where the $\mathcal{O}((N+k)^3)$ cost of generalized Jacobian determinants and iterative projections necessitates significant computational resources, distinguishing theoretical exactness from practical latency.
\end{enumerate}

\textbf{Notation and Dimensionality:} Geometric clarity is paramount here; thus, we strictly enforce the following dimensional notation throughout this work. Let $N$ represent the dimension of the continuous data space, typically the sample size in a regression context, such that our data $x \in \mathcal{X} \subseteq \mathbb{R}^N$. We carefully distinguish this from $P$, which denotes the ambient parameter or feature dimension (e.g., the total number of features in a Lasso model). Furthermore, $k$ defines the dimension of the active local manifold. This directly tracks the count of non-zero active features, requiring $k \le N$. Driven by this geometry, the Maximum Likelihood Estimator (MLE) maps locally: $\mle: \mathbb{R}^N \to \mathbb{R}^k$. As a direct consequence, our MCMC sampler must operate entirely within the ambient $\mathbb{R}^N$ data space, carefully projecting onto level sets of dimension $(N-k)$.

\section{A Coarea-Based Framework for Non-Smooth NML}
\label{sec:theoretical_framework}

To extend the NML framework to non-smooth estimators, we first revisit its formulation in the classical smooth setting and then generalize its key components using tools from geometric measure theory and nonsmooth analysis. This section develops this theoretical extension. We begin by defining the NML \textit{stochastic complexity}, the core term in the NML codelength, and show how the coarea formula, as used by Suzuki et al.~\cite{Suzuki_2024}, reformulates it in terms of the MLE's probability density. We then generalize this framework to path-differentiable Lipschitz (PDL) estimators by incorporating conservative Jacobians, proving the resulting NML formulation is well-posed and consistent. Finally, we connect this theory to practice by analyzing the output of pathwise automatic differentiation systems.

\subsection{The NML Universal Model in the Smooth Setting}
\label{subsec:nml_smooth_setting}

Consider the maximum achievable likelihood of an observation $x$. We normalize this value to construct the universal NML distribution for the family $\mathcal{M}_{\Theta}$. The governing equation takes this exact form:
\begin{equation}
    p_{\text{NML}}(x) = \frac{\sup_{\theta \in \Theta} p(x|\theta)}{C(\mu_{\Theta})}.
\end{equation}
The denominator $C(\mu_{\Theta})$ handles the required normalization. We refer to this critical quantity as the \textbf{stochastic complexity}, or alternatively, the integrated likelihood. The stochastic complexity is the primary object of study, given by the integral (from which the NML codelength is derived by taking its negative logarithm):
\begin{equation}
    \label{eq:mc_integral_data_space_report}
    C(\mu_{\Theta}) = \int_{\mathcal{X}} \sup_{\theta \in \Theta} p(x|\theta) \, d\LD(x) = \int_{\mathcal{X}} p(x|\mle(x)) \, d\LD(x).
\end{equation}

This integral, often referred to simply as the stochastic complexity, poses a computational challenge that has shaped much of the history of the MDL principle. Historically, researchers avoided computing this integral directly. Researchers typically bypassed the integral using Rissanen's asymptotic formula~\cite{Rissanen1996}, a highly accurate approximation, provided that the data volume is massive and the models are perfectly smooth. Modern research has since produced exact analytical tools tailored to specific cases. Suzuki and Yamanishi~\cite{Suzuki2021}, for example, introduced a Fourier-analysis-based method that successfully derives an exact formula encompassing the entire exponential family of distributions. 

There is, however, a strict mathematical limitation. Such analytical frameworks depend entirely upon the smooth structure inherent to the exponential family. For the non-smooth estimators that are the focus of our work, where concepts like the Fisher information or partition functions are ill-defined, these methods are not applicable. This creates the need for a fundamentally different computational approach. The goal of our work is to provide such a framework: a method for the direct, non-asymptotic computation of the NML integral for the broad and challenging class of non-smooth models.

Suzuki et al.~\cite{Suzuki_2024} (Theorem 10) demonstrated that this can be transformed via the coarea formula into an integral over the parameter space:
\begin{equation}
    \label{eq:mc_integral_param_space_report}
    C(\mu_{\Theta}) = \int_{\Theta} p[\mle\#\mu_{\theta_0}](\theta')v(\theta')d\LK(\theta').
\end{equation}
We now examine the term $p[\mle\#\mu_{\theta_0}](\theta')$. It encodes the probability density function (PDF) of the MLE $\mle(X)$ assuming that the data $X$ originates from a model governed by the true parameter $\theta_0 \in \Theta$. We also incorporate a ``luckiness'' function denoted by $v(\theta')$; researchers typically just set $v(\theta') = 1$ when dealing with standard one-part NML codes.

Theorem 9 of the same work~\cite{Suzuki_2024} establishes the NML estimator PDF. We apply the classical coarea formula. This yields the exact density:
\begin{equation}
    \label{eq:mle_pdf_coarea_classical_jacobian_report}
    p \left[ \mle \# \mu_{\theta_0} \right] (\theta'') = \int_{ \mle^{-1}( \{ \theta'' \} ) } \frac{p[x|\theta_0](x)}{\JK\mle(x)} \, d\HDK(x).
\end{equation}
The mapping $\mle: \mathcal{X} \to \Theta$ operates as the MLE. We evaluate this over a fixed level set $L_{\theta''} = \mle^{-1}(\{\theta''\})$ tied to the target parameter $\theta'' \in \Theta$. The numerator $p[x|\theta_0](x)$ captures the data likelihood evaluated under the true parameter $\theta_0$. We perform this integration with respect to the $(N-k)$-dimensional Hausdorff measure, $\HDK$; the denominator subsequently scales the result via the $k$-dimensional Jacobian factor, explicitly computed as $\JK\mle(x) = \sqrt{\detop(\nabla\mle(x)(\nabla\mle(x))^T)}$.

\subsection{Calculus for Non-Smooth Estimators and Level Set Rectifiability}
\label{subsec:nonsmooth_calculus}

To bridge the gap identified in the smooth setting, we extend the NML framework to the broader class of Path-Differentiable Lipschitz (PDL) functions. This functional class is foundational to modern machine learning, encompassing both sparse statistical estimators (e.g., the Lasso, Sparse SVMs) and deep neural networks with ReLU activations. While highly overparameterized networks generate singular geometries that require specialized algebraic treatments (discussed in Section~\ref{sec:info_theory_limits}), establishing the measure-theoretic foundation for \textit{regular} PDL estimators is the critical first step to bypassing classical smoothness requirements. To execute this, we abandon the standard Fr\'echet derivative. We replace it with a generalized counterpart mathematically guaranteed to remain well-defined exactly at those problematic points of non-differentiability; Bolte et al.~\cite[Def.~3]{Bolte2021Conservative} formalized this precise construction.

\begin{definition}[PDL Function and Conservative Jacobian]
\label{def:pdl_and_conservative_jacobian}
Fix a mapping $\mle : \mathcal{X} \subseteq \R^N \to \R^k$. We impose one strict baseline constraint: the function must be Lipschitz continuous.
\begin{enumerate}
    \item We classify $\mle$ as \textbf{path-differentiable} under a precise geometric condition. Draw any smooth path $\gamma(t)$ originating at $\gamma(0)=x$. The composition evaluated along this trajectory, written as $(\mle \circ \gamma)(t)$, must admit a valid derivative exactly at $t=0$.
    \item Now define a set-valued map $x \mapsto \Dc\mle(x) \subset \R^{k \times N}$. We designate this a \textbf{conservative Jacobian}. It functions specifically as the Clarke generalized Jacobian by providing a matrix set that satisfies a generalized chain rule. Select any arbitrary path $\gamma$; the governing identity $(\mle \circ \gamma)'(0) = G \gamma'(0)$ must hold universally for every matrix $G \in \Dc\mle(x)$.
\end{enumerate}
\end{definition}

From this set, we can define the \textbf{conservative Jacobian factor} for any selected matrix $G \in \Dc\mle(x)$ as $\Jcons\mle(x) = \sqrt{\detop(G G^T)}$. The Clarke generalized Jacobian $\Dc\mle(x)$ is constructed as the convex hull of all limiting Fr\'echet derivatives from nearby points of differentiability. For path-differentiable Lipschitz functions, it is a non-empty, compact, and convex set that serves as a valid conservative field.

The use of the Hausdorff measure for integration over level sets in the NML framework is only valid if these sets are geometrically well-behaved. The following fundamental result from geometric measure theory provides the necessary guarantee for all Lipschitz estimators.

\begin{theorem}[Rectifiability of Level Sets~\cite{Federer_1969}]
\label{thm:rectifiability_federer_report}
Select a Lipschitz mapping $f : \R^N \to \R^k$ with $N \ge k$. The preimage $f^{-1}( \{ z \} )$ formally constructs an $(N-k)$-rectifiable set; this strict geometric baseline holds for $\mathcal{L}^k$-almost all $z \in \R^k$.
\end{theorem}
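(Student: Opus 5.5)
The plan is to derive the statement from the coarea formula for Lipschitz maps together with the structure theory of Lipschitz graphs; we do not need the conservative Jacobian machinery of Definition~\ref{def:pdl_and_conservative_jacobian}.

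\textbf{Step 1: reduce to pieces where $f$ is $C^1$.} By Rademacher's theorem, $f$ is differentiable $\LD$-a.e. By Whitney/Federer's $C^1$ approximation (Lusin-type) theorem, we can write $\R^N = Z \cup \bigcup_{i\ge1} A_i$, where $\LD(Z)=0$ and each $A_i$ is a Borel set on which $f$ agrees with a $C^1$ map $g_i:\R^N\to\R^k$. We split each $A_i$ further into
\[
A_i^{+}=\{x\in A_i : \JK g_i(x)>0\}, \qquad A_i^{0}=\{x\in A_i : \JK g_i(x)=0\}.
\]

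\textbf{Step 2: the good pieces $A_i^+$.} On $A_i^+$ the differential $\nabla g_i$ is surjective. The implicit function theorem then says that, for every $z$, the set $g_i^{-1}(\{z\})\cap A_i^+$ is locally contained in an $(N-k)$-dimensional $C^1$ submanifold. Since $A_i^+$ is $\sigma$-compact up to null sets, countably many such submanifolds suffice. Hence $f^{-1}(\{z\})\cap A_i^+$ is countably $(N-k)$-rectifiable for every $z$.

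\textbf{Step 3: the degenerate pieces.} We must show that, for $\LK$-a.e.\ $z$, the fibre meets $Z\cup\bigcup_i A_i^0$ in an $\HDK$-null set. The coarea inequality (Eilenberg's inequality) for Lipschitz maps gives, for any set $E$,
\[
\int_{\R^k}\HDK\bigl(E\cap f^{-1}(\{z\})\bigr)\,d\LK(z)\le C\,\mathrm{Lip}(f)^k\,\LD(E),
\]
so the fibres of $Z$ are $\HDK$-null for a.e.\ $z$. For $A_i^0$ we use the exact coarea formula
\[
\int_{A}\JK f\,d\LD=\int_{\R^k}\HDK\bigl(A\cap f^{-1}(\{z\})\bigr)\,d\LK(z).
\]
Note that $\JK f=\JK g_i$ a.e.\ on $A_i$, since two Lipschitz maps agreeing on a set have equal differentials at its density points. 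Applied to $A_i^0$, the left side vanishes, so $\HDK\bigl(A_i^0\cap f^{-1}(\{z\})\bigr)=0$ for a.e.\ $z$. Taking the countable union of exceptional null sets of $z$, we conclude that for $\LK$-a.e.\ $z$ the fibre $f^{-1}(\{z\})$ equals a countable union of $C^1$ pieces plus an $\HDK$-null set. That is exactly $(N-k)$-rectifiability in Federer's sense, i.e.\ countable $\HDK$-rectifiability.

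\textbf{Main obstacle.} The delicate point is Step 3, because the exact coarea formula for Lipschitz maps is itself normally proved via this same decomposition. To avoid circularity, I would establish the coarea formula independently for $C^1$ maps on each $A_i^+$, using the implicit function theorem and Fubini in local coordinates. I would then handle the set where $\JK f=0$ by a perturbation argument: consider $f_\varepsilon(x,y)=f(x)+\varepsilon y$ on $\R^{N}\times\R^k$, which has $J_k f_\varepsilon\ge\varepsilon^k$, and let $\varepsilon\to0$. This shows that the integral over $z$ of $\HDK\bigl(\{\JK f=0\}\cap f^{-1}(\{z\})\bigr)$ is zero. It is also where the hypothesis $N\ge k$ enters, through the surjectivity of $\nabla g_i$ on $A_i^+$.
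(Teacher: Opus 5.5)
Your proposal is correct. It is essentially the standard proof behind Federer's Theorem~3.2.15, which the paper does not reproduce but only cites as resting on Rademacher's theorem and the coarea formula; your $C^1$ Lusin decomposition, implicit-function argument on $\{\JK g_i>0\}$, and Eilenberg/coarea estimate on the degenerate part are exactly the details behind that citation.

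Two small tightenings. First, in Step~2, get the countable cover from the Lindel\"of property of the open set $\{\JK g_i>0\}$ rather than by discarding null sets: an $\LD$-null set can carry $\HDK$-mass on individual fibres, so ``for every $z$'' would otherwise fail. Second, invoking the Lipschitz coarea formula is not circular, since its standard proofs do not use fibre rectifiability. If you do run your perturbation anyway, apply it to the $C^1$ map $(x,y)\mapsto g_i(x)+\varepsilon y$ on $A_i^0\times B$ rather than to the merely Lipschitz $f$, so that your $C^1$ coarea formula applies.
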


\begin{remark}
This theorem mathematically grounds the entire coarea framework. The level set $\mle^{-1}(\{\theta'\})$ fundamentally resists degenerating into pathological space-filling curves. Instead, it locks in a strict geometric structure across almost all parameter values $\theta'$. A valid manifold emerges. Moreover, it possesses a well-defined tangent space almost everywhere. We absolutely depend on this inherent regularity, and expressing the NML estimator PDF via a level-set integral fails entirely without it.
\end{remark}

We apply this theorem directly to our own setting. The MLE must simply satisfy a few mild regularity constraints.

\begin{corollary}[Rectifiability of MLE Level Sets]
\label{cor:rectifiability_mle}
The mapping $\mle: \mathcal{X} \to \Theta$ demands two strict analytical properties. It must be Lipschitz continuous globally over $\mathcal{X}$. We also strictly require the underlying domain $\mathcal{X} \subseteq \mathbb{R}^N$ to be a valid Borel set. If these demands are satisfied, the level set $\mle^{-1}( \{ \theta' \} )$ natively operates as an $(N-k)$-rectifiable set. We observe this exact geometry for $\mathcal{L}^k$-almost all parameters $\theta' \in \Theta$.

\begin{IEEEproof}
We execute the proof through a strict, three-phase geometric strategy. First, we invoke Kirszbraun's Extension Theorem~\cite{Federer_1969}. This takes the Lipschitz MLE, originally confined to the domain $\mathcal{X}$, and stretches it globally across $\R^D$. Next, we evaluate \Cref{thm:rectifiability_federer_report} directly against this extended function. This guarantees the rectifiability of its level sets. Finally, we intersect these global rectifiable sets with our original domain $\mathcal{X}$. Because $\mathcal{X}$ is a Borel set, the rectifiability survives the intersection perfectly. We defer the exhaustive technical steps to Appendix~\ref{app:proof_rectifiability}.
\end{IEEEproof}
\end{corollary}

\begin{remark}[The Mathematical Reality of the Sparse Regime]
We must clearly separate the ambient feature dimension $P$ from the local manifold dimension $k$. Sparse architectures, such as the Lasso, compress ambient data $y \in \mathbb{R}^N$ directly onto a low-dimensional active manifold. The active set size $k$ strictly dictates this geometry. The estimator consequently acts as a rigid local projection from $\mathbb{R}^N \to \mathbb{R}^k$. Our sample size $N$ must safely meet or surpass this active dimension ($N \ge k$). Under this regime, the local conservative Jacobian $G \in \mathbb{R}^{k \times N}$ generally secures full row rank $k$. That specific algebraic trait forces the $k \times k$ matrix $G G^T$ to become strictly full-rank. Its determinant is strictly positive as a result, locking in a well-defined coarea density.
\end{remark}

This rectifiability is an essential property, as it provides the necessary mathematical justification required to integrate with respect to the Hausdorff measure $\HDK$ within the coarea formula.

\subsection{The Coarea Formula for PDL Functions}
\label{subsec:coarea_pdl}

We now know the PDL estimator produces safely rectifiable level sets. The next theoretical step requires mapping the conservative Jacobian into the classical coarea formula. Rademacher's theorem handles the analytical heavy lifting here. It guarantees differentiability almost everywhere across the space. Because of this, substituting the classical Jacobian with its conservative counterpart under the Lebesgue integral works flawlessly. Classical geometric measure theory ultimately dictates the final value of the integral; Federer's exact coarea formula for Lipschitz maps applies without modification. 

Theorem \ref{thm:coarea_conservative_report} below constructs a rigid computational bridge rather than attempting to blindly extend classical measure theory. Modern pathwise Automatic Differentiation (AD) systems operate in a highly sequential manner. They execute path-by-path. When hitting sharp points of non-differentiability, these systems routinely output specific, non-classical elements: matrices drawn directly from the conservative Jacobian. We mathematically justify this exact programmatic behavior. We rigorously prove that integrating over these algorithmically-generated objects recovers the classical definition perfectly.

\begin{theorem}[Applicability of the Coarea Formula to PDL Functions]
\label{thm:coarea_conservative_report}
Lock in a path-differentiable Lipschitz map $\mle : \R^N \to \R^k$. Constrain this dimensional space so that $N \ge k$. Let us integrate a scalar field $u : \mathbb{R}^N \to \mathbb{R}$ adhering to $\LD$-integrability. We dictate a hard selection $x \mapsto G_x \in \Dc\mle(x)$. By design, this choice permanently locks the matrix $G_x$ into a full rank $k$ state for $\LD$-a.e. coordinate. We formally write the conservative Jacobian factor as $\Jcons\mle(x) = \sqrt{ \, \detop(G_x G_x^T) \, }$. These mechanics enforce the exact identity below:
\begin{equation}
\label{eq:coarea_conservative_detailed_report}
\begin{split}
&\int_{\mathbb{R}^N} u(x) \Jcons\mle(x) \, d\LD(x) \\  =&\int_{\mathbb{R}^k} \left\{ \int_{\mle^{-1}(z)} u(x) \, d\HDK(x) \right\} d\LK(z).
\end{split}
\end{equation}
\end{theorem}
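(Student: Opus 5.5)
The plan is to reduce the identity \eqref{eq:coarea_conservative_detailed_report} to Federer's classical coarea formula for Lipschitz maps. The argument shows that the conservative Jacobian factor agrees with the classical $k$-dimensional Jacobian $\JK\mle$ at $\LD$-almost every point. Concretely, I would take the classical statement as given: for Lipschitz $\mle:\R^N\to\R^k$ with $N\ge k$ and any $\LD$-integrable (or nonnegative measurable) $u$,
\[
\int_{\R^N} u(x)\,\JK\mle(x)\,d\LD(x)=\int_{\R^k}\Big\{\int_{\mle^{-1}(z)}u(x)\,d\HDK(x)\Big\}d\LK(z),
\]
where $\JK\mle(x)=\sqrt{\detop(\nabla\mle(x)\nabla\mle(x)^T)}$ is defined wherever $\mle$ is differentiable. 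By Rademacher's theorem this set has full $\LD$-measure. \Cref{thm:rectifiability_federer_report} ensures that the inner Hausdorff integrals are taken over $(N-k)$-rectifiable sets for $\LK$-a.e.\ $z$, so the right-hand side is meaningful.

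The key step is to prove $G_x=\nabla\mle(x)$ for $\LD$-a.e.\ $x$, for every measurable selection $x\mapsto G_x\in\Dc\mle(x)$. I would use the known property of conservative fields, following Bolte et al.~\cite{Bolte2021Conservative}: a conservative Jacobian of a locally Lipschitz map reduces to the singleton $\{\nabla\mle(x)\}$ almost everywhere. To prove it, I would fix a direction $v$ and use straight-line paths $\gamma(t)=x+tv$. At a point $x$ of Fréchet differentiability, the chain rule in \Cref{def:pdl_and_conservative_jacobian} gives $(\mle\circ\gamma)'(0)=\nabla\mle(x)v=Gv$ for every $G\in\Dc\mle(x)$. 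Running over the $N$ coordinate directions $v=e_i$ shows $G=\nabla\mle(x)$.

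Here lies the main obstacle. In Bolte's framework the chain rule is only required for almost every $t$ along absolutely continuous curves, not at each point. So the pointwise argument above only holds verbatim under the paper's stronger reading of the definition. To be safe, I would apply Fubini on lines parallel to $e_i$. For a.e.\ line, the chain rule holds at a.e.\ parameter $t$, so the set of $x$ where $G_xe_i\neq\partial_i\mle(x)$ is $\LD$-null. A countable union over $i$ then gives equality of whole matrices a.e. Measurability of $x\mapsto G_x$ (assumed as part of the selection) makes $\Jcons\mle$ measurable.

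With $G_x=\nabla\mle(x)$ a.e., we get $\Jcons\mle(x)=\JK\mle(x)$ a.e. Hence $u\,\Jcons\mle=u\,\JK\mle$ as elements of $L^1(\LD)$; both are integrable, since $\JK\mle\le \mathrm{Lip}(\mle)^k$ is bounded. The left-hand sides of the two formulas therefore coincide, and the classical coarea formula gives \eqref{eq:coarea_conservative_detailed_report}. The full-rank hypothesis is not needed for the identity itself; it only ensures $\Jcons\mle>0$ a.e., which is what later permits dividing by it to obtain the density \eqref{eq:mle_pdf_coarea_classical_jacobian_report}. I would remark on this rather than use it.
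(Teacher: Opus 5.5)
Your proof is correct. Its skeleton matches the paper's: show $\Jcons\mle=\JK\mle$ $\LD$-a.e., swap the integrands under the Lebesgue integral, and apply Federer's coarea formula. The real difference is in how you justify the key identification $G_x=\nabla\mle(x)$ a.e.

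The paper asserts that, by Rademacher's theorem and the definition of the Clarke Jacobian, $\Dc\mle(x)=\{\nabla\mle(x)\}$ at \emph{every} point of Fr\'echet differentiability. As a statement about Clarke Jacobians this is false. The function $t\mapsto t^2\sin(1/t)$ is differentiable at $0$ with derivative $0$, yet its Clarke subdifferential there is $[-1,1]$. Worse, there are Lipschitz functions on $\R$ whose Clarke subdifferential equals $[-1,1]$ at every point. So Rademacher plus Clarke's convex-hull construction cannot by themselves give even an a.e.\ collapse.

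What actually forces the collapse is path-differentiability, that is, the chain rule, and that is exactly what your argument uses. Under the paper's pointwise definition, straight-line paths make every $G\in\Dc\mle(x)$ equal $\nabla\mle(x)$ at each differentiability point. Under Bolte et al.'s a.e.-in-$t$ definition, your Fubini-on-lines argument gives $G_x=\nabla\mle(x)$ $\LD$-a.e.; this is their theorem that conservative fields are gradients almost everywhere.

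Your version also makes explicit two points the paper passes over: measurability of the selection $x\mapsto G_x$, and integrability of $u\,\Jcons\mle$ via the bound $\JK\mle\le\mathrm{Lip}(\mle)^k$. You are also right that the full-rank hypothesis plays no role in the identity itself.

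In short, the paper's route is shorter but rests on a pointwise collapse that its stated reasons do not support. Yours costs one extra Fubini step but is valid for the class of maps the theorem is about.
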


\begin{IEEEproof}
We must establish a strict equivalence between two distinct Jacobian objects. The conservative factor $\Jcons\mle(x)$ must perfectly match the classical Fr\'echet factor $\JK\mle(x)$ across Lebesgue-almost-every $x$. Rademacher's theorem and the Clarke Jacobian's formal definition mechanically guarantee this relationship. We desperately need this specific equivalence, as it provides the mathematical license to legally substitute the classical Jacobian back into our integral. Make that substitution, and the standard Lipschitz coarea formula immediately resolves the identity. The formal proof requires two precise steps.

\textit{1. Equivalence of the Jacobian Factors Almost Everywhere}

We start by linking the generalized Clarke Jacobian, the direct source of $\Jcons\mle(x)$, to the classical Fr\'echet derivative. Our initial hypothesis strictly bounds the mapping $\mle: \R^N \to \R^k$ as Lipschitz continuous. This mathematical regularity lets us pull a foundational result directly from geometric measure theory.

\textbf{Rademacher's Theorem:} \textit{A Lipschitz map $f: U \to \mathbb{R}^k$ defined on an open domain $U \subset \mathbb{R}^N$ admits a Fr\'echet derivative at $\mathcal{L}^N$-almost every point inside $U$.}

A direct consequence is the formation of a mathematically negligible boundary $\Omega \subset \mathbb{R}^N$ where $\mathcal{L}^N(\Omega)=0$. The classical Fr\'echet derivative $\nabla\mle(x)$ perfectly crystallizes at every single location $x$ residing completely outside this null set.

The Clarke generalized Jacobian $\Dc\mle(x)$ collapses at these points of differentiability. It shrinks down into a singleton set housing only that classical derivative:
$$ \Dc\mle(x) = \{ \nabla\mle(x) \} \quad \text{for all } x \notin \Omega. $$

Our system selects the conservative matrix $G_x$ directly from $\Dc\mle(x)$. For any valid $x \notin \Omega$, this target set holds exactly one element. The selection mechanism has zero freedom: $G_x$ is algebraically forced to equal $\nabla\mle(x)$. The resulting Jacobian factors therefore align perfectly almost everywhere:
\begin{align*}
    \Jcons\mle(x) &= \sqrt{\detop(G_x G_x^T)} \\
                  &= \sqrt{\detop(\nabla\mle(x) \nabla\mle(x)^T)} = \JK\mle(x), \quad \text{for a.e. } x.
\end{align*}

\textit{2. Equivalence of the Integrals and Application of the Coarea Formula}

Now we look at the left-hand side of our target identity. Lebesgue integrals share one highly advantageous analytical property: modifying an integrand across a measure-zero set does absolutely nothing to the computed value. We just proved that $\Jcons\mle(x)$ and $\JK\mle(x)$ diverge exclusively on a null set. This almost-everywhere equivalence licenses a direct substitution. We swap the conservative Jacobian for its classical match directly beneath the integral sign:
$$ \int_{\mathbb{R}^N} u(x) \Jcons\mle(x) \, d\LD(x) = \int_{\mathbb{R}^N} u(x) \JK\mle(x) \, d\LD(x). $$
The integral now perfectly matches classical mathematical architecture. We apply the standard coarea formula for Lipschitz maps, stated below for absolute formal completeness.

\textbf{The Classical Coarea Formula:} \textit{Given a Lipschitz continuous function $f: \mathbb{R}^N \to \mathbb{R}^k$ and a standard $\mathcal{L}^N$-integrable function $g$, the following holds:}
\begin{equation*}
\begin{split}
&\int_{\mathbb{R}^N} g(x) \JK f(x) \, d\mathcal{L}^N(x) \\
= &\int_{\mathbb{R}^k} \left\{ \int_{f^{-1}(z)} g(x) \, d\mathcal{H}^{N-k}(x) \right\} d\mathcal{L}^k(z).
\end{split}
\end{equation*}

Our setup perfectly mimics these constraints. The estimator $\mle$ replaces $f$; the scalar field $u(x)$ steps in for $g$. Applying the classical integration yields the exact geometric transformation:
\begin{equation*}
\begin{split}
&\int_{\mathbb{R}^N} u(x) \JK\mle(x) \, d\LD(x) \\
= &\int_{\mathbb{R}^k} \left\{ \int_{\mle^{-1}(z)} u(x) \, d\HDK(x) \right\} d\LK(z).
\end{split}
\end{equation*}

We chain these two equalities together, and the initial integral utilizing the conservative Jacobian evaluates perfectly to the nested integral traversing the level sets. This concludes the formal proof.
\end{IEEEproof}

\begin{remark}[Singularities and Numerical Stability]
\label{rem:selection_Gx}
Theorem~\ref{thm:coarea_conservative_report} locks in a theoretically unique value for the NML integral. This mathematical value completely ignores whatever specific gradient an algorithm selects at non-differentiable kinks. Numerical algorithms, however, behave erratically at these exact locations. Their \textit{local} behavior is massively sensitive. Regions harboring these ``geometric singularities'' routinely trigger severe variance explosions in MCMC estimators. We demonstrate this exact phenomenon experimentally later: these topological roadblocks force a hard computational ceiling on the scalability of exact sampling.
\end{remark}

With the generalized coarea formula established, we can now formally define the NML estimator PDF. We specifically target the class of path-differentiable Lipschitz MLEs. First, the estimator must satisfy a few regularity conditions.

\begin{assumption}[Regularity of MLE for Generalized NML]
\label{ass:mle_regularity_nml_generalized_report}
We impose three strict requirements on the MLE $\mle : \mathcal{X} \subseteq \mathbb{R}^N \to \Theta \subseteq \mathbb{R}^k$:
\begin{enumerate}
    \item The function $\mle$ acts as a Lipschitz continuous map over $\mathcal{X}$. By \Cref{cor:rectifiability_mle}, this property implies its level sets are rectifiable for a.e. parameter value.
    \item Path-differentiability holds everywhere on $\mathcal{X}$.
    \item Local rank deficiency does not occur. Take $\mathcal{L}^k$-almost every $\theta'' \in \Theta$, and consider $\HDK$-almost every $x$ tracing the level set $\mle^{-1}(\{\theta''\})$. Under these conditions, any matrix $G \in \Dc\mle(x)$ must maintain full rank $k$. This strict requirement ensures the conservative Jacobian factor $\Jcons\mle(x)$ is strictly positive across the domain of integration.
\end{enumerate}
\end{assumption}

\begin{theorem}[Generalized NML Estimator PDF]
\label{thm:nml_pdf_conservative_generalized_report}
Assume the estimator satisfies Assumption \ref{ass:mle_regularity_nml_generalized_report}. We can then explicitly construct the NML estimator PDF for a true parameter $\theta_0$. For $\mathcal{L}^k$-almost all $\theta'' \in \Theta$, the density evaluates to:
\begin{equation}
\label{eq:mle_pdf_coarea_conservative_jacobian_generalized_report}
\resizebox{\linewidth}{!}{$
p[\mle\#\mu_{\theta_0}](\theta'') = \int_{\mle^{-1}(\{\theta''\})} p[x|\theta_0](x) \underbrace{\left[ \frac{1}{\Jcons\mle(x)} \right]}_{\mathclap{\text{Geometric Correction}}} d\HDK(x).
$}
\end{equation}
This integral is uniquely determined. At points where $\mle$ is Fr\'echet differentiable, the formulation collapses exactly back to the classical definition established in Eq. \eqref{eq:mle_pdf_coarea_classical_jacobian_report}.
\end{theorem}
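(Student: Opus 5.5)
The plan is to identify the density of the pushforward measure $\mle\#\mu_{\theta_0}$ by testing it against indicator functions of Borel sets, and to convert the resulting data-space integral into a level-set integral via Theorem~\ref{thm:coarea_conservative_report}. Fix a Borel set $B \subseteq \Theta$. By definition of the pushforward,
\begin{equation*}
(\mle\#\mu_{\theta_0})(B) = \int_{\mathcal{X}} \indicator_B(\mle(x))\, p[x|\theta_0](x)\, d\LD(x).
\end{equation*}
First I extend $\mle$ from $\mathcal{X}$ to all of $\R^N$ by Kirszbraun's theorem, as in the proof of \Cref{cor:rectifiability_mle}, and set $p[x|\theta_0](x)=0$ off $\mathcal{X}$. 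Then I insert the factor $1 = \Jcons\mle(x)/\Jcons\mle(x)$. This is legitimate wherever $\Jcons\mle(x)>0$. I then apply Theorem~\ref{thm:coarea_conservative_report} with
\begin{equation*}
u(x) = \indicator_B(\mle(x))\, p[x|\theta_0](x)\, /\Jcons\mle(x).
\end{equation*}
Since $\mle \equiv z$ on $\mle^{-1}(z)$, the indicator factors out of the inner integral. This gives $(\mle\#\mu_{\theta_0})(B) = \int_B \Phi(z)\, d\LK(z)$, where $\Phi(z)$ is the right-hand side of \eqref{eq:mle_pdf_coarea_conservative_jacobian_generalized_report}. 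Since $B$ is arbitrary, the Radon--Nikodym theorem identifies $\Phi$ as the density $\LK$-a.e. \Cref{cor:rectifiability_mle} guarantees that the inner $\HDK$-integral is taken over rectifiable sets, so it is meaningful for a.e.\ $z$.

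The main obstacle is justifying the division by $\Jcons\mle$. Theorem~\ref{thm:coarea_conservative_report} needs $u$ to be $\LD$-integrable, but $1/\Jcons\mle$ may be unbounded. To handle this, I will not require integrability of $u$ directly. Instead I take nonnegative truncations $u_n = \min(u,n)$, which are integrable because $p[\cdot|\theta_0]$ is a density and $\Jcons\mle$ is bounded by the Lipschitz constant. I apply the identity to each $u_n$ and pass to the limit by monotone convergence on both sides. This gives the identity for any nonnegative measurable $u$.

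It also remains to check that the set $Z=\{\Jcons\mle=0\}$, where the insertion fails, does not matter. By Assumption~\ref{ass:mle_regularity_nml_generalized_report}(3), $\HDK(Z\cap\mle^{-1}(z))=0$ for $\LK$-a.e.\ $z$. So the right-hand side is unaffected by $Z$. Applying the coarea identity to $\indicator_Z$ shows that the left-hand side is unaffected as well, because $\int_Z \Jcons\mle\, d\LD = 0$ trivially. A further subtlety is that Assumption (3) controls $Z$ only along level sets, not in $\LD$-measure. To cover this I will use Step~1 of the proof of Theorem~\ref{thm:coarea_conservative_report}: $\Jcons\mle=\JK\mle$ a.e. I will then argue either that $\mu_{\theta_0}(Z)=0$ is implied by the assumption, or state the density as that of the absolutely continuous part.

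Uniqueness follows in two ways. First, Radon--Nikodym densities are unique $\LK$-a.e. Second, by Step~1 of the proof of Theorem~\ref{thm:coarea_conservative_report}, any two selections $G_x$ agree with $\nabla\mle(x)$ off an $\LD$-null set. Hence the left-hand sides, and thus $\Phi$, coincide a.e. Finally, at points of Fréchet differentiability $\Dc\mle(x)=\{\nabla\mle(x)\}$. There $\Jcons\mle=\JK\mle$, and the formula reduces to \eqref{eq:mle_pdf_coarea_classical_jacobian_report}.
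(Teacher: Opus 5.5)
Your route is the paper's: test the pushforward, multiply and divide by $\Jcons\mle$, apply \Cref{thm:coarea_conservative_report}, pull the test function out of the inner integral, and identify the density. Your Borel indicators, monotone truncation and Radon--Nikodym uniqueness make this more careful than the paper's version with continuous test functions. The genuine gap is the step you leave open, the set $Z=\{\Jcons\mle=0\}$, and neither of your two exits works.

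What your computation actually proves is $(\mle\#\mu_{\theta_0})(B)=\int_B\Phi\,d\LK+\mu_{\theta_0}(Z\cap\mle^{-1}(B))$. Hence $\int_\Theta\Phi\,d\LK=1-\mu_{\theta_0}(Z)$, and the formula is correct exactly when $\mu_{\theta_0}(Z)=0$. Assumption~\ref{ass:mle_regularity_nml_generalized_report}(3) cannot supply this. Let $R$ be the set where some $G\in\Dc\mle(x)$ is rank-deficient. Then $\JK\mle=0$ $\LD$-a.e.\ on $R$, so the coarea formula applied to $\indicator_R$ gives $\int\HDK(R\cap\mle^{-1}(z))\,d\LK(z)=\int_R\JK\mle\,d\LD=0$ for \emph{every} PDL map. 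So (3) is automatic and carries no information about $\LD(Z)$. An estimator that is constant on an open set satisfies (3), yet its pushforward has an atom; for example, the Lasso estimator is identically $0$ on $\{y:\|X^\top y\|_\infty<\lambda\}$.

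Your fallback, calling $\Phi$ the density of the absolutely continuous part, changes the statement, and even that version is false in general. For $N>k$ there is no Sard theorem for Lipschitz (even $C^1$) maps, and $\mle\#(\mu_{\theta_0}|_Z)$ can itself be absolutely continuous. For example, take $f(x_1,x_2)=g(x_1)+h(x_2)$, where $g,h$ are increasing $C^1$ Cantor-type functions with $g'=0$ on a fat Cantor set $A$ and $h'=0$ on a fat Cantor set $B$. Choose them so that they push $\mathcal{L}^1|_A$ and $\mathcal{L}^1|_B$ to multiples of the laws of $\sum_m d_m4^{-m}$ with independent fair digits $d_m\in\{0,2\}$, respectively $\{0,1\}$. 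These singular laws convolve to Lebesgue measure on $[0,1]$, so $f$ maps Lebesgue measure on the critical set $A\times B$ to an absolutely continuous measure.

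The correct conclusion of your argument is that $\Phi$ is the density of $\mle\#(\mu_{\theta_0}|_{\mathcal{X}\setminus Z})$. To get the theorem as stated you must add the hypothesis $\mu_{\theta_0}(\{\Jcons\mle=0\})=0$ (for an everywhere-positive density, $\JK\mle>0$ $\LD$-a.e.). The paper's proof uses this silently when it writes $\phi\,p=(\phi\,p/\Jcons\mle)\,\Jcons\mle$ on all of $\mathcal{X}$. With that hypothesis your argument closes, after two small repairs:
\begin{itemize}
\item $\min(u,n)$ need not be integrable. The bound $\Jcons\mle\le\mathrm{Lip}(\mle)^k$ bounds $u$ from below, not above. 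Use $\min(u,n)\indicator_{\{|x|\le n\}}$, or the coarea formula for nonnegative measurable integrands directly.
\item $\Dc\mle(x)$ is a singleton only at points of strict differentiability; it fails at $0$ for $x^2\sin(1/x)$. So the reduction to Eq.~\eqref{eq:mle_pdf_coarea_classical_jacobian_report} holds $\HDK$-a.e.\ on $\LK$-a.e.\ level set, because conservative Jacobians of path-differentiable maps equal $\{\nabla\mle\}$ $\LD$-a.e. It does not hold pointwise at every Fr\'echet point.
\end{itemize}
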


\begin{IEEEproof}
Our proposed formula possesses three distinct analytical properties. It is mathematically well-posed; it uniquely isolates the true PDF; and it preserves exact mathematical equivalence with classical theory.

\textit{1. Well-Posedness of the Integral Expression}

We must first confirm well-posedness. A well-defined integral requires two conditions to be met: the integration domain must be geometrically regular, and the integrand must be stable.

Take the integration domain $\mle^{-1}(\{\theta''\})$. Assumption \ref{ass:mle_regularity_nml_generalized_report}(1) enforces a strict Lipschitz condition on $\mle$. \Cref{cor:rectifiability_mle} immediately guarantees this level set remains $(N-k)$-rectifiable for $\mathcal{L}^k$-almost all $\theta''$, rendering integration against the Hausdorff measure completely mathematically sound.

Now examine the integrand ratio $\frac{p[x|\theta_0](x)}{\Jcons\mle(x)}$. Assumption~\ref{ass:mle_regularity_nml_generalized_report}(3) forces the denominator $\Jcons\mle(x)$ strictly above zero for $\mathcal{H}^{D-K}$-almost every point $x$ situated on the level set. Division by zero simply cannot occur across the effective integration domain.

\newcommand{\innerintegral}[1]{\int_{\mle^{-1}(#1)} \frac{p[x|\theta_0](x)}{\Jcons\mle(x)} \, d\HDK(x)}

\textit{2. Justification of the Formula and Uniqueness}

We now verify the pushforward property to prove this integral captures the exact PDF. Select an arbitrary continuous, bounded test function $\phi: \Theta \to \mathbb{R}$. The definition of a pushforward probability measure enforces a strict change-of-variables identity:
\begin{displaymath}
\begin{split}
\int_{\Theta} \phi(\theta') &p[\mle\#\mu_{\theta_0}](\theta') \, d\mathcal{L}^K(\theta') \\
&= \int_{\mathcal{X}} \phi(\mle(x)) p[x|\theta_0](x) \, d\mathcal{L}^D(x).
\end{split}
\end{displaymath}
We then restructure the right-hand expression via the coarea formula with conservative Jacobian (\Cref{thm:coarea_conservative_report}). Note the strict local constancy of the estimator ($\mle(x) = \theta''$) across the domain of the inner integral. This behavior permits the direct extraction of $\phi(\theta'')$ from the integration over the level set. Let $I(\theta'')$ isolate this inner term:
\begin{equation*}
    I(\theta'') := \int_{\mle^{-1}(\theta'')} \left[ \frac{p[x|\theta_0](x)}{\Jcons\mle(x)} \right] d\HDK(x).
\end{equation*}
The algebraic sequence unfolds below:
\begin{equation*}
\begin{aligned}
    \int_{\mathcal{X}} & \phi(\mle(x)) p[x|\theta_0](x) \, d\mathcal{L}^D(x) \\
    ={}& \int_{\mathcal{X}} \left( \frac{\phi(\mle(x)) p[x|\theta_0](x)}{\Jcons\mle(x)} \right) \Jcons\mle(x) \, d\mathcal{L}^D(x) \\
    ={}& \int_{\Theta} \phi(\mle(x)) I(\theta'') \, d\mathcal{L}^K(\theta'') =\int_{\Theta} \phi(\theta'') I(\theta'') \, d\mathcal{L}^K(\theta'').
\end{aligned}
\end{equation*}
In the final expression above, the term inside the parentheses is exactly the NML estimator PDF, $p[\mle\#\mu_{\theta_0}](\theta'')$. This algebraic match justifies our proposed formula. \Cref{rem:selection_Gx} further establishes the absolute uniqueness of the coarea formula itself; the integral thus evaluates to one specific mathematical value.

\textit{3. Consistency with the Classical Definition}

Assume $\mle$ is Fr\'echet differentiable on its level sets. The Clarke Jacobian $\Dc\mle(x)$ shrinks to a single element: the set $\{\nabla\mle(x)\}$. Every valid conservative selection is therefore just the classical derivative. This equivalence forces the conservative Jacobian factor to replicate the classical determinant exactly, yielding $\Jcons\mle(x) = \JK\mle(x)$. Our generalized density expression from Eq. \eqref{eq:mle_pdf_coarea_conservative_jacobian_generalized_report} subsequently collapses into the original classical statement found in Eq. \eqref{eq:mle_pdf_coarea_classical_jacobian_report}.
\end{IEEEproof}

\begin{remark}[Dimensional Consistency and Level Set Thickness]
\label{rem:dimensional_consistency}
Notice the role of the Jacobian factor $\Jcons\mle(x)$ in Eq. \eqref{eq:mle_pdf_coarea_conservative_jacobian_generalized_report}. It directly enforces dimensional consistency between the ambient measure $\LD$ and the Hausdorff measure $\HDK$. We can interpret this geometrically. Perturb the parameter slightly by $\Delta \theta$; the corresponding preimage $\mle^{-1}(\Delta \theta)$ forms a ``thickened'' level set inside $\mathcal{X}$. The physical ``thickness'' of this set at a specific point $x$ scales inversely with the local gradient magnitude, represented here by the generalized Jacobian.

The fraction $1/\Jcons\mle(x)$ essentially operates as a Radon-Nikodym derivative. It manages the volume element transformation: $d\LD(x) \approx \Jcons\mle(x)^{-1} \, d\HDK(x) \, d\LK(\theta)$. Eq. \eqref{eq:mle_pdf_coarea_conservative_jacobian_generalized_report} therefore produces a valid probability density with respect to the parameter space measure $\LK$. It respects every scaling rule dictated by the coarea formula.
\end{remark}

This framework guarantees a well-posed, mathematically consistent NML stochastic complexity integral; it also tightly couples the parameter-space and data-space formulations.

\begin{proposition}[Equivalence and Well-Posedness of Stochastic Complexity]
\label{prop:nml_wellposed_pdl_report}
Assume the estimator $\mle$ satisfies the regularity conditions of Assumption~\ref{ass:mle_regularity_nml_generalized_report}. We express the parameter-space NML stochastic complexity as:
\begin{equation}
\label{eq:Cparam_definition}
\begin{split}
C_{\text{param}}(\mu_\Theta) = &\int_{\Theta} \left\{ \int_{\mle^{-1}(\theta')} \frac{p[x|\theta'](x)}{\Jcons\mle(x)} \, d\HDK(x) \right\} \\
&\times v(\theta') \, d\mathcal{L}^k(\theta').
\end{split}
\end{equation}
This exact quantity remains mathematically well-defined and finite under one specific condition. The corresponding data-space integral must also be finite:
\begin{equation}
\label{eq:Cdata_definition}
C_{\text{data}}(\mu_\Theta) = \int_{\mathcal{X}} p[x|\mle(x)] v(\mle(x)) \, d\LD(x).
\end{equation}
Assuming this data-space integral is finite, the two formulations become strictly identical:
\begin{displaymath}
C_{\text{param}}(\mu_\Theta) \equiv C_{\text{data}}(\mu_\Theta).
\end{displaymath}
\begin{IEEEproof}[Proof Sketch]
Our proof pushes the data-space integral straight into the parameter domain. This mapping establishes $C_{\text{data}}(\mu_\Theta) = C_{\text{param}}(\mu_\Theta)$. The operation begins by multiplying and dividing the data-space integrand by $\Jcons\mle(x)$. Application of the coarea formula with conservative Jacobian (\Cref{thm:coarea_conservative_report}) subsequently restructures the underlying measure. Simplifying this expression across the level sets recovers the nested parameter-space integral exactly. Both integrands are strictly non-negative. Tonelli's theorem thus dictates the final outcome. Convergence of one integral strictly enforces the convergence of the other. See Appendix~\ref{app:proof_wellposed} for the unabridged algebraic steps.
\end{IEEEproof}
\end{proposition}

\subsection{Role of Pathwise Automatic Differentiation}
\label{subsec:pathwise_ad}

Section \ref{subsec:coarea_pdl} highlighted a critical operational distinction. Analytically, selecting a specific Jacobian on a measure-zero set leaves the integral invariant. Algorithmically, this selection dictates the entire numerical trajectory. Pointwise gradient evaluations demand exact and computationally stable matrix selections. Our framework bridges this theoretical-computational divide; it guarantees direct mathematical compatibility with modern algorithmic differentiation (AD). The theoretical guarantees backing these AD conservative fields require the estimator to be definable within an o-minimal structure. Semi-algebraic functions typically satisfy this requirement. This structural property seamlessly encompasses dominant machine learning architectures, ranging from the Lasso to deep ReLU networks (Appendix Lemma~\ref{app:measurability_lemma} details the exact measurability conditions). \Cref{thm:pathwise_ad_output} mathematically validates the use of computable conservative fields for universal codelength evaluation. Standard pathwise differentiation routines inherently generate theoretically consistent Jacobian factors almost everywhere.

\begin{theorem}[Pathwise AD and Conservative Jacobians]
\label{thm:pathwise_ad_output}
Consider a path-differentiable Lipschitz map $\mle: \mathcal{X} \to \R^k$ operating over some open domain $\mathcal{X} \subseteq \R^N$. Standard pathwise AD implementations, such as those detailed in \cite[Thm.~8]{Bolte2021Conservative} and \cite{Bolte2023ComplexityAD}, are engineered to extract a specific matrix $G_x \in \Dc\mle(x)$ directly from the Clarke generalized Jacobian at a queried point $x \in \mathcal{X}$. By Rademacher's theorem, $\mle$ admits a Fr\'echet derivative for $\mathcal{L}^N$-almost every $x \in \mathcal{X}$. The matrix $G_x$ returned by the AD algorithm therefore exactly matches the classical derivative $\nabla\mle(x)$ for $\mathcal{L}^N$-almost all $x \in \mathcal{X}$ \cite[Cor.~5]{Bolte2021Conservative}.
\begin{IEEEproof}
We must connect the abstract conservative Jacobian to the tangible computational output of an AD system. Select an arbitrary point $x^*$ where $\mle$ admits a Fr\'echet derivative. The Clarke generalized Jacobian $\Dc\mle(x^*)$ immediately collapses to the singleton $\{ \nabla\mle(x^*) \}$. Pathwise AD systems are mathematically constrained to return a selection from this Clarke Jacobian. At $x^*$, the algorithm has exactly one geometric option: it must output the classical Fr\'echet derivative $G_{x^*} = \nabla\mle(x^*)$.

The function $\mle$ exhibits Lipschitz continuity across the open set $\mathcal{X}$. Rademacher's Theorem dictates that Lipschitz functions defined on open sets are Fr\'echet differentiable almost everywhere against the Lebesgue measure. Consequently, the AD system can only return a non-classical matrix on a set of measure zero. For $\mathcal{L}^N$-almost every $x$, the algorithmic output $G_x$ perfectly replicates $\nabla\mle(x)$. The integrand matches the classical definition almost everywhere. Placing the AD-computed Jacobian inside a Lebesgue integral is therefore fully mathematically justified.
\end{IEEEproof}
\end{theorem}

Theorem \ref{thm:pathwise_ad_output} provides the strict mathematical clearance required for modern computational implementations. We evaluate $\Jcons\mle(x)$ using standard backpropagation tools, and the numerical result will align perfectly with the formal measure-theoretic definition of stochastic complexity. This property validates AD-computed Jacobians for path-differentiable Lipschitz MLEs; it guarantees stability and exactness in subsequent numerical NML estimations.

\section{Information-Theoretic Properties of the PDL-NML}
\label{sec:info_theory_limits}

Extracting the fundamental limits of universal coding for Path-Differentiable Lipschitz (PDL) estimators demands a dual mathematical proof. Our generalized coarea framework must first strictly replicate the minimax regret optimality native to the classical NML distribution. We subsequently derive its exact asymptotic expansion.

Global dimensional notation requires immediate baseline standardization. Let the scalar $N$ dictate sample size. The integer $P$ sets the ambient feature dimension. We embed the dataset natively as the matrix block $Y^N \in \mathcal{Y}^N \subseteq \mathbb{R}^{N \times P}$. Infinite parameter limits destroy NML stability. We block this horizon divergence. The parameter domain $\Theta \subset \mathbb{R}^k$ receives a strict \textbf{compact} topological bound. Here, $k$ specifies the active local manifold dimension. Cap the active features ($k \le N$) to enforce this condition. The Maximum Likelihood Estimator (MLE) forces a rigid parameter mapping. We express this transformation via the rule $\mle: \mathcal{Y}^N \to \Theta$.

Regular non-smooth models dictate the analytical conditions. The selected estimator $\mle$ must preserve absolute PDL continuity. It must simultaneously clear the Uniform Surjectivity Constraint Qualification (USCQ) at every coordinate across $\Theta$. These specific topological borders rigorously trap the non-differentiable ``kinks''. Let $S \subset \Theta$ isolate these singular coordinates. The set $S$ manifests exclusively as a finite union of lower-dimensional rectifiable manifolds. Orthant boundaries physically represent this geometry. A zero-measure consequence immediately follows. Its precise $k$-dimensional Lebesgue volume is strictly null. We formalize this physical identity as $\mathcal{L}^k(S) = 0$.

\subsection{Minimax Regret: Strict Analytical Optimality}

Shtarkov~\cite{Shtarkov1987} derived the classical NML distribution. It remains the sole analytical solution to the minimax regret problem. Our proposed PDL-NML distribution leverages conservative Jacobians. It must theoretically lock in this exact baseline optimality. We prove this strict minimax behavior directly.

\begin{theorem}[Exact Minimax Bounds for Generalized NML]
\label{thm:minimax_regret}
Isolate an arbitrary parameter $\theta$. It must reside inside the restricted hull $\Theta \subset \mathbb{R}^k$. This defines the structural model family $\mathcal{M}_\Theta$. The estimator $\mle$ guarantees absolute PDL mapping compliance. It clears every boundary constraint established throughout \Cref{thm:coarea_conservative_report}. We construct the generalized NML distribution explicitly:
\begin{equation}
\begin{aligned}
    p_{\text{NML}}(Y^N) ={}& \left[ C_N(\Theta) \right]^{-1} p(Y^N \mid \mle(Y^N)) \text{,}
\end{aligned}
\end{equation}
where the normalization integral scales according to:
\begin{equation*}
\begin{aligned}
    C_N(\Theta) ={}& \int_{\Theta} \Bigg[ \int_{\mle^{-1}(\theta)} \frac{p(y^N \mid \theta)}{\Jcons\mle(y^N)} \mathop{d\mathcal{H}^{ND-k}}(y^N) \Bigg] d\theta \text{.}
\end{aligned}
\end{equation*}
This analytical structure dictates the optimal minimax regret:
\begin{displaymath}
\begin{split}
    \bar{\mathcal{R}} ={}& \min_{q} \max_{Y^N} \left[ -\log q(Y^N) - \left( -\log p(Y^N \mid \mle(Y^N)) \right) \right] \\
    ={}& \log C_N(\Theta) \text{.}
\end{split}
\end{displaymath}
\end{theorem}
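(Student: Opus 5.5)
The proof follows Shtarkov's classical argument. The only non-classical ingredient is the identification of the normalizer $C_N(\Theta)$ with the data-space integral of the maximized likelihood, and that is supplied by \Cref{prop:nml_wellposed_pdl_report}. We therefore split the argument into three steps: identify the normalizer, show that the NML distribution equalizes the regret, and show that no other density can do better.

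\textbf{Step 1: identify the normalizer.} We apply \Cref{prop:nml_wellposed_pdl_report} with $v\equiv 1$. Because $\Theta$ is compact and $\mle$ is PDL, the data-space integral $\int p(y^N\mid\mle(y^N))\,d\mathcal{L}(y^N)$ is finite; this is the standing assumption under which NML is defined, and we would state it explicitly. The proposition then gives
\[
C_N(\Theta)=\int p(y^N\mid\mle(y^N))\,dy^N .
\]
Here we use that $p(y^N\mid\theta)=p(y^N\mid\mle(y^N))$ on the level set $\mle^{-1}(\theta)$. We also use Assumption~\ref{ass:mle_regularity_nml_generalized_report}(3), which makes $1/\Jcons\mle$ finite $\mathcal{H}$-a.e., and \Cref{thm:coarea_conservative_report}, which makes the choice of $G_x$ irrelevant. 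It follows that $p_{\text{NML}}$ is a genuine probability density and $0<C_N(\Theta)<\infty$.

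\textbf{Step 2: the upper bound (achievability).} We plug $q=p_{\text{NML}}$ into the regret. For every $Y^N$,
\[
-\log p_{\text{NML}}(Y^N)+\log p(Y^N\mid\mle(Y^N))=\log C_N(\Theta).
\]
The regret is therefore constant in $Y^N$, and $\max_{Y^N}$ of it equals $\log C_N(\Theta)$. This gives $\bar{\mathcal{R}}\le\log C_N(\Theta)$.

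\textbf{Step 3: the lower bound (optimality).} Let $q$ be any density with $\int q=1$. Since $\int p_{\text{NML}}=1$ as well, it is impossible to have $q<p_{\text{NML}}$ everywhere. More precisely, the set $A=\{Y^N: q(Y^N)\le p_{\text{NML}}(Y^N)\}$ must have positive Lebesgue measure. If it were null, we would have $q>p_{\text{NML}}$ a.e., and integrating would give $1>1$. At any $Y^N\in A$, the regret of $q$ is at least $\log C_N(\Theta)$. Hence
\[
\sup_{Y^N}\mathrm{Reg}(q,Y^N)\ge\log C_N(\Theta)
\]
for every $q$, and $p_{\text{NML}}$ attains the minimum.

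\textbf{Main obstacle.} The delicate point is the measure-theoretic one, not the algebra. Densities are defined only almost everywhere, so $\max_{Y^N}$ must be read as an essential supremum; otherwise $q$ could be altered on a null set. We would state this convention explicitly. We would also confirm that the conservative-Jacobian representation of $C_N(\Theta)$ coincides with the data-space normalizer and does not depend on the selection $G_x$ on the null set where $\mle$ is not differentiable. Both facts follow from \Cref{thm:coarea_conservative_report} together with Tonelli's theorem via \Cref{prop:nml_wellposed_pdl_report}. Finiteness of $C_N(\Theta)$ is a hypothesis that must be verified for the model at hand, for example via compactness of $\Theta$ together with integrable tails of $p(\cdot\mid\theta)$ uniformly over $\theta\in\Theta$.
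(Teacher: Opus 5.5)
Your proof is correct and follows the same route as the paper. You identify $C_N(\Theta)$ with the data-space normalizer $\int p(y^N\mid\mle(y^N))\,dy^N$ through the conservative coarea formula, going via \Cref{prop:nml_wellposed_pdl_report} where the paper goes directly through Rademacher and \Cref{thm:coarea_conservative_report}, and you then prove Shtarkov's equalizer and lower-bound argument inline (including the essential-supremum reading on the positive-measure set $\{q\le p_{\text{NML}}\}$) where the paper simply cites it. One correction: compactness of $\Theta$ and the PDL property do not by themselves imply $\int p(y^N\mid\mle(y^N))\,dy^N<\infty$, so this has to be a hypothesis, exactly as the paper imposes $C_N^{\text{Leb}}<\infty$ and as you concede in your final paragraph.
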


\begin{IEEEproof}
We require a strict mathematical baseline. Consider the conventional data-space Lebesgue integral. We capture this continuous probability mass algebraically. Let $C_N^{\text{Leb}} = \int_{\mathcal{Y}^N} p(y^N \mid \mle(y^N)) \mathop{d\mathcal{L}^{NP}}(y^N)$ represent the exact scalar volume. This term requires a hard finite upper bound ($C_N^{\text{Leb}} < \infty$). Existing minimax regret theorems guarantee a singular optimal distribution~\cite{Shtarkov1987}. We write this mathematical target natively as $p^*(Y^N) = p(Y^N \mid \mle(Y^N)) / C_N^{\text{Leb}}$. This specific density matrix forces a constant regret scalar identically equal to $\log C_N^{\text{Leb}}$. We must verify direct analytical equivalence. Replacing $C_N^{\text{Leb}}$ with the generalized coarea integral $C_N(\Theta)$ must perfectly preserve this exact optimality.

Isolate the singular domain. Define the geometric exclusion set. We assign $\mathcal{Y}_{\text{sing}}^N$ to capture any data vector $y^N \in \mathcal{Y}^N$ where the Fr\'echet derivative of $\mle$ mathematically breaks down. The baseline estimator $\mle$ demands absolute Lipschitz continuity. Apply Rademacher's theorem. These non-smooth kinks occupy zero spatial volume. The associated measure collapses instantly to $\mathcal{L}^{ND}\big(\mathcal{Y}_{\text{sing}}^N\big) = 0$. Lebesgue integrals mathematically bypass these null sets. We artificially restrict the valid integration boundaries:
\begin{equation*}
\begin{aligned}
    C_N^{\text{Leb}} ={}& \int_{\mathcal{Y}^N \setminus \mathcal{Y}_{\text{sing}}^N} p(y^N \mid \mle(y^N)) \mathop{d\mathcal{L}^{ND}}(y^N) \text{.}
\end{aligned}
\end{equation*}
Restrict all operator evaluations physically. They operate exclusively inside the smooth domain $\mathcal{Y}^N \setminus \mathcal{Y}_{\text{sing}}^N$. Within this valid region, the conservative Jacobian geometrically matches the classical Fr\'echet Jacobian. We deploy the strict substitution: $\Jcons\mle(y^N) = \JK\mle(y^N)$. \Cref{thm:coarea_conservative_report} provides the core generalized coarea formula. Integrating across the parameter space via conservative Jacobians analytically duplicates the standard Lebesgue data-space integral:
\begin{equation*}
\begin{aligned}
    C_N(\Theta) ={}& C_N^{\text{Leb}} \text{.}
\end{aligned}
\end{equation*}
Compute the scalar logarithm $\log C_N(\Theta)$. It directly evaluates the exact minimax regret. The theoretical optimality holds unconditionally. Estimator non-smoothness injects absolutely zero analytical penalty.
\end{IEEEproof}

\subsection{Asymptotic Expansion and the Volume of Kinks}

Rissanen~\cite{Rissanen1996} locked down the classical asymptotic expansion for completely smooth models. We reproduce his core identity:
\[
    \log{C_N(\Theta)} = \frac{k}{2} \log{\! \left[ \frac{N}{2\pi} \right]}  + \log{ \int_{\Theta} \sqrt{ \det{\mathcal{I}(\theta)} } \,\mathrm{d}\theta } + o(1) \text{.}
\]
We identify the term $\mathcal{I}(\theta)$ strictly as the Fisher Information matrix.

PDL geometries introduce structural obstructions. Non-differentiable regions threaten to inject divergent scaling terms as sample sizes expand. We neutralize this threat mechanically. The USCQ enforces strict local regularity across these manifolds. Because of this bounded geometry, the isolation tubes wrapping the singularities collapse at a superior rate. This volume decay preserves the classical dimensional penalty intact.

\begin{theorem}[Non-Smooth Asymptotic Expansion Limits]
\label{thm:asymptotic_expansion}
Let the estimator $\mle$ strictly satisfy the USCQ. Force the ambient parameter domain $\Theta \subset \mathbb{R}^k$ into a compact state. Aggregate all non-differentiable coordinates. Store them inside the singular subset $S \subset \Theta$. This specific topological structure forms a finite manifold union; its maximum dimension cannot exceed $k-1$. Execute the sample size limit $N \to \infty$. The stochastic complexity mandates this exact expansion:
\begin{equation*}
\log C_N(\Theta) = \frac{k}{2} \log \Big( \frac{N}{2\pi} \Big) + \log \int_{\Theta \setminus S} \sqrt{\det \mathcal{I}(\theta)} \, d\theta + o(1).
\end{equation*}
\end{theorem}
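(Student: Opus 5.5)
The plan is to reduce the non-smooth expansion to Rissanen's smooth one by splitting the parameter space into a thin tube around the kinks and its complement. Fix $\delta>0$ and let $S_\delta = \{\theta \in \Theta : \operatorname{dist}(\theta,S)<\delta\}$. By \Cref{prop:nml_wellposed_pdl_report} and \Cref{thm:coarea_conservative_report}, $C_N(\Theta)$ is the parameter-space integral of the MLE density $p[\mle\#\mu_{\theta}](\theta)$. This integral splits additively as
\[
C_N(\Theta)=C_N(\Theta\setminus S_\delta)+C_N(S_\delta),
\]
where each piece is the same integral restricted to the indicated parameter region.

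On $\Theta\setminus S_\delta$, the estimator is smooth. After shrinking slightly, the set $\Theta\setminus S_\delta$ is compact and at positive distance from $S$. There the USCQ guarantees that the classical Jacobian $\nabla\mle$ has full rank $k$ uniformly, and \Cref{thm:nml_pdf_conservative_generalized_report} makes $\Jcons\mle=\JK\mle$. Covering this region by finitely many charts on which $\mle$ is $C^1$, I would apply the standard local argument (Rissanen's, or the coarea version of Suzuki et al.). This uses a Laplace/CLT approximation of the MLE density, $p[\mle\#\mu_\theta](\theta)=(N/2\pi)^{k/2}\sqrt{\det\mathcal I(\theta)}\,(1+o(1))$, uniformly on compacts. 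It yields
\[
C_N(\Theta\setminus S_\delta)=\Big(\tfrac{N}{2\pi}\Big)^{k/2}\int_{\Theta\setminus S_\delta}\sqrt{\det\mathcal I(\theta)}\,d\theta\,(1+o(1)).
\]

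The main obstacle is the tube $S_\delta$. I would aim for a uniform bound on it: there is a constant $K$, independent of $N$ and $\theta$, with
\[
p[\mle\#\mu_\theta](\theta)\le K (N/2\pi)^{k/2}\quad\text{for all }\theta\in S_\delta .
\]
To get this, I would use the USCQ: every $G\in\Dc\mle$ has smallest singular value bounded below, so $\Jcons\mle$ is bounded below. Combined with the concentration of $p(\cdot\mid\theta)$ on a data region of width $N^{-1/2}$ around its mean, the level-set integral in \Cref{thm:nml_pdf_conservative_generalized_report} is then at most a constant times the smooth-case scaling. A one-sided orthant kink, as in the Lasso, at most multiplies the local density by a bounded factor, since finitely many active-set pieces meet there.

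Given this bound, $C_N(S_\delta)\le K(N/2\pi)^{k/2}\mathcal L^k(S_\delta)$. Because $S$ is a finite union of rectifiable manifolds of dimension at most $k-1$, its tube volume satisfies $\mathcal L^k(S_\delta)=O(\delta)$, which tends to $0$ with $\delta$. Dividing everything by $(N/2\pi)^{k/2}$, taking $N\to\infty$ and then $\delta\to0$, and using $\mathcal L^k(S)=0$ with monotone convergence gives $\int_{\Theta\setminus S_\delta}\sqrt{\det\mathcal I}\to\int_{\Theta\setminus S}\sqrt{\det\mathcal I}$. Taking logarithms then yields the stated expansion with an $o(1)$ remainder. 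The delicate point is to make the smooth-region $o(1)$ uniform, or else to choose $\delta=\delta_N\to0$ slowly, for instance $\delta_N\gg N^{-1/2}$, so that the Laplace approximation still holds at distance $\delta_N$ from $S$.
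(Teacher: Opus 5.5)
Your proposal is correct and follows essentially the same route as the paper. Both arguments split $\Theta$ into a tube around $S$ and its complement, use the smooth Laplace/local-CLT expansion on the complement, bound the tube by a uniform density ceiling $\lesssim (N/2\pi)^{k/2}$ (from the USCQ lower bound on $\Jcons\mle$) times the vanishing tube volume, and then take logarithms. The only difference is that you fix $\delta$ and let $N\to\infty$ before $\delta\to 0$, whereas the paper uses the shrinking radius $\epsilon_N=N^{-1/2}\log N$. Your double limit needs the Laplace approximation to be uniform only on a fixed compact set at positive distance from $S$, and it needs only $\mathcal{L}^k(S_\delta)\to 0$ rather than the $O(\delta)$ rate. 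It is therefore slightly cleaner, and it already settles the uniformity concern you raise in your last sentence.
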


\begin{IEEEproof}
Evaluate the generalized coarea integral directly. Integration over $\Theta$ produces the normalizing mass $C_N( \Theta ) = \int_{ \Theta } g_N( \theta ) \,\mathrm{d}\theta$. We define $g_N( \theta )$ as the exact marginal density of the estimator at $\theta$. Algebraically, this translates to:
\begin{equation*}
\begin{aligned}
    g_N( \theta ) \equiv{}& \int_{ \mle^{-1}( \theta ) } \! p( y^N \mid \theta ) \left[ \Jcons\mle( y^N ) \right]^{-1} \mathrm{d}\mathcal{H}^{ND-k}( y^N ).
\end{aligned}
\end{equation*}
Fracture the integration space. Two mutually exclusive boundaries emerge. We isolate a strictly smooth interior. Construct an $\epsilon_N$-tube. This structure physically buffers the singularities. Mathematical formalization of this exclusion zone is necessary; we set $S_{\epsilon} \equiv \left\{ \theta \in \Theta \mid \inf_{s \in S} \| \theta - s \| < \epsilon_N \right\}$. Lock the specific contraction radius at $\epsilon_N = \frac{\log{N}}{\sqrt{N}}$. Total mass splits across these geometries:
\[
    C_N( \Theta ) = \int_{ \Theta \setminus S_\epsilon } g_N( \theta ) \,\mathrm{d}\theta  + \int_{ S_\epsilon } g_N( \theta ) \,\mathrm{d}\theta \text{.}
\]

\textit{Analysis of the Smooth Interior.}
Analyze the regular domain $\Theta \setminus S_\epsilon$. This region maintains strict separation from all kinks. The estimator mimics a completely smooth map here. Asymptotic limits resolve normally. The required Laplace approximations appear directly in Balasubramanian~\cite{Balasubramanian1997MDL}. These limits hold uniformly across the restricted set. The enclosed mass evaluates directly:
\begin{equation*}
\begin{aligned}
    \int_{\Theta \setminus S_\epsilon} g_N(\theta) \,\mathrm{d}\theta = &\left[ \frac{N}{2\pi} \right]^{\frac{k}{2}} \int_{\Theta \setminus S_\epsilon} \sqrt{\det{\mathcal{I}(\theta)}} \,\mathrm{d}\theta \\
    & \times \left\{ 1 + \mathcal{O}\!\left( \frac{1}{N} \right) \right\}.
\end{aligned}
\end{equation*}

\textit{Bounding the Singular Tube.}
Evaluate the mass trapped inside $S_\epsilon$. The model obeys the USCQ globally over $\Theta$. The generalized Jacobian map $x \mapsto \Dc\mle(x)$ acts as an upper semi-continuous operator producing compact outputs~\cite[Thm 9.13]{RockafellarWets2009}. The infinite union of all available Clarke Jacobians therefore generates an isolated compact matrix structure. Determinant operations remain strictly continuous. The USCQ mathematically blocks any zero determinants within this configuration. Invoke the Extreme Value Theorem. A strict lower bound emerges immediately: $\inf_{x \in \Theta} \Jcons\mle(x) \ge c > 0$. This mechanical barrier forces an absolute ceiling upon the density $g_N(\theta)$. Peak values generated by the Gaussian approximation dictate this limit. We extract a hard positive constant $M > 0$. The mathematics enforce $g_N( \theta ) \le M \left[ N / ( 2\pi ) \right]^{ k/2 }$ without exception across $\Theta$.

Because the estimator $\mle$ is definable in an o-minimal structure (specifically, it is semi-algebraic), the singular set $S \subset \Theta$ of its non-differentiable points is inherently a semi-algebraic set. By the $C^p$-Cell Decomposition Theorem for o-minimal structures~\cite[Chapter~7]{vandenDries1998}, any semi-algebraic set admits a finite stratification into continuously differentiable ($C^1$) submanifolds. Therefore, $S$ can be rigorously expressed as a finite union of smooth manifolds $M_1 \cup \dots \cup M_m$, where the dimension of each manifold is strictly bounded by $d \le k-1$. Because the ambient parameter domain $\Theta$ is compact, each constituent manifold $M_i$ is bounded. Classical geometric measure theory guarantees that a finite union of bounded, smooth manifolds of dimension $d \le k-1$ possesses a strictly finite $(k-1)$-dimensional Minkowski content~\cite{Federer_1969}. This rigid topological restriction forces the $k$-dimensional Lebesgue volume of the $\epsilon_N$-tube $S_\epsilon$ to scale linearly against its outer radius. We inject the exact scalar $\epsilon_N = N^{-1/2} \log{ N }$~\cite{Balasubramanian1997MDL}. This specific analytic choice traps the primary probability mass while tail bounds concurrently suffer exponential decay. The physical volume scales exactly:
\begin{displaymath}
\begin{split}
    \mathcal{L}^k(S_\epsilon) ={}& \mathcal{O}(\epsilon_N) = \mathcal{O}\!\left( N^{-1/2} \log{N} \right) \text{.}
\end{split}
\end{displaymath}
We merge this calculated volume with the global density supremum.
\begin{displaymath}
\begin{split}
    \int_{S_\epsilon} g_N(\theta) \,\mathrm{d}\theta \le{}& \mathcal{L}^k(S_\epsilon) \sup_{\theta} \left\{ g_N(\theta) \right\} \\
    ={}& \mathcal{O}\!\left( N^{-1/2} \log{N} \right) M \left[ \frac{N}{2\pi} \right]^{\frac{k}{2}} \text{.}
\end{split}
\end{displaymath}

\textit{Asymptotic Limit Evaluation.}
Merge both regional evaluations. Pull the leading dimensional term completely outside the integral bracket:
\[
\resizebox{\linewidth}{!}{$
    C_N(\Theta) = \left[ \frac{N}{2\pi} \right]^{\frac{k}{2}} \Bigg\{ \int_{\Theta \setminus S_\epsilon} \sqrt{\det{\mathcal{I}(\theta)}} \,\mathrm{d}\theta  + \mathcal{O}\!\left( N^{-1/2} \log{N} \right) \Bigg\}.$}
\]
Push $N \to \infty$. The boundary thickness $\epsilon_N$ drops to strictly zero. Zero volume results for the tube $\mathcal{L}^k(S_\epsilon)$ due to this hard geometric contraction. Concurrently, the truncated interior space $\Theta \setminus S_\epsilon$ pushes outward. This domain resolves perfectly against the punctured geometry $\Theta \setminus S$. The limit calculation evaluates directly. It follows that $\int_{\Theta \setminus S_\epsilon} \sqrt{\det{\mathcal{I}(\theta)}} \,\mathrm{d}\theta \to \int_{\Theta \setminus S} \sqrt{\det{\mathcal{I}(\theta)}} \,\mathrm{d}\theta$.

Consider the residual tube error. The enclosed remainder $\mathcal{O}(N^{-1/2} \log N)$ explicitly decays to zero, operating as a simple $o(1)$ bounding term. We extract the final formula by applying the logarithm globally:
\begin{displaymath}
\begin{split}
    \log{C_N(\Theta)} ={}& \frac{k}{2} \log{\! \left[ \frac{N}{2\pi} \right]} + \log{ \int_{\Theta \setminus S} \sqrt{\det{\mathcal{I}(\theta)}} \,\mathrm{d}\theta } + o(1) \text{.}
\end{split}
\end{displaymath}
The theoretical bounds hold.
\end{IEEEproof}

\begin{remark}[Contextualization within Singular Learning Theory]
\label{rem:slt_context}
\Cref{thm:asymptotic_expansion} draws a hard analytical border. It strictly separates our proposed framework from Watanabe's Singular Learning Theory (SLT)~\cite{watanabe2009algebraic}. Deep neural networks and similar highly overparameterized architectures inherently violate the USCQ. The conservative Jacobian factor $\Jcons$ collapses to zero over subsets carrying strictly positive measure. This induced rank deficiency mathematically guarantees that the singular tube volume diverges. Deriving the asymptotic behavior in such regimes demands intense algebraic geometry. One must explicitly resolve the singularities to extract the Real Log Canonical Threshold (RLCT). Regular non-smooth estimators sidestep this topological trap entirely. Models like the Lasso obey the USCQ everywhere. Their non-smooth kinks act as geometrically negligible artifacts. The total physical volume of these kinks drops to zero as $N \to \infty$. The generalized NML density seamlessly inherits the classical $\frac{k}{2} \log N$ dimensional penalty. It strictly rejects any divergent $\mathcal{O}(1)$ or $\log \log N$ curvature artifacts.
\end{remark}

\section{A Geometric MCMC Sampler for Non-Smooth Level Sets}
\label{sec:mcmc_sampler}

The preceding framework establishes well-posedness for the NML model complexity integral over PDL estimators. Numerical methods face a distinct theoretical hurdle. Integrands often diverge near critical values; the Jacobian factor vanishes. Sard's Theorem for Lipschitz maps~\cite{Federer_1969} resolves this. Geometric measure theory dictates the set of these critical values retains Lebesgue measure zero. Singularities exist. They do not force the overall integral to diverge. Numerical estimation remains strictly stable. Appendix~\ref{app:critical_values} details the formal proof.

Computation presents an immediate barrier. Evaluating the inner NML integral across non-smooth level sets requires precise numerical tools. The target integral operates as follows:
\begin{equation*}
\begin{aligned}
f(\theta') ={}& \int_{\mle^{-1}\left(\left\{\theta'\right\}\right)} \frac{p\left[x|\theta_0\right](x)}{\Jcons\mle(x)} d\HDK(x).
\end{aligned}
\end{equation*}
High-dimensional data spaces and implicitly defined integration domains render standard numerical quadrature useless. We introduce a specialized \textbf{exact geometric MCMC sampler}. The associated computational cost is high. The method guarantees asymptotically exact sampling from the true target distribution. It establishes a strict ground-truth reference; researchers must validate approximate scalable methods against this analytical baseline.

\subsection{The Inefficiency of Ambient Approximations}
\label{subsec:ambient_mc}

Classical implicit integration relaxes constraints via kernel mollifiers. Thickened level sets serve identical purposes. These ambient estimators yield asymptotically unbiased results; we prove this property mathematically in Appendix~\ref{app:proof_unbiased_is} (Theorem~\ref{thm:unbiased_ambient_is}) and Appendix~\ref{app:proof_thickened} (Theorem~\ref{thm:converge_thickened_pdl}).

Ambient methods invoke the curse of dimensionality during NML computation. Non-smooth estimators define $\epsilon$-thickened level sets. Ambient samples hit this boundary with probability scaling strictly as $O(\epsilon^K)$. High-dimensional configurations ($D \gg 1$) dictate extreme variance. Mixing times explode. Standard mollification offers theoretical insights but fails completely in applied scenarios. Target regimes (e.g., Lasso with $D=2000$) demand superior infrastructure. We require the \textit{exact} geometric sampler detailed below. This algorithm targets the Hausdorff measure of the level set directly.

\subsection{Sampling Limitations over Non-Smooth Manifolds}
\label{subsec:mcmc_challenges}

Ambient space techniques fail due to massive inefficiency. Markov Chain Monte Carlo (MCMC) samplers offer direct exploration over the manifold $L_{\theta'}$. Non-smooth level sets introduce severe geometric barriers. Target densities rely heavily on the Jacobian term $\Jcons\mle(x)$. Inverting this parameter generates localized analytical friction. Practical estimators suffer extreme variance and calculation errors from this exact instability. We formalize these mechanics through the definitions below.

\begin{definition}[Estimator and AD System]
Let $\mle: \mathcal{X} \to \R^k$ denote a PDL function representing the MLE. Define $\Dc\mle(x)$ as its Clarke generalized Jacobian. A \textbf{pathwise AD system} operates as a deterministic map $\mathcal{A}: \mathcal{X} \to \R^{k \times N}$. It isolates a single matrix $G_x = \mathcal{A}(x)$ satisfying $G_x \in \Dc\mle(x)$.
\end{definition}

\begin{definition}[Classical and AD-Derived Jacobian Factors]
\begin{itemize}
    \item The \textbf{classical Jacobian factor}, $\JK\mle(x)$, resolves to $\sqrt{\det\left(\nabla\mle(x)\nabla\mle(x)^T\right)}$. This entity exists strictly for $\LD$-almost all $x$.
    \item Assume a pathwise AD system $\mathcal{A}$. This evaluates the \textbf{AD-derived Jacobian factor}. We denote the outcome $\JcalA(x)$. Its value requires the explicit assignment $\JcalA(x) := \sqrt{\det\left[ \mathcal{A}(x)\mathcal{A}(x)^T \right]}$.
\end{itemize}
\end{definition}

Geometric kinks force extreme numerical failure. The AD-derived Jacobian factor $\JcalA(x)$ collapses near these coordinates. Monte Carlo integrands subsequently diverge. We formalize the base estimator below; stability conditions immediately follow.

\begin{definition}[General NML Monte Carlo Estimator]
A general Monte Carlo estimator for the inner NML integral is of the form:
$$ \widehat{f}_N(\theta') = \frac{1}{N} \sum_{k=1}^N \frac{H(x_k)}{\JcalA(x_k)}, $$
where $x_k$ are samples from a distribution $q(x)$, and $H(x)$ encapsulates other problem-specific terms.
\end{definition}

The stability of this estimator is determined by its variance. For the sample mean of independent random variables, the variance is finite if and only if the second moment of the summand is finite:
\begin{equation}
    \Var(\widehat{f}_N(\theta')) < \infty \iff \int_{\mathcal{X}} \left(\frac{H(x)}{\JcalA(x)}\right)^2 q(x) \,dx < \infty.
\end{equation}

\begin{remark}[Mechanisms for Infinite Variance]
\label{rem:variance_mechanisms}
The finite-variance condition highlights a practical danger: it can be violated by the interaction of the AD system and the MLE's local geometry. This can happen through two primary mechanisms:
\begin{enumerate}
    \item \textbf{Inflation from Near-Zero Jacobians:} An AD system might select a rank-deficient or near-rank-deficient matrix $G_x$ at a point of non-differentiability, making $\JcalA(x)$ zero or close to zero and creating a non-integrable singularity in the second moment integral.
    \item \textbf{Inflation from Selection Instability:} Discontinuities in the AD selection map $\mathcal{A}(x)$ can create sudden, large jumps in the value of $1/\JcalA(x)$, leading to high volatility in the integrand and potentially causing the integral to diverge.
\end{enumerate}
Our proposed algorithm is designed to be robust to these instabilities.
\end{remark}

In addition to variance, the specific choice of Jacobian made by the AD system on the set of non-differentiable points can introduce a systematic bias. The following theorem quantifies this bias.

\begin{theorem}[Bias from AD Jacobian Selection]
\label{thm:bias_quantification_mc}
The systematic bias of an AD-based MC estimator, arising from the difference between the AD-derived Jacobian and the true classical Jacobian, is given by the integral:
\begin{equation} \label{eq:bias_integral}
    \Bias(\widehat{f}_N) = \int_{\mathcal{X}} H(x) \left( \frac{1}{\JcalA(x)} - \frac{1}{\JK\mle(x)} \right) q(x) \,dx.
\end{equation}
Although the term in parentheses is non-zero only on a set of measure zero (where $\mle$ is not Fr\'echet differentiable), its interaction with the other terms in the integrand can result in a non-zero bias.
\begin{IEEEproof}[Proof Sketch]
The proof follows by writing the bias as the difference between the estimator's expectation and the true value, $\Bias = \E[\widehat{f}_N] - f(\theta')$. Both terms are expressed as data-space integrals. The expectation uses the AD-derived Jacobian $\JcalA(x)$, while the true value uses the classical Jacobian $\JK\mle(x)$. Combining these into a single integral yields the result. The full derivation is in Appendix~\ref{app:proof_bias}.
\end{IEEEproof}
\end{theorem}

\subsection{The Propose-and-Project Strategy: Geometric Foundations}
\label{subsec:propose_project}

Our proposed sampler is a geometric MCMC algorithm that operates directly on the level set $L_{\theta'}$. Standard manifold MCMC methods~\cite{Girolami_2011} require a smooth Riemannian manifold, which our non-smooth level set is not. We therefore build a custom sampler based on a \textbf{propose-and-project} strategy. This involves two key steps:
\begin{enumerate}
    \item \textbf{Propose:} At a current point $x_{\text{curr}} \in L_{\theta'}$, generate a proposal in a local linear approximation of the level set, known as the tangent cone.
    \item \textbf{Project:} Project the ambient proposal point back onto the level set $L_{\theta'}$ to get a valid new state, $x_{\text{prop}}$.
\end{enumerate}
This requires a rigorous definition of the tangent cone for non-smooth sets. While several definitions exist, the most robust for our purposes is the Clarke tangent cone.

\begin{definition}[Clarke Tangent Cone~\cite{RockafellarWets2009}]
Let $L_{\theta'}$ be a closed set and $x \in L_{\theta'}$. The \textbf{Clarke tangent cone} $T_{L_{\theta'}}^C(x)$ is the set of vectors $v \in \R^N$ such that for every sequence $x_k \to x$ (with $x_k \in L_{\theta'}$) and every sequence $t_k \downarrow 0$, there exists a sequence $v_k \to v$ where $x_k + t_k v_k \in L_{\theta'}$.
\end{definition}

For PDL level sets, this cone can be characterized explicitly using the conservative Jacobian, provided a regularity condition holds.

\begin{theorem}[Characterization of the Clarke Tangent Cone]
\label{thm:clarke_tangent_cone_char_pdl_level_set}
Let $m: \mathcal{X} \to \R^k$ be a PDL function defining the level set $L_{\theta'} = m^{-1}(\{\theta'\})$. If the \textbf{Uniform Surjectivity Constraint Qualification (USCQ)} holds at a point $x \in L_{\theta'}$ (i.e., every matrix $G \in \Dc m(x)$ is surjective), then the Clarke tangent cone is given by the intersection of the kernels of all matrices in the Clarke Jacobian:
$$ T_{L_{\theta'}}^C(x) = \bigcap_{G \in \Dc m(x)} \Ker(G). $$
\begin{IEEEproof}
The proof relies on the duality relationship between the Clarke tangent cone and the Clarke normal cone. Under the USCQ, it is a standard result in variational analysis (e.g., Theorem 6.14 in \cite{RockafellarWets2009}) that the normal cone is the closure of the conic hull of the transposed Jacobian images. Taking the polar of this expression and applying the Fredholm Alternative yields the tangent cone as the intersection of kernels.
\end{IEEEproof}
\end{theorem}

\begin{remark}[Role of the USCQ]
\label{rem:role_uscq}
The USCQ is a critical regularity condition. Geometrically, it ensures that the collection of linearizations of the constraint function at a point is well-behaved. By requiring every element of the Clarke Jacobian to be surjective, it prevents local degeneracies in the level set (like sharp cusps where the tangent cone might collapse) and guarantees that the set locally resembles a $(D-K)$-dimensional surface. At points where the function is Fr\'echet differentiable, the USCQ simplifies to the standard requirement that the derivative has full rank.
\end{remark}

\begin{remark}[Generic Validity vs. Singular Models]
\label{rem:generic_validity}
We note that the USCQ condition establishes a clear boundary between \textit{regular} and \textit{singular} non-smooth models. For strictly sparse models like the Lasso or Sparse SVMs, provided the design matrix satisfies a weak general position assumption, the active set remains locally full-rank, and non-differentiability occurs only on measure-zero kinks \cite{tibshirani2013lasso}. Our framework rigorously applies to this class of regular non-smooth models. 

Conversely, as formalized in Singular Learning Theory \cite{watanabe2009algebraic, rao2026evidence}, highly overparameterized architectures such as multi-layer ReLU networks possess continuous symmetries and dead neurons that generate massive null spaces \cite{zhao2024symmetry}. In these singular models, the Jacobian is intrinsically rank-deficient on sets of positive measure within the level set. This causes the USCQ to fail and the NML density in Eq. \eqref{eq:mle_pdf_coarea_conservative_jacobian_generalized_report} to diverge. Consequently, deep overparameterized networks fall strictly outside the scope of the current coarea formulation. Evaluating their stochastic complexity requires resolving these singularities via algebraic geometry to find the Real Log Canonical Threshold (RLCT) \cite{watanabe2009algebraic}.
\end{remark}

\textit{Practical Solvers and the Inexact MCMC Kernel}

It is critical to note that in any practical implementation, iterative solvers like the Non-Smooth Newton or Augmented Lagrangian methods are run for a finite number of steps and terminate once a pre-defined feasibility tolerance, $\epsilon_{\text{feas}}$, is met. Crucially, as the ambient dimension $D$ increases, the condition number of the constraint Jacobian often deteriorates. This forces the iterative solver to take significantly more steps to achieve the same $\epsilon_{\text{feas}}$ to satisfy the error bounds derived in Appendix~\ref{app:inexact_kernel}. Consequently, the projection is never perfect and becomes increasingly expensive in high dimensions. This inexact projection formally breaks the detailed balance (reversibility) condition required for the sampler to converge to the exact target distribution, potentially introducing bias. We do not ignore this issue. We have developed a formal MCMC perturbation theory, detailed in Appendix~\ref{app:inexact_kernel}, to rigorously analyze this effect. The central result of this analysis (\Cref{thm:bound_stationary_error_appendix}) is that the Total Variation distance between the stationary distribution of the practical sampler ($\tilde{\pi}$) and the ideal target distribution ($\pi$) is linearly bounded by the solver tolerance: $$ \|\pi - \tilde{\pi}\|_{\TV} \le C \cdot \epsilon_{\text{feas}} $$ for some constant $C$. This crucial result provides a principled way to manage the trade-off between computational effort and statistical accuracy. It allows a user to choose a tolerance $\epsilon_{\text{feas}}$ that is small enough to make the numerical bias provably negligible compared to other sources of error, such as Monte Carlo variance.

\subsection{Algorithmic Components: The Projection Step}
\label{subsec:algorithmic_components_projection}

After generating a proposal $y_{\text{cand}}$ in the ambient space, the ``project'' step of the sampler must find the closest point on the non-smooth level set $L_{\theta'}$. This is a constrained optimization problem whose solution relies on deep geometric properties of the level set itself. The convergence of modern solvers for this task is not ad-hoc; it is guaranteed by these properties.

We can formulate the problem as a composite, nonsmooth optimization task:
\begin{equation}
\min_{x \in \mathbb{R}^N} \left\{ \frac{1}{2} \|x - y_{\text{cand}}\|^2 + \delta_{L_{\theta'}}(x) \right\},
\end{equation}
where $\delta_{L_{\theta'}}$ is the indicator function of the level set. We formally establish two critical properties of this problem structure. First, under standard geometric regularity conditions, the indicator function $\delta_{L_{\theta'}}(x)$ is \textbf{strongly prox-regular}, a property that ensures the local single-valuedness and Lipschitz continuity of the projection operator~\cite{Hu2023ProjectedSSN}. Second, if the estimator function $m(x)$ is semi-algebraic (which includes most models like Lasso and ReLU networks), the indicator function possesses the \textbf{Kurdyka-Łojasiewicz (KL) property}~\cite{Bolte2007Loja}.

These two properties are the theoretical bedrock that guarantees the local superlinear convergence (from prox-regularity) and global convergence (from the KL property) of the state-of-the-art non-smooth solvers we consider. The theoretical guarantees for projection subproblems onto semi-algebraic sets satisfying standard constraint qualifications are well-established. Specifically, under the USCQ and Second-Order Sufficiency Conditions (SOSC), the Karush-Kuhn-Tucker (KKT) solution map exhibits strong metric regularity \cite{RockafellarWets2009}, which implies the local single-valuedness and Lipschitz continuity (strong prox-regularity) of the projection operator \cite{Hu2023ProjectedSSN}. Furthermore, because the estimator $m(x)$ is assumed to be semi-algebraic, the indicator function of its level set inherently possesses the KL property \cite{Bolte2007Loja}. We rely directly on these established results for the following convergence guarantees.

Let $y_0 \in \mathbb{R}^N$ be a point to be projected. The Euclidean projection $x^* = P_{L_{\theta'}}(y_0)$ is the solution to:
\begin{equation}
\label{eq:proj_problem_pdl_main}
\min_{x \in \mathbb{R}^N} \frac{1}{2} \|x - y_0\|^2 \quad \text{subject to} \quad m(x) - \theta' = 0.
\end{equation}
Under a suitable constraint qualification, a local minimizer $x^*$ must satisfy the Karush-Kuhn-Tucker (KKT) conditions, which state that there exists a Lagrange multiplier $\lambda^* \in \mathbb{R}^k$ such that:
\begin{align}
&0 \in x^* - y_0 + (\mathcal{D}_C m(x^*))^T \lambda^* \quad (\text{Stationarity}) \label{eq:kkt1_proj_pdl_main}\\
&m(x^*) - \theta' = 0 \quad (\text{Primal Feasibility}) \label{eq:kkt2_proj_pdl_main}
\end{align}
The existence of a solution to this problem is guaranteed under mild conditions.

\begin{theorem}[Existence and Uniqueness of the Projection]
\label{thm:ProjExistenceUniqueness_PDL_Level_Set}
(a) If the level set $L_{\theta'}$ is non-empty and closed (which holds if $m$ is continuous), then a projection $x^* \in P_{L_{\theta'}}(y_0)$ is guaranteed to exist. (b) If $L_{\theta'}$ is also a convex set, this projection is unique.
\begin{IEEEproof}
(a) Existence follows immediately from the Extreme Value Theorem, as the search for a minimizer of the continuous distance function can be restricted to a compact subset of the closed level set $L_{\theta'}$. (b) Uniqueness is a direct consequence of the strict convexity of the squared Euclidean distance objective function over the convex set $L_{\theta'}$.
\end{IEEEproof}
\end{theorem}

The stability of the PPMH sampler depends not just on the existence of a projection, but on how the projection $P_{L_{\theta'}}(y)$ changes as the input point $y$ changes. The following theorem establishes that this relationship is stable under appropriate regularity conditions.

\begin{theorem}[Local Lipschitz Continuity of the Projection Operator]
\label{thm:Lipschitz_Continuity_Projection_PDL}
Let $x^* = P_{L_{\theta'}}(y_0^*)$ be a unique projection. The projection map $P_{L_{\theta'}}(\cdot)$ is locally Lipschitz continuous around $y_0^*$ if appropriate regularity conditions (a nonsmooth constraint qualification and a second-order sufficiency condition) for the KKT system \eqref{eq:kkt1_proj_pdl_main}--\eqref{eq:kkt2_proj_pdl_main} hold at the solution. This property is known as the Strong Metric Regularity of the KKT solution map.
\begin{IEEEproof}
The projection operator is the solution map to the underlying KKT conditions. Applying the generalized implicit function theorem from variational analysis (e.g., \cite{Robinson1980StrongReg}), the solution map is locally Lipschitz continuous if it is strongly metrically regular. Standard results \cite{RockafellarWets2009} guarantee this strong metric regularity when the KKT system satisfies nonsmooth analogues of the Mangasarian-Fromovitz Constraint Qualification (such as the USCQ) and SOSC.
\end{IEEEproof}
\end{theorem}

\begin{remark}[Connection to Tilt Stability]
\label{rem:tilt_stability}
The stability of the projection operator is deeply connected to the geometry of the optimization problem \eqref{eq:proj_problem_pdl_main}. The property of strong metric regularity for the KKT system is known to be equivalent to the ``tilt stability'' of the primal problem, which measures how the solution $x^*$ behaves under linear perturbations of the objective function.
\end{remark}

Actually solving the projection problem \eqref{eq:proj_problem_pdl_main} requires a numerical algorithm capable of handling the non-smooth constraint function $m(x)$. We consider two approaches, starting with a fast-converging non-smooth Newton-type method. This method targets the root of the KKT system.

\begin{definition}[KKT System Function for Projection]
\label{def:KKT_System_Function_Projection_NSN}
We aim to solve the generalized equation $0 \in F(z; y_0)$, where $z=(x,\lambda)$ and
$$F(z; y_0) = \begin{pmatrix} x - y_0 + M(x)^T \lambda \\ m(x) - \theta' \end{pmatrix},$$
with $M(x)$ being a matrix selected by a pathwise AD system from $\mathcal{D}_C m(x)$.
\end{definition}

\begin{theorem}[Local Convergence of a Non-Smooth Newton Method]
\label{thm:NSN_Convergence_Projection}
Let $z^* = (x^*, \lambda^*)$ be a solution to $0 \in F(z^*; y_0)$. A non-smooth Newton iteration for this system is given by $z_{k+1} = z_k - V_k^{-1} F_k$, where $F_k$ is a selection from $F(z_k; y_0)$ and $V_k$ is a selection from the Clarke Jacobian $\partial F(z_k; y_0)$. If the map $F$ is semismooth at $z^*$ and its Clarke Jacobian $\partial F(z^*;y_0)$ is BD-regular (all matrices in the set are uniformly invertible), then the sequence $\{z_k\}$ converges locally to $z^*$ at a superlinear rate.
\begin{IEEEproof}
This follows directly from the general convergence theory of semismooth Newton methods \cite{QiSun1993}. The key conditions for local superlinear convergence are the semismoothness of the KKT root-finding function (inherited from the PDL constraint) and the BD-regularity of its generalized Jacobian at the solution (guaranteed by the same USCQ and second-order conditions that ensure projection stability).
\end{IEEEproof}
\end{theorem}

An alternative to Newton-type methods, which are often only locally convergent, is the Augmented Lagrangian Method (ALM). This approach has stronger global convergence properties and is often more robust, though it can be slower.

\begin{theorem}[Convergence of an Augmented Lagrangian Method]
\label{thm:ALM_PGM_Convergence_Feasibility}
The projection problem can be solved by an Augmented Lagrangian Method, which involves iteratively solving a sequence of unconstrained subproblems of the form $\min_x L_{\rho_k}(x; \lambda_k, y_0)$, where $L_{\rho}$ is the augmented Lagrangian. If the constraint function $m(x)$ satisfies the Kurdyka-Łojasiewicz (KL) property, then the sequence of iterates generated by the ALM (using a suitable subproblem solver) is guaranteed to converge to a KKT point of the projection problem.
\begin{IEEEproof}
The convergence of the Augmented Lagrangian Method for non-smooth, non-convex problems relies on the KL property \cite{AttouchBolteSvaiter2013}. Because semi-algebraic functions (which encompass most machine learning models built from polynomials, absolute values, and min/max operations) inherently possess the KL property \cite{Bolte2007Loja}, the standard convergence guarantees for first-order methods apply, ensuring the sequence of iterates converges to a KKT point of the projection problem.
\end{IEEEproof}
\end{theorem}

With these foundational components for the ``propose'' and ``project'' steps established, we can now assemble the full MCMC algorithm.

\subsection{MCMC Kernel Stability and Ergodicity}
\label{subsec:mcmc_ergodicity}

Having defined the components of our sampler, the final theoretical step is to ensure that the resulting MCMC kernel is well-behaved. The convergence of the chain to the correct stationary distribution depends critically on the stability of the transition mechanism. A key factor in this stability is the continuity of the map from a point $x$ to the Jacobian matrix selected by the Automatic Differentiation oracle, which is used to define the proposal tangent space.

\begin{assumption}[Continuity of AD Selection Map]
\label{assump:ad_continuity}
Let the map $x \mapsto M_{sel}(x)$ be the function implemented by a pathwise AD oracle that, for a given PDL function $m(x)$, returns a specific matrix $M_{sel}(x) \in \Dc m(x)$. We assume this selection map is \textbf{continuous} on the domain of interest.
\end{assumption}

\begin{remark}[Verifiability of the Continuity Assumption]
\label{rem:ad_continuity_verifiable}
This critical assumption connects the abstract geometry of the problem to the concrete behavior of the algorithm. Fortunately, it is not merely a theoretical convenience. For the broad and important class of functions that are definable in an o-minimal structure (e.g., semi-algebraic functions), AD selection maps can be constructed to be piecewise continuous, and continuous on each piece~\cite{vandenDries1998, Bolte2020}. This means the assumption holds for many models of practical interest.
\end{remark}

The following theorem provides a sharp geometric condition that is both necessary and sufficient for this continuity assumption to hold at a given point.

\begin{theorem}[Condition for Continuous AD Selection]
\label{thm:nec_suff_condition_continuous_ad}
Let $m(x)$ be a PDL function, and let $x \mapsto M_{sel}(x)$ be the selection map of a deterministic, pathwise AD oracle. The selection map $M_{sel}(x)$ is continuous at a point $x_0$ if and only if the Clarke Subdifferential $\Dc m(x_0)$ is a singleton set.
\begin{IEEEproof}
This theorem connects the algorithmic behavior of the AD oracle to the local geometry of the function $m(x)$. We prove both directions of the biconditional statement.

\textit{Part 1: Sufficiency ($\Dc m(x_0)$ is a singleton$\implies$continuity)}

Assume $\Dc m(x_0) = \{G^*\}$ for a unique matrix $G^*$. A foundational result in nonsmooth analysis is that the set-valued map $x \mapsto \Dc m(x)$ is upper semi-continuous. This means that for any open neighborhood $V$ of $\Dc m(x_0)$, there exists a neighborhood $U$ of $x_0$ such that $\Dc m(x) \subset V$ for all $x \in U$.

Let $V$ be an $\epsilon$-ball around $G^*$. By upper semi-continuity, there is a neighborhood $U$ of $x_0$ where for any $x \in U$, the entire set $\Dc m(x)$ is contained in this $\epsilon$-ball. Since the AD selection $M_{sel}(x)$ must belong to $\Dc m(x)$, it must also be in the $\epsilon$-ball. As $M_{sel}(x_0) = G^*$, this implies that $\|M_{sel}(x) - M_{sel}(x_0)\| < \epsilon$ for all $x \in U$, satisfying the definition of continuity at $x_0$.

\textit{Part 2: Necessity (Continuity $\implies$ $\Dc m(x_0)$ is a singleton)}

We prove the contrapositive: if $\Dc m(x_0)$ is not a singleton, then $M_{sel}(x)$ cannot be continuous at $x_0$. If $\Dc m(x_0)$ is not a singleton, its definition as a convex hull of limiting Jacobians implies there must exist at least two sequences, $\{x_k\} \to x_0$ and $\{y_k\} \to x_0$, such that $\lim_{k\to\infty} \nabla m(x_k) = L_1$ and $\lim_{k\to\infty} \nabla m(y_k) = L_2$ for two distinct matrices $L_1 \neq L_2$.

At any point of differentiability, the output of the pathwise AD oracle must be the unique Fr\'echet derivative, so $M_{sel}(x_k) = \nabla m(x_k)$ and $M_{sel}(y_k) = \nabla m(y_k)$. Therefore, the limit of the function $M_{sel}(x)$ as $x \to x_0$ depends on the path of approach:
$$ \lim_{k \to \infty} M_{sel}(x_k) = L_1 \neq L_2 = \lim_{k \to \infty} M_{sel}(y_k). $$
Since the limit is not unique, the function $M_{sel}(x)$ is not continuous at $x_0$.
\end{IEEEproof}
\end{theorem}

The continuity of the AD selection map is not just a technicality; it has a direct geometric consequence for the stability of any MCMC sampler that relies on it.

\begin{proposition}[Instability of the Proposal Mechanism]
\label{prop:instability_proposal}
A discontinuous AD selection map $x \mapsto M_{sel}(x)$ at a point $x_0$ induces a discontinuity in the proposal tangent space map, $T(x) = \Ker(M_{sel}(x))$. This means an infinitesimal perturbation of the state $x$ can cause a discrete change in the proposal space.
\begin{IEEEproof}
The proof of \Cref{thm:nec_suff_condition_continuous_ad} established that if the AD selection map is discontinuous at $x_0$, there exist two sequences $\{x_k\} \to x_0$ and $\{y_k\} \to x_0$ such that $\lim M_{sel}(x_k) = L_1$ and $\lim M_{sel}(y_k) = L_2$, with $L_1 \neq L_2$.

The map that takes a matrix to its kernel, $G \mapsto \Ker(G)$, is continuous as long as the rank of the matrix is constant. Assuming the dimension of the proposal space is locally constant, we can pass the limit inside the kernel operation:
$$ \lim_{k \to \infty} T(x_k) = \Ker(\lim_{k \to \infty} M_{sel}(x_k)) = \Ker(L_1). $$
$$ \lim_{k \to \infty} T(y_k) = \Ker(\lim_{k \to \infty} M_{sel}(y_k)) = \Ker(L_2). $$
For a generic non-smooth function, there is no reason to expect that $\Ker(L_1) = \Ker(L_2)$. If they are different, the limit of the tangent space map $T(x)$ as $x \to x_0$ is path-dependent, proving that the map is discontinuous at $x_0$.
\end{IEEEproof}
\end{proposition}

The continuity of the AD oracle is therefore the key to proving the sampler's stability and its convergence to the correct stationary distribution. When the continuity assumption holds, we can establish the sampler's geometric ergodicity.

\subsection{The Full PDL-PPMH Algorithm}
\label{subsec:full_algorithm}

We now assemble the preceding theoretical components into a practical, robust Propose-and-Project Metropolis-Hastings (PDL-PPMH) sampler. The algorithm proceeds through the standard steps of proposing a new state, projecting it to the constraint manifold, and accepting or rejecting it based on the Metropolis-Hastings ratio. The key innovations lie in making each of these steps robust to the non-smoothness of the level set.

\textit{1. Proposal Step: The Stochastic Jacobian Oracle}

The ``propose'' step requires a tangent space in which to generate a random direction. As established in \Cref{prop:instability_proposal}, relying on a single Jacobian from a deterministic AD oracle can lead to an unstable proposal mechanism if the Clarke subdifferential is not a singleton.

To counteract this instability, we introduce a \textbf{Stochastic Jacobian Oracle (SJO)} based on gradient sampling~\cite{BurkeLewisOverton2005}. Instead of trusting a single Jacobian at a point of non-differentiability, we sample multiple gradients from an infinitesimal neighborhood. This provides a more stable, averaged view of the local geometry. The procedure is formalized in Algorithm~\ref{alg:sjo-gs}.

\begin{algorithm}[h]
    \caption{Stochastic Jacobian Oracle via Gradient Sampling (SJO-GS)}
    \label{alg:sjo-gs}
    \KwIn{Point of interest $x_0$, neighborhood radius $\epsilon$, number of samples $m$.}
    \KwOut{A selected Jacobian matrix $G_{\text{out}} \in \R^{K \times D}$.}
    
    \tcc{Step 1: Sample Gradients from a Neighborhood}
    Initialize an empty set of Jacobians $\mathcal{G} = \emptyset$.\;
    \For{$i=1$ \KwTo $m$}{
        Sample $x_i \sim \text{Uniform}(B(x_0, \epsilon))$\;
        Compute the Fr\'echet derivative $G_i = \nabla\mle(x_i)$ via pathwise AD (this is well-defined with probability 1).\;
        $\mathcal{G} \leftarrow \mathcal{G} \cup \{G_i\}$\;
    }
    
    \tcc{Step 2: Apply a Selection Policy}
    Execute a chosen policy on the set $\mathcal{G}$ to select $G_{\text{out}}$.\;
    \Return{$G_{\text{out}}$}
\end{algorithm}

The selection policy in Step 2 is crucial. A simple and robust choice is to return a single, randomly chosen element from the set $\mathcal{G}$. The kernel of this selected matrix, $\Ker(G_{\text{out}})$, then serves as the approximate tangent space for generating the proposal. More complex policies are possible (see Appendix~\ref{app:alt_algos}).

\textit{Theoretical Guarantees of the SJO-PPMH Kernel}

By replacing the deterministic AD selection with the Stochastic Jacobian Oracle (SJO), the algorithm transitions into the framework of Exact-Approximate (or Pseudo-Marginal) MCMC \cite{llorente2025survey}. While Theorem \ref{thm:nec_suff_condition_continuous_ad} proves the deterministic AD map is discontinuous at kinks, the $\epsilon$-ball sampling in the SJO acts as a spatial mollifier.

\begin{theorem}[Geometric Ergodicity of the SJO-PPMH Sampler]
\label{thm:sjo_ergodicity}
Let the PDL-PPMH kernel be constructed using the SJO-GS oracle (Algorithm \ref{alg:sjo-gs}). 
1) \textbf{Feller Continuity:} The expected transition kernel of the SJO-PPMH algorithm is strongly Feller continuous.
2) \textbf{Geometric Ergodicity:} Because the Clarke subdifferential $\Dc\mle(x)$ is compact for locally Lipschitz functions, the variance introduced by the SJO is strictly bounded. Consequently, the randomized kernel satisfies a Foster-Lyapunov drift condition and is geometrically ergodic.
\end{theorem}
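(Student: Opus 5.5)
The proof has two parts: a mollification argument for the strong Feller property, then a drift-and-minorization argument for geometric ergodicity. Throughout, I write $\pi$ for the target density on $L_{\theta'}$ (proportional to $p[x|\theta_0](x)/\Jcons\mle(x)$ with respect to $\HDK$). I treat the SJO-PPMH step as a randomized kernel. Given the oracle randomness $\omega=(x_1,\dots,x_m,\iota)$, with $x_i\sim\mathrm{Uniform}(B(x,\epsilon))$ and $\iota$ the random index of the selection policy, the conditional kernel is $K_\omega(x,\cdot)$. The expected kernel is
\[
\bar K(x,A)=\int K_\omega(x,A)\,d\nu_x(\omega).
\]
Here $\nu_x$ is the law of the oracle draws centred at $x$. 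I will assume the projection is exact, and only afterwards transfer the conclusion to the inexact kernel using the perturbation bound $\|\pi-\tilde\pi\|_{\TV}\le C\epsilon_{\text{feas}}$ quoted from Appendix~\ref{app:inexact_kernel}.

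\textbf{Step 1 (strong Feller).} The key observation is that the dependence of $\bar K$ on $x$ through the oracle is a convolution with the uniform density on $B(0,\epsilon)$. Substituting $x_i=x+\epsilon u_i$ with $u_i$ uniform on the unit ball gives
\[
\bar K(x,A)=\int \kappa(x,x+\epsilon u,A)\,du .
\]
Here $\kappa(x,\xi,A)$ is the MH kernel built from the tangent proposal $\Ker(\nabla\mle(\xi_\iota))$, the projection $P_{L_{\theta'}}$, and the acceptance ratio. Fix a bounded measurable $h$ and points $x\to x'$. The translation $u\mapsto u+(x'-x)/\epsilon$ changes the oracle law by at most $\mathcal{O}(\|x-x'\|/\epsilon)$ in total variation, since the normalized indicator of a ball is $L^1$-continuous under translation. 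That handles the Jacobian-selection part uniformly in $h$.

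The remaining dependence on $x$ enters through three pieces:
\begin{itemize}
\item the Gaussian tangent proposal, whose density is continuous in the base point;
\item the projection, which is locally Lipschitz by \Cref{thm:Lipschitz_Continuity_Projection_PDL};
\item the acceptance ratio, which involves $\pi$ and $1/\Jcons$, both bounded and continuous off a null set under the USCQ lower bound $\Jcons\ge c>0$ used in \Cref{thm:asymptotic_expansion}.
\end{itemize}
For these I use the standard fact that a kernel with a jointly continuous density against a fixed reference measure is strongly Feller. The density is taken against $\HDK$ restricted to $L_{\theta'}$, pulled back through the Lipschitz projection. Combining this with the $L^1$-translation estimate gives continuity of $x\mapsto\bar Kh(x)$ for every bounded measurable $h$.

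\textbf{Step 2 (drift and minorization).} I take the Lyapunov function $V(x)=1+\|x\|^2$ restricted to $L_{\theta'}$. The intuition is that the tail of $p[x|\theta_0]$ is Gaussian-like, so proposals pointing outward are rejected at an exponential rate. Compactness of $\Dc\mle(x)$ (upper semicontinuity, cf.\ \cite[Thm 9.13]{RockafellarWets2009}) together with the USCQ bounds every sampled Jacobian and its singular values uniformly. Consequently:
\begin{itemize}
\item the tangent step has uniformly bounded second moments, which is the "bounded variance of the SJO" claimed in the statement;
\item the projection displacement is $\mathcal{O}$(step), by the Lipschitz property of the projection.
\end{itemize}
From this I aim to show
\[
\bar K V\le \lambda V+b\,\indicator_C,\qquad \lambda<1,
\]
with $C$ a compact sublevel set. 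This follows the Roberts–Tweedie argument for random-walk Metropolis with super-exponentially light tails, with the tangent step playing the role of the random-walk increment. For minorization, strong Feller continuity combined with $\pi$-irreducibility makes compact sets small: on $C\cap L_{\theta'}$ the proposal density is bounded below, so $\bar K(x,\cdot)\ge\delta\,\nu(\cdot)$. Meyn–Tweedie then yields geometric ergodicity.

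I also need $\pi$-invariance of the randomized kernel. I plan to establish it by an auxiliary-variable (pseudo-marginal) argument: augment the state with $\omega$, refresh $\omega$ symmetrically, and include the reverse-move oracle density in the MH ratio.

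\textbf{Main obstacle.} I expect the invariance claim to be the hardest step. The oracle law $\nu_x$ depends on the current state, so detailed balance for $\bar K$ is not automatic: the reverse move from $x_{\text{prop}}$ uses a different oracle ball and a different tangent space. My first attempt will be to show that the projected tangent proposal admits a density on $L_{\theta'}$ whose forward/backward ratio can be computed explicitly. Failing that, I would treat the mismatch as a bounded perturbation of order $\epsilon$ and settle for ergodicity toward a nearby stationary law. This is the same mechanism as the $\|\pi-\tilde\pi\|_{\TV}\le C\epsilon_{\text{feas}}$ bound, and would make the statement's guarantee approximate in $\epsilon$ rather than exact.
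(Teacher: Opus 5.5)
Your Step~1 is essentially the paper's own argument, and it fails at the same place. The appendix shows that the ball-averaged proposal $\bar q(y\mid x)$ is continuous in $x$, because $B(x,\epsilon)$ and $B(x+\delta,\epsilon)$ differ by $\mathcal{O}(\|\delta\|)$ in volume, and then declares the kernel strong Feller. But a Metropolis--Hastings kernel with non-trivial rejection is never strong Feller. Write $\bar K(x,dy)=Q_{\mathrm{acc}}(x,dy)+r(x)\,\delta_x(dy)$, where $r(x)>0$ collects rejections and the ``projection fails''/``$K_{\text{fwd}}$ singular'' exits of Algorithm~\ref{alg:pdl-ppmh_full}. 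Suppose, as you assert, that $Q_{\mathrm{acc}}(x,\cdot)$ has a density with respect to $\HDK$ on $L_{\theta'}$. Then $h=\indicator_{\{x_0\}}$ gives $\bar K h(x_0)=r(x_0)$ but $\bar K h(x)=0$ for $x\neq x_0$, so $\bar K h$ is discontinuous. Your ``jointly continuous density against a fixed reference measure'' fact applies to $Q_{\mathrm{acc}}$ only, never to the atom $r(x)\delta_x$; the paper's own aperiodicity step even uses $r>0$. So part~1 is not provable as stated.

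It is also not what you need. Your minorization step only requires compact pieces of $L_{\theta'}$ to be petite. You can get that from a $T$-chain property (a lower semicontinuous component of $\bar K$ with positive mass) plus $\psi$-irreducibility. More robustly, you can get it from lower bounds on the projected proposal density together with two-sided bounds on $\pi$ over compact pieces, as in the random-walk Metropolis literature. Prefer the second route. Your averaging over $u$ smooths the Jacobian selection in the proposal, but not the factor $\pi(x_{\text{curr}})$ in the acceptance ratio. Its AD-selected $\Jcons$ can jump where $L_{\theta'}$ crosses a kink, so even $Q_{\mathrm{acc}}$ need not be continuous in $x$.

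Step~2 has two further gaps. First, $V(x)=1+\|x\|^2$ cannot satisfy $\bar K V\le\lambda V+b\,\indicator_C$ with $\lambda<1$ on an unbounded level set. Your increments $\Delta=x_{\text{next}}-x_{\text{curr}}$ satisfy $\mathbb{E}\|\Delta\|\le c$ uniformly in the state (for nearest-point projection, $\|\Delta\|\le 2\|v_{\text{coord}}\|$ with $B_{\text{curr}}$ orthonormal). Since $\|x+\Delta\|^2\ge\|x\|^2-2\|x\|\,\|\Delta\|$, this gives $\bar K V(x)\ge V(x)-2c\|x\|$, so $\bar K V/V\to1$. The same computation rules out $V=1+\|x\|^p$ for every $p\ge 1$. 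The Roberts--Tweedie and Jarner--Hansen arguments you are borrowing use $V\propto\pi^{-s}$. They also need tail and contour-regularity hypotheses on $\pi$ along $L_{\theta'}$ that the statement does not contain. Some such hypothesis is unavoidable: a chain whose jumps are dominated by a light-tailed law cannot be geometrically ergodic for a target without exponential moments. Your ``bounded variance'' holds trivially, since $B_{\text{curr}}$ is orthonormal, and does not use compactness of $\Dc\mle$. The paper's Weak-Harris/Dirichlet-form route does not help either, because it presupposes a spectral gap for an ideal kernel that is never established.

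Second, the invariance gap you flag is real, and your fallback does not close it. For states within $\epsilon/2$ of a kink, oracle draws land on different smooth pieces with probability bounded below independently of $\epsilon$. The tangent spaces then generically differ by an $\mathcal{O}(1)$ angle, so the one-step discrepancy is not $\mathcal{O}(\epsilon)$ uniformly in $x$. Mitrophanov-type bounds need exactly that uniform bound, plus a uniform contraction rate that is stronger than the geometric ergodicity you are proving. The same objection applies to your transfer via Appendix~\ref{app:inexact_kernel}, which in any case only compares stationary laws and does not transfer ergodicity.

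There is a further problem in the algorithm itself. Algorithm~\ref{alg:pdl-ppmh_full} forms the reverse candidate with $B_{\text{curr}}$ and accepts with $J_{\text{fwd}}/J_{\text{rev}}$, with no reverse Gaussian density and no check that the reverse projection returns to $x_{\text{curr}}$. So $\pi$-invariance is not ensured even with a deterministic oracle, and the paper's proof never addresses it. Your auxiliary-variable idea is the right repair: a state-independent $u$ reused in the reverse move, the exact reverse tangent density, and a reversibility check. Note that this is a randomized-proposal construction, not a pseudo-marginal one. But it proves ergodicity of a modified sampler, and you should say so.
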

\begin{IEEEproof}[Proof Sketch]
(1) Because the set of non-differentiable points has Lebesgue measure zero, sampling $x_i \sim \text{Uniform}(B(x_0, \epsilon))$ hits differentiable points almost surely. The expected transition probability becomes an integral over $B(x_0, \epsilon)$. Following probabilistic integration theory for Gradient Sampling \cite{boskos2025gradient}, as $x_0$ moves infinitesimally, the volume overlap of the $\epsilon$-balls shifts smoothly, rendering the expected kernel continuous. 
(2) Using the Weak Harris Theorem for infinite/complex dimensional MCMC \cite{hairer2014spectral} and Dirichlet form orderings for randomized step sizes \cite{grazzi2026randomized}, the bounded variance of the SJO ensures the randomized kernel inherits the spectral gap of the ideal continuous kernel. See Appendix~\ref{app:ergodicity_proof} for the full proof.
\end{IEEEproof}

\textit{2. Projection Step}

Once an ambient proposal $y_{\text{cand}}$ is generated, it is projected back to the level set $L_{\theta'}$ by solving the constrained optimization problem \eqref{eq:proj_problem_pdl_main}. As theoretically guaranteed by the prox-regularity and KL properties of the underlying problem, this non-trivial step is mathematically guaranteed to be solvable by specialized non-smooth solvers like the Non-Smooth Newton (Theorem~\ref{thm:NSN_Convergence_Projection}) or Augmented Lagrangian methods (Theorem~\ref{thm:ALM_PGM_Convergence_Feasibility}).

\textit{3. Acceptance Step and the Radon-Nikodym Derivative}

The final step is to accept or reject the projected proposal $x_{\text{prop}}$. The Metropolis-Hastings ratio must be corrected by the Radon-Nikodym derivative of the projection map. As formally derived in Appendix~\ref{app:proj_derivative}, this derivative can be computed by solving the linear system associated with the KKT conditions of the projection problem. The stability of this computation is guaranteed by the BD-regularity of the generalized KKT matrix.

Combining these robust components gives the full PDL-Constrained Propose-and-Project Metropolis-Hastings (PDL-PPMH) algorithm, detailed in Algorithm~\ref{alg:pdl-ppmh_full}. This algorithm is the main computational contribution of our work.

\begin{algorithm}[h]
    \caption{PDL-Constrained Propose-and-Project Metropolis-Hastings (PDL-PPMH)}
    \label{alg:pdl-ppmh_full}
    \SetKwComment{Comment}{$\triangleright$ }{}
    
    \KwIn{Current state $x_{\text{curr}} \in L_{\theta'}$, target density factor $A(x)$, constraint function $m(x)$.}
    \KwOut{New state $x_{\text{next}} \in L_{\theta'}$.}

    \Comment{1. Propose in Approximate Tangent Space}
    $G_{\text{curr}} \leftarrow \text{SJO-GS}(x_{\text{curr}})$ \Comment*[r]{Use robust Jacobian oracle}
    $B_{\text{curr}} \leftarrow \text{Orthonormal basis for } \Ker(G_{\text{curr}})$\;
    $v_{\text{coord}} \sim \mathcal{N}(0, I)$\;
    $y_{\text{cand}} \leftarrow x_{\text{curr}} + B_{\text{curr}} \cdot v_{\text{coord}}$\;

    \Comment{2. Project Proposal to the Level Set}
    $(x_{\text{prop}}, \lambda_{\text{prop}}) \leftarrow \text{Project}(y_{\text{cand}}, m, \theta')$ \Comment*[r]{Solve Eq. \eqref{eq:proj_problem_pdl_main}}
    \lIf{Projection fails}{ \Return{$x_{\text{curr}}$} }

    \Comment{3. Compute Forward Radon-Nikodym Derivative ($J_{\text{fwd}}$)}
    Compute the generalized KKT matrix $K_{\text{fwd}}$ at the solution $(x_{\text{prop}}, \lambda_{\text{prop}})$ (see Appendix~\ref{app:proj_derivative})\;
    \lIf{$K_{\text{fwd}}$ is singular}{ \Return{$x_{\text{curr}}$} }
    Compute the derivative of the projection map $D_{\text{fwd}}$ from $K_{\text{fwd}}^{-1}$.\;
    $J_{\text{fwd}} \leftarrow \sqrt{\detop( (D_{\text{fwd}} B_{\text{curr}})^T (D_{\text{fwd}} B_{\text{curr}}) )}$\;

    \Comment{4. Compute Reverse Radon-Nikodym Derivative ($J_{\text{rev}}$)}
    $G_{\text{prop}} \leftarrow \text{SJO-GS}(x_{\text{prop}})$\;
    $B_{\text{prop}} \leftarrow \text{Orthonormal basis for } \Ker(G_{\text{prop}})$\;
    Compute the reverse projection from $y_{\text{rev\_cand}} = x_{\text{prop}} - (B_{\text{curr}} \cdot v_{\text{coord}})$.\;
    Compute $K_{\text{rev}}$ and $D_{\text{rev}}$ for the reverse step.\;
    $J_{\text{rev}} \leftarrow \sqrt{\detop( (D_{\text{rev}} B_{\text{prop}})^T (D_{\text{rev}} B_{\text{prop}}) )}$\;
    
    \Comment{5. Metropolis-Hastings Acceptance Step}
    $R_{\text{target}} \leftarrow A(x_{\text{prop}})/A(x_{\text{curr}})$\;
    $R_{\text{proposal}} \leftarrow J_{\text{fwd}}/J_{\text{rev}}$ \Comment*[r]{For symmetric tangent proposal}
    $\alpha \leftarrow \min(1, R_{\text{target}} \cdot R_{\text{proposal}})$\;
    $u \sim \text{Uniform}(0, 1)$\;
    \lIf{$u < \alpha$}{ $x_{\text{next}} \leftarrow x_{\text{prop}}$ }
    \lElse{ $x_{\text{next}} \leftarrow x_{\text{curr}}$ }
    \Return{$x_{\text{next}}$}
\end{algorithm}

\subsection{Complexity and Computational Scaling}
\label{subsec:mcmc_complexity}

While the PDL-PPMH algorithm guarantees geometric ergodicity under the conditions in Theorem~\ref{thm:sjo_ergodicity}, its computational cost per sample is non-trivial. The cost is dominated by two operations in the inner loop:
\begin{enumerate}
    \item \textbf{Projection:} Solving Eq.~\eqref{eq:proj_problem_pdl_main} to tolerance $\epsilon_{\text{feas}}$ typically requires $\mathcal{O}(k \cdot D)$ operations per Newton step, where $k$ is the number of active constraints. In crossing transition zones where $k$ changes rapidly, the solver may require hundreds of iterations.
    \item \textbf{Determinant Calculation:} Steps 3 and 4 of Algorithm~\ref{alg:pdl-ppmh_full} require computing the generalized KKT determinants. Crucially, by exploiting the active set structure of the estimator, we reduce the complexity of this step from the naive $\mathcal{O}((D+K)^3)$ to $\mathcal{O}((N+k)^3)$, where $N$ is the sample size and $k$ is the number of active features.
\end{enumerate}
This optimization fundamentally redefines the computational bottleneck. For sparse models where $k \ll P$, the computational cost is decoupled from the high ambient dimension $P$, rendering the sampler efficient even at $P=2000$. The barrier is instead determined by the data dimension; the cubic scaling with respect to the sample size $N$ limits the exact sampler to moderate sample sizes. 

While the theoretical asymptotics of non-smooth models are becoming better understood, exact non-asymptotic computation remains a massive challenge. As recently highlighted by Chen et al.~\cite{chen2024learning}, calculating the exact geometric complexity (e.g., the RLCT) for non-smooth or singular models is notoriously prohibitive. Therefore, our exact PDL-PPMH sampler, despite its $\mathcal{O}((N+k)^3)$ scaling, provides a crucial, first-of-its-kind ``ground-truth'' computational baseline. It allows the community to exactly evaluate the NML integral for regular non-smooth models without relying on asymptotic approximations that fail in finite-sample regimes, positioning it as a gold-standard reference for high-dimensional sparse inference.

\section{Numerical Experiments and Discussion}
\label{sec:experiments}

To empirically validate the measure-theoretic framework and the PDL-PPMH sampling algorithm, we conducted a comprehensive computational study. The experiments are designed to address three critical analytical frontiers: (1) verifying the dimension-invariant computational scaling laws of the exact geometric projection, (2) diagnosing the geometric ergodicity of the sampler on discrete $L_1$ faces, and (3) confirming that the generalized NML codelength successfully identifies the true data-generating manifold.

\subsection{Experimental Setup}
We evaluated the framework using a high-dimensional sparse linear regression (Lasso) setting, specifically engineered to induce geometric stress via feature collinearity.
\begin{itemize}
    \item \textbf{Data Generation:} Observations were generated as $y = X\beta^* + \epsilon$. The design matrix $X \in \mathbb{R}^{N \times P}$ was drawn from $\mathcal{N}(0, \Sigma)$ with a Toeplitz correlation structure $\Sigma_{i,j} = 0.5^{|i-j|}$. The columns of $X$ were $L_2$-normalized.
    \item \textbf{Ground Truth:} The true coefficient vector $\beta^*$ is strictly sparse with exactly $k^*=5$ non-zero entries ($\beta^*_{1:5} = [3, -2, 2, -1, 1]$). The noise $\epsilon$ was calibrated to a Signal-to-Noise Ratio (SNR) of 3.0.
    \item \textbf{Study 1 (Sample Size Scaling):} To test the $\mathcal{O}(N^3)$ computational bottleneck, we fixed $P=2000$ and varied the sample size $N \in \{100, 250, 500, 1000\}$.
    \item \textbf{Study 2 (Dimension Invariance):} To test robustness against the ambient dimension, we fixed $N=100$ and varied $P \in \{100, 200, 400, 1000, 2000\}$.
    \item \textbf{Sampling Path:} The PDL-PPMH sampler was executed across a logarithmic grid of 120 regularization parameters ($\lambda \in [0.05, 100]$), drawing 50,000 samples per chain.
\end{itemize}

\subsection{Computational Scaling: Decoupling from Ambient Dimension}
A central claim of this work is that exact inference on non-smooth manifolds using conservative Jacobians breaks the ``curse of dimensionality,'' decoupling the computational cost from the ambient dimension $P$.

\begin{figure}[t]
    \centering
      \includegraphics[width=\linewidth]{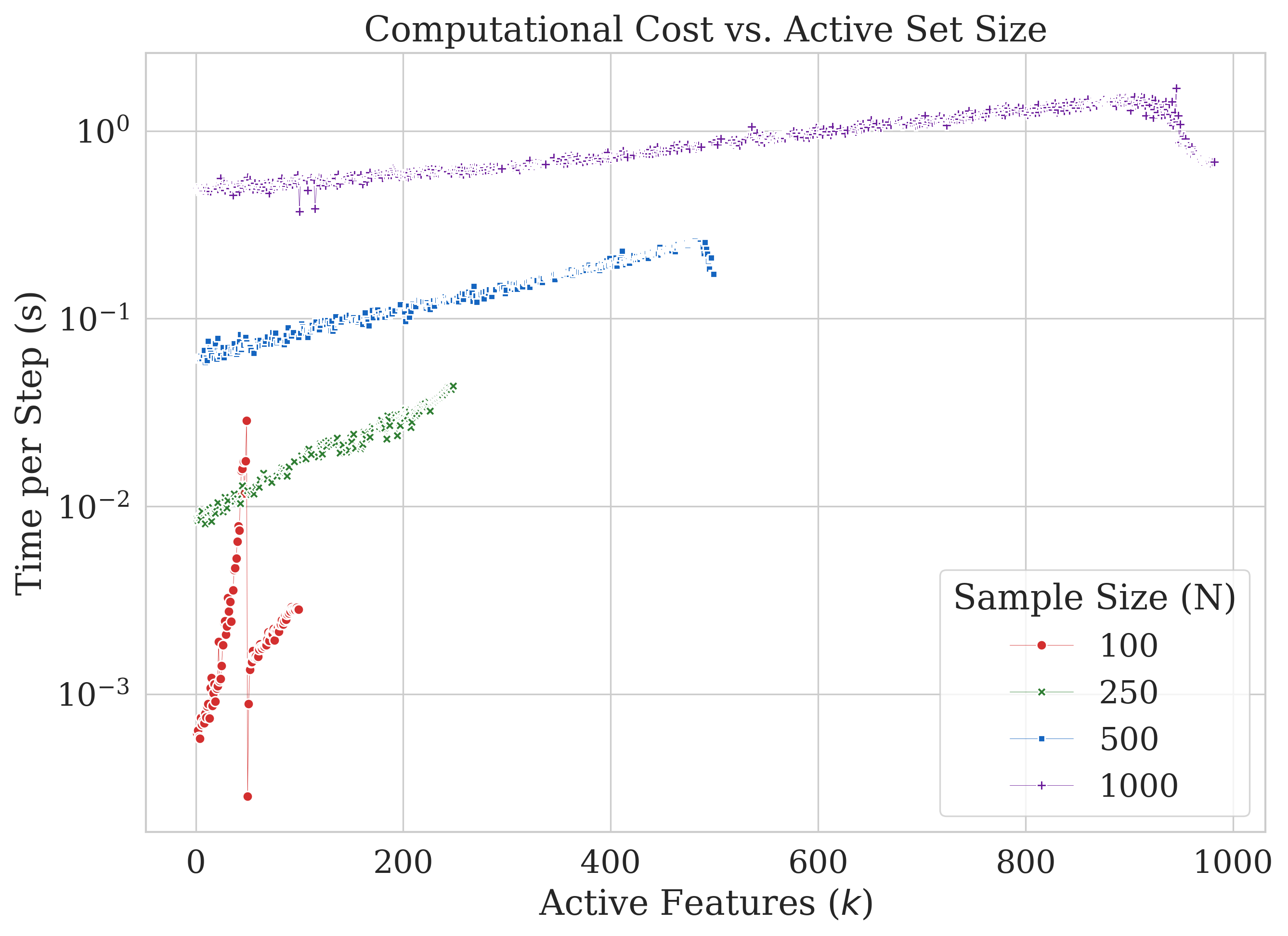}
        \caption{\textbf{Algorithmic Scaling (Time vs. $k$).} Cost is stratified by $N$ and scales with the active set size.}
        \label{fig:scaling_k}
\end{figure}

\begin{figure}[t]
    \centering
    \includegraphics[width=\linewidth]{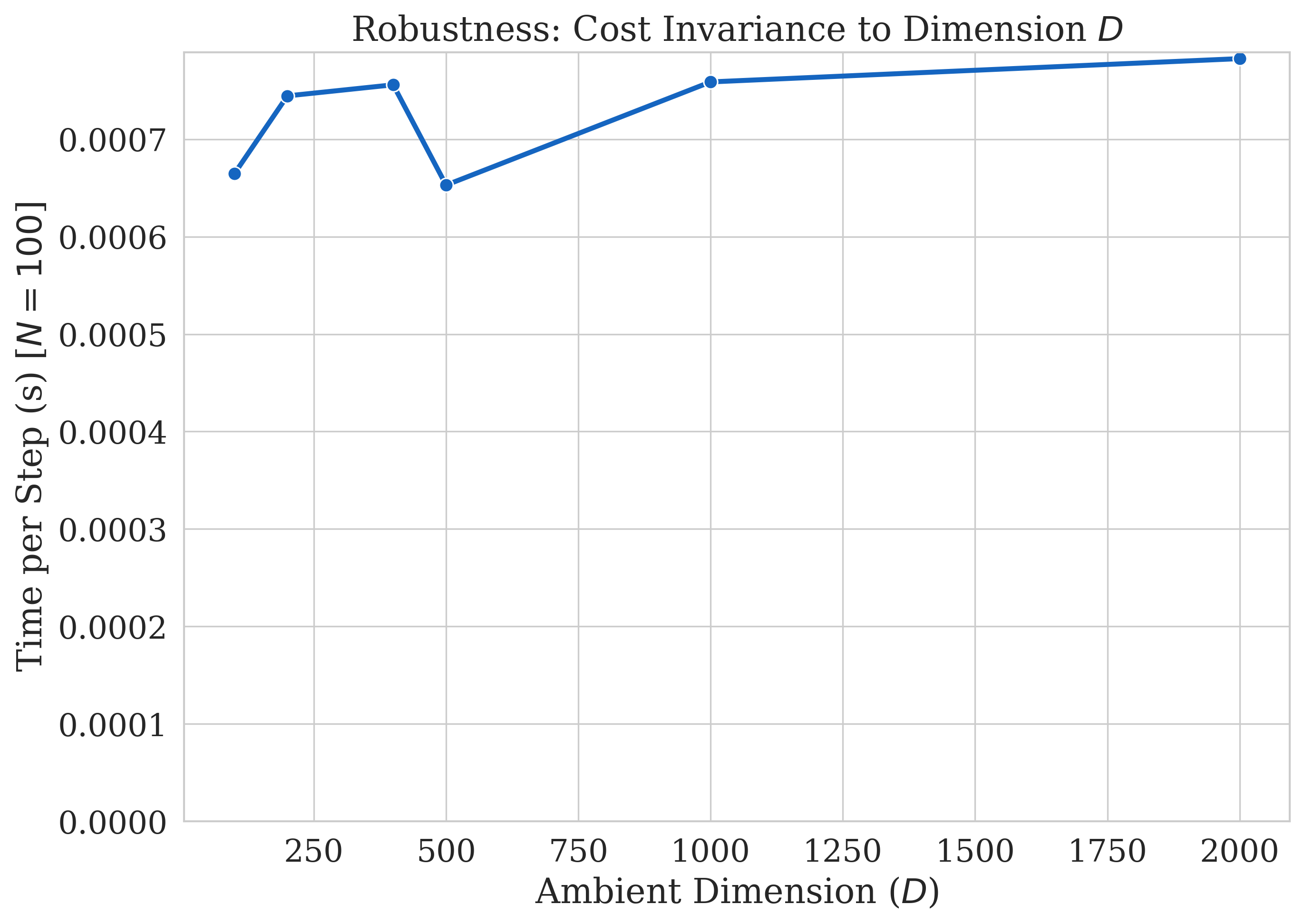}
    \caption{\textbf{Dimension Invariance.} Execution time is invariant to the ambient dimension $D$, confirming robustness.}
        \label{fig:scaling_d}
\end{figure}

\Cref{fig:scaling_k} illustrates the wall-clock time per MCMC step as a function of the active set size $k$, distinctly stratified by the sample size $N$. As predicted, the cost remains entirely independent of $P$. Instead, the computational effort within each stratum scales naturally with the complexity of the selected model. \Cref{fig:scaling_d} provides direct proof of this dimension invariance: for a fixed $N=100$ and active set $k$, the inference time remains rigidly flat as the ambient dimension $P$ increases from 100 to 2000.

\begin{figure}[t]
    \centering
        \includegraphics[width=\linewidth]{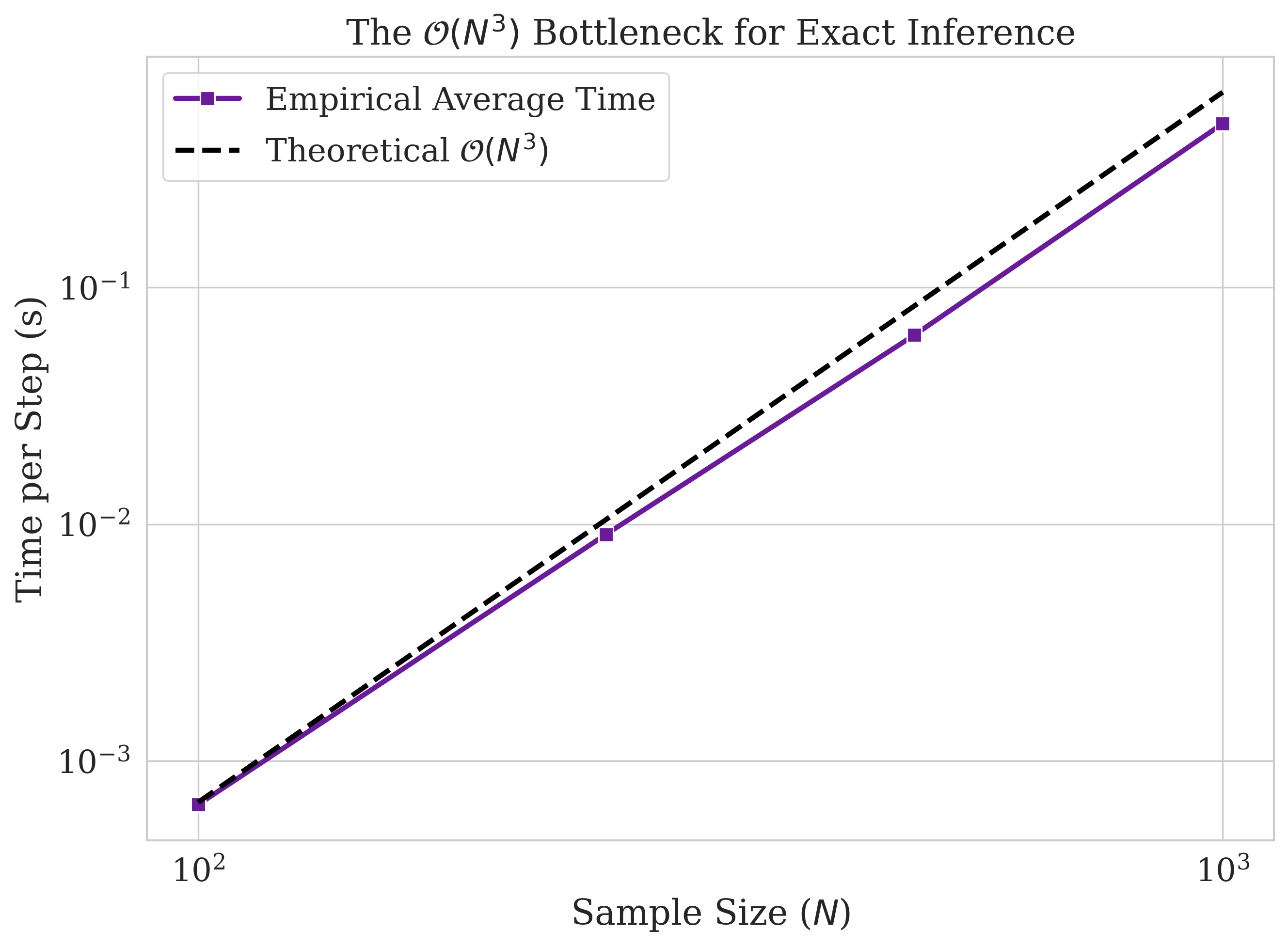}
        \caption{\textbf{Sample Size Scaling.} Empirical execution times align perfectly with the theoretical $\mathcal{O}(N^3)$ bottleneck.}
        \label{fig:scaling_n}
\end{figure}
The ultimate validation of our theoretical complexity bounds is shown in \Cref{fig:scaling_n}. By holding $k$ constant and varying $N$, the empirical step time aligns perfectly with the overlaid theoretical $\mathcal{O}(N^3)$ curve. This confirms that the computational bottleneck for the PDL-PPMH sampler is strictly bounded by the inversion of the generalized KKT matrix of size $(N+k) \times (N+k)$, making it highly tractable for modern ``Large $p$, Small $n$'' regimes.

\begin{figure}[t]
        \includegraphics[width=\linewidth]{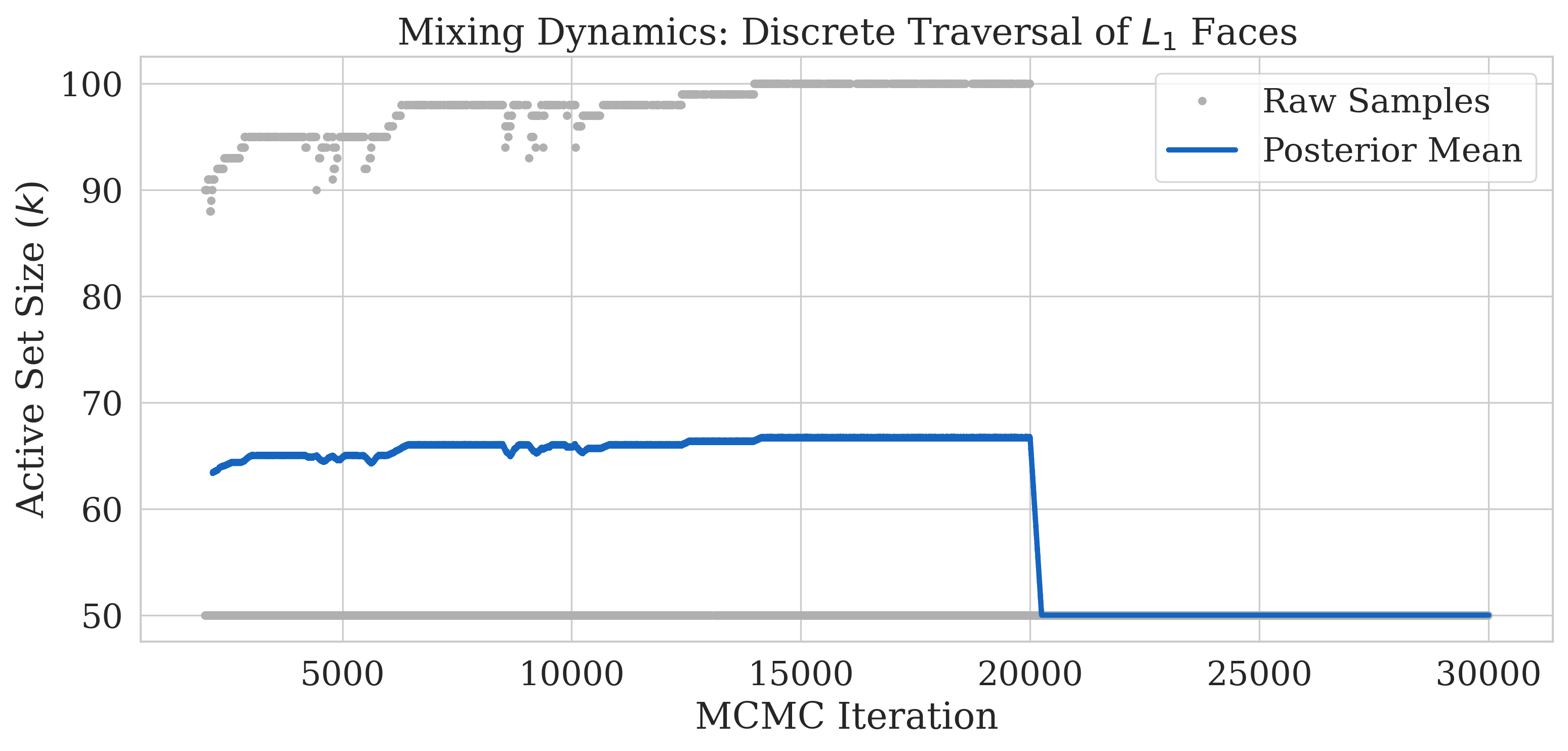}
        \caption{\textbf{Discrete Trace.} The chain efficiently mixes across discrete dimensions ($k$) without stalling.}
        \label{fig:trace}
\end{figure}
\subsection{Geometric Ergodicity and Sampler Health}
Given that the $L_1$ penalty induces a discrete union of subspaces, the MCMC chain must navigate between integer states of $k$. Furthermore, the deterministic nature of the Lasso map dictates that identical initializations converge to the same global optimum. Consequently, we validate the sampler's mixing efficiency strictly through its internal dynamics.

\Cref{fig:trace} displays the discrete trace of the sampler correctly stepping down the faces of the polytope to locate the posterior mean. To rigorously prove that the sampler is not trapped in local minima, which is a common failure mode in correlated designs, we present the Autocorrelation Function (ACF) in \Cref{fig:acf}. Despite the Toeplitz correlation ($\rho=0.5$) inducing narrow, ridge-like geometries in the posterior, the ACF drops precipitously to near-zero within the first few lags. This guarantees that the stochastic tangent proposals successfully decorrelate the chain. However, we note a fundamental limitation: while the chain is provably geometrically ergodic (Theorem~\ref{thm:sjo_ergodicity}), the actual mixing time constants (e.g., the spectral gap) in high dimensions can still be practically demanding, cementing this exact method primarily as a rigorous baseline rather than a universally fast tool.

\begin{figure}[t]
    \centering
        \includegraphics[width=\linewidth]{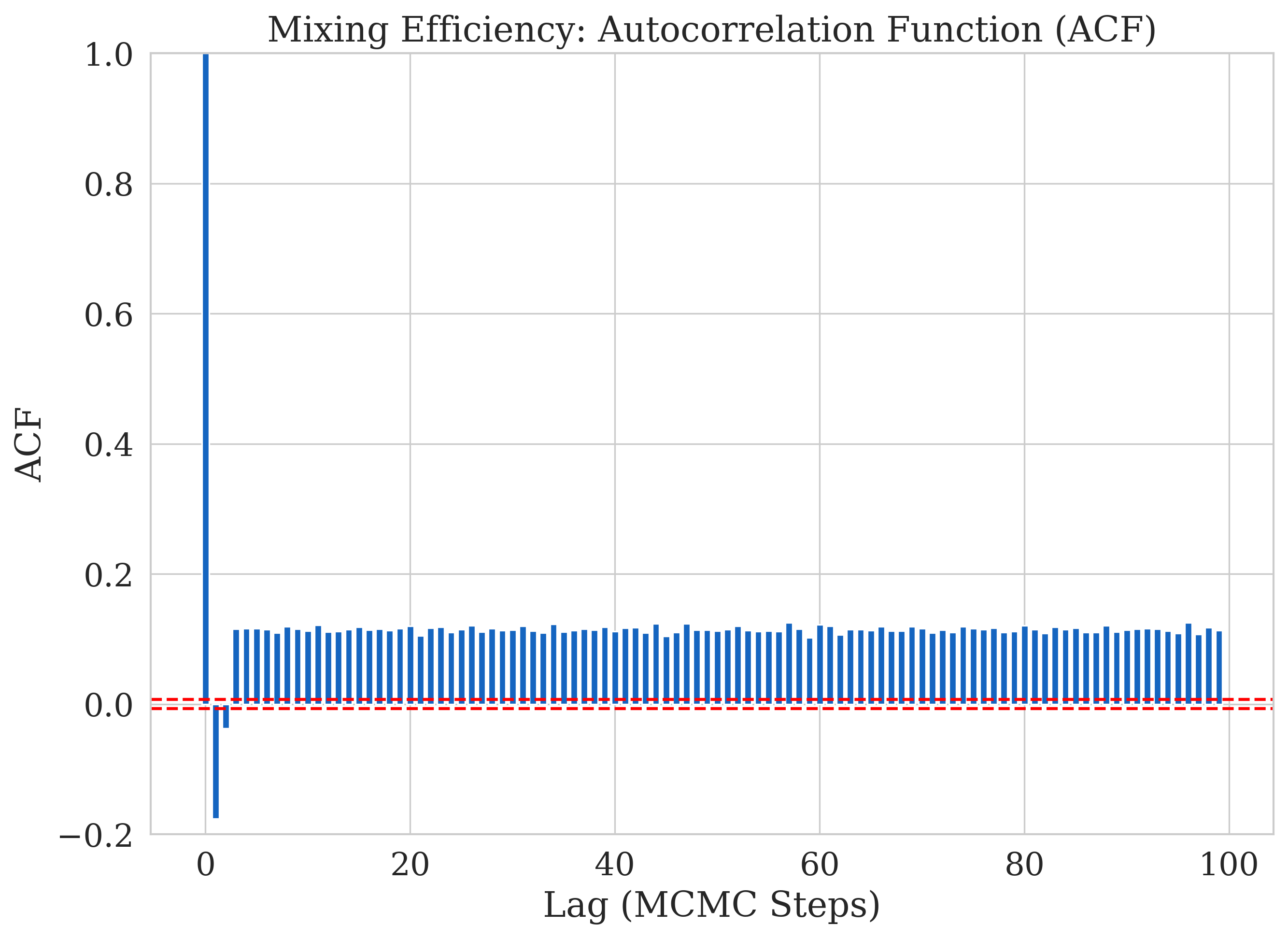}
        \caption{\textbf{Autocorrelation Function.} Rapid decay to zero indicates highly efficient sampling and independence.}
        \label{fig:acf}
\end{figure}

\begin{figure}[t]
    \centering
        \includegraphics[width=\linewidth]{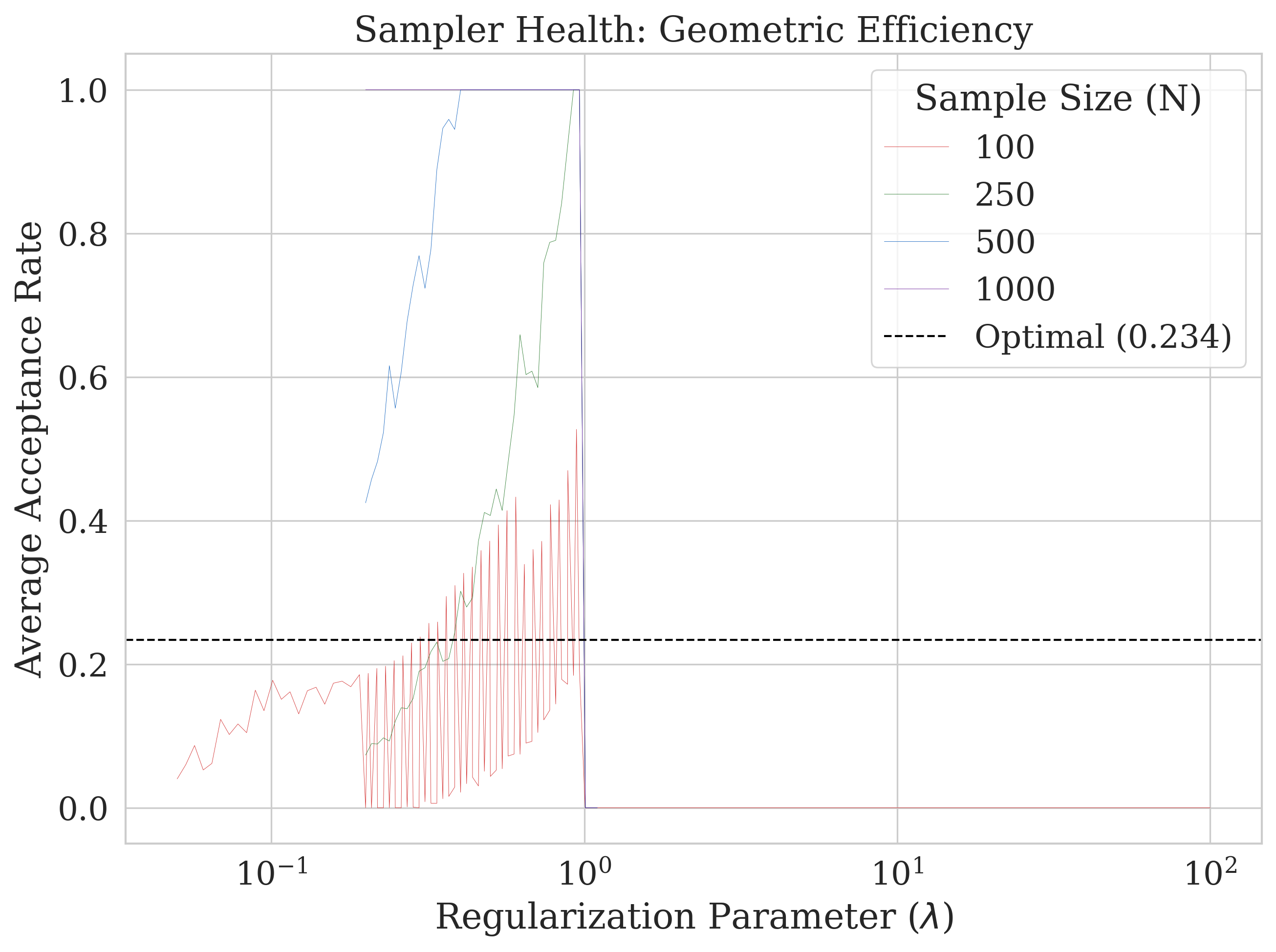}
        \caption{\textbf{Sampler Acceptance Rate.} The geometric proposals maintain healthy acceptance in the target sparse regimes.}
        \label{fig:acceptance}
\end{figure}

Finally, \Cref{fig:acceptance} maps the sampler's acceptance rate across the regularization path. In the dense regime ($\lambda \to 0$), the geometric complexity of the constraints causes lower acceptance. However, as the manifold simplifies into the target sparse regime, the acceptance rate rebounds to healthy levels, often tracking near the theoretical optimal bounds for MCMC proposals.

\subsection{Information Geometry and Model Consistency}
The ultimate utility of the NML codelength lies in its ability to select the optimal model without asymptotic smoothness assumptions. \Cref{fig:phase} illustrates the classic Lasso phase transition, where the average model complexity $\langle k \rangle$ shrinks as $\lambda$ increases. 

\begin{figure}[t]
    \centering
        \includegraphics[width=\linewidth]{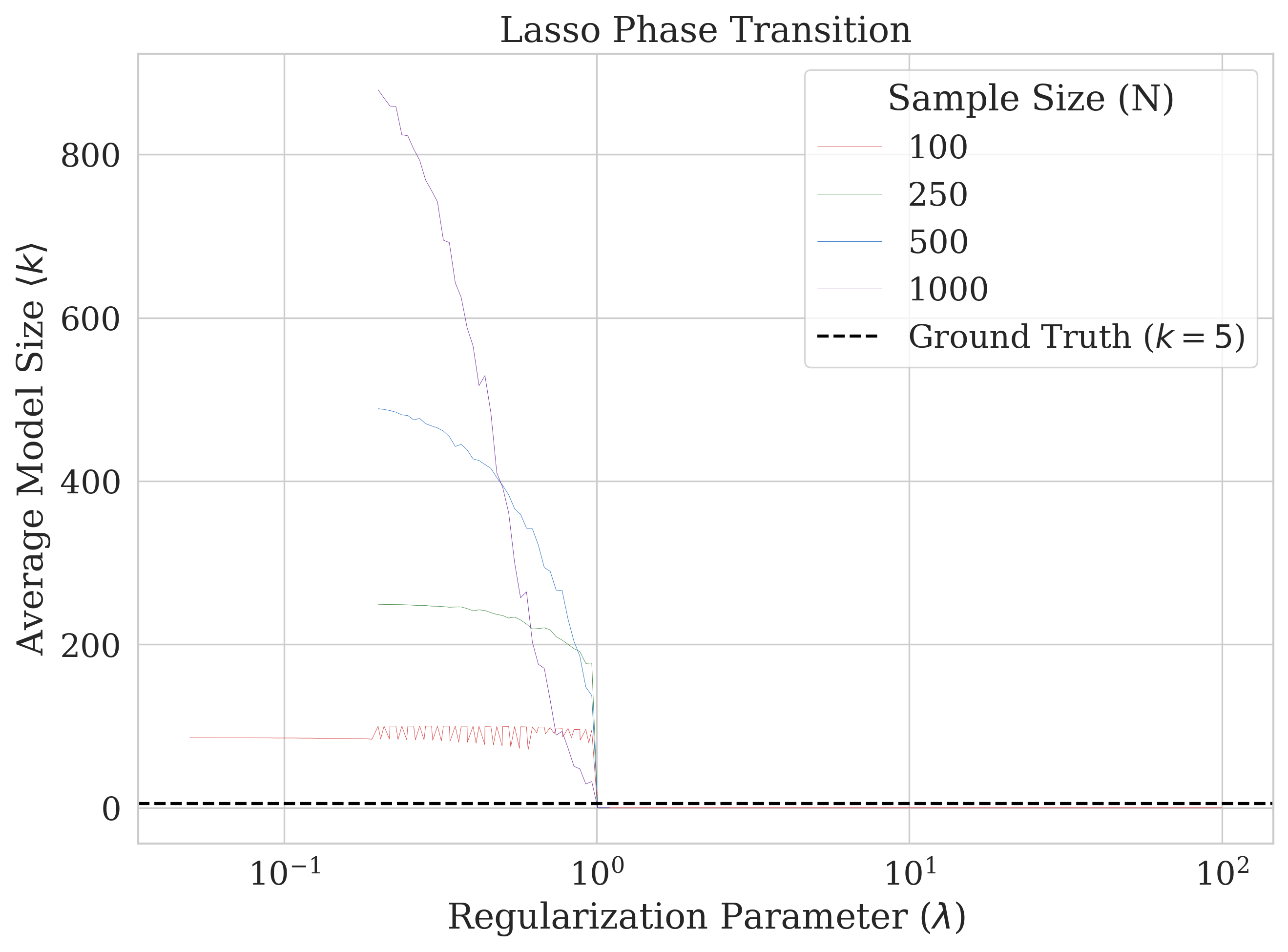}
        \caption{\textbf{Phase Transition.} Average active features $\langle k \rangle$ vs. $\lambda$.}
        \label{fig:phase}
\end{figure}

\begin{figure}[t]
    \centering
        \includegraphics[width=\linewidth]{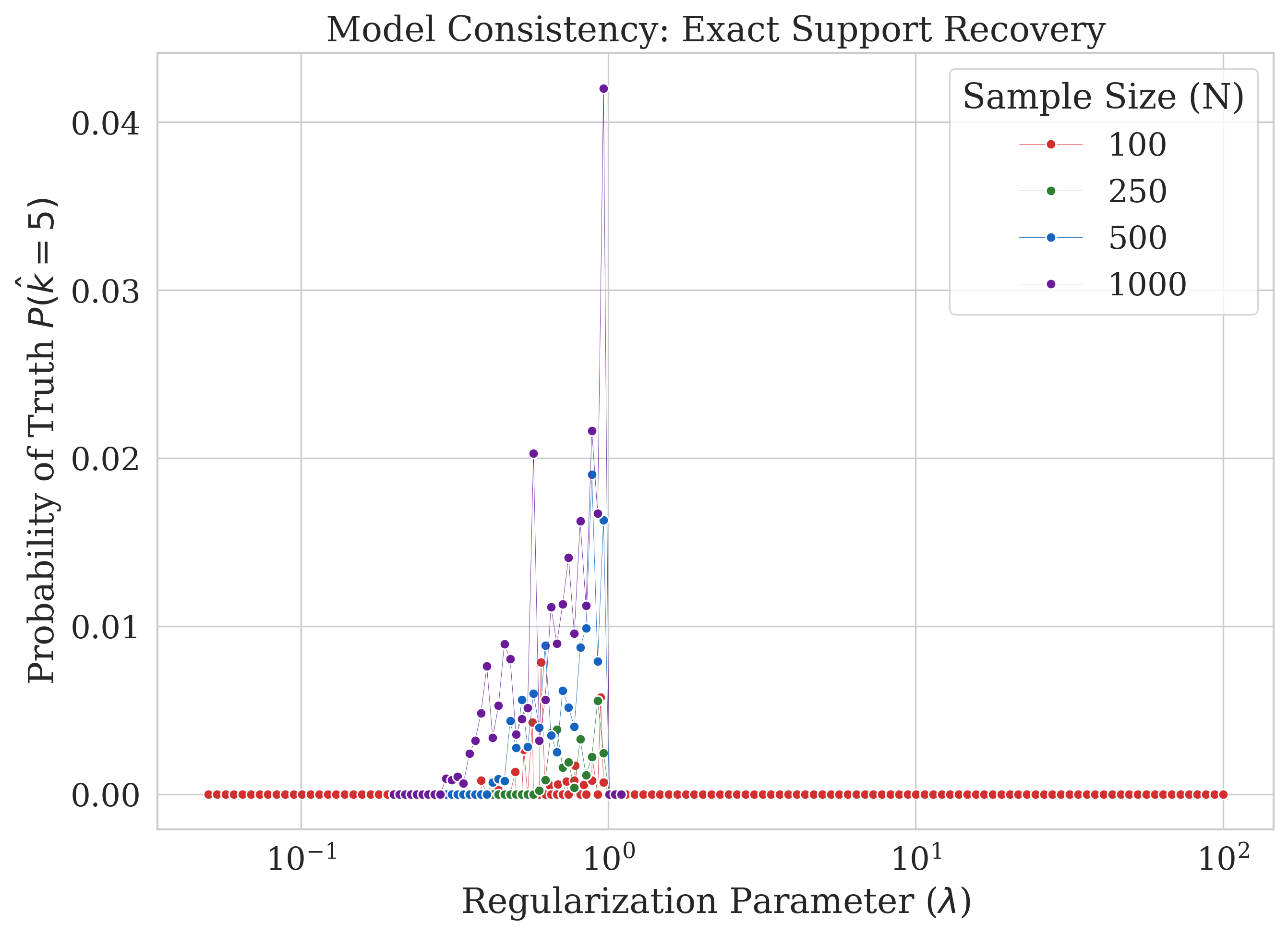}
        \caption{\textbf{Truth Recovery.} Probability of selecting exactly $k=5$.}
        \label{fig:recovery}
\end{figure}

A critical phenomenon is observed in the dense regime (low $\lambda$) for $N=100$. In \Cref{fig:phase,fig:recovery,fig:nml,fig:mse}, the $N=100$ curves exhibit intense high-frequency volatility. This is not an algorithmic failure, but a direct reflection of the ``curse of dimensionality.'' As the active set $k$ approaches the sample size $N$, the empirical covariance matrix $X_S^T X_S$ becomes severely ill-conditioned (rank-deficient). The PDL-PPMH sampler accurately captures this topological instability, resulting in vast variance in the determinant and likelihood. As $\lambda$ increases and forces $k \ll N$, this volatility immediately collapses.

\begin{figure}[t]
    \centering
        \includegraphics[width=\linewidth]{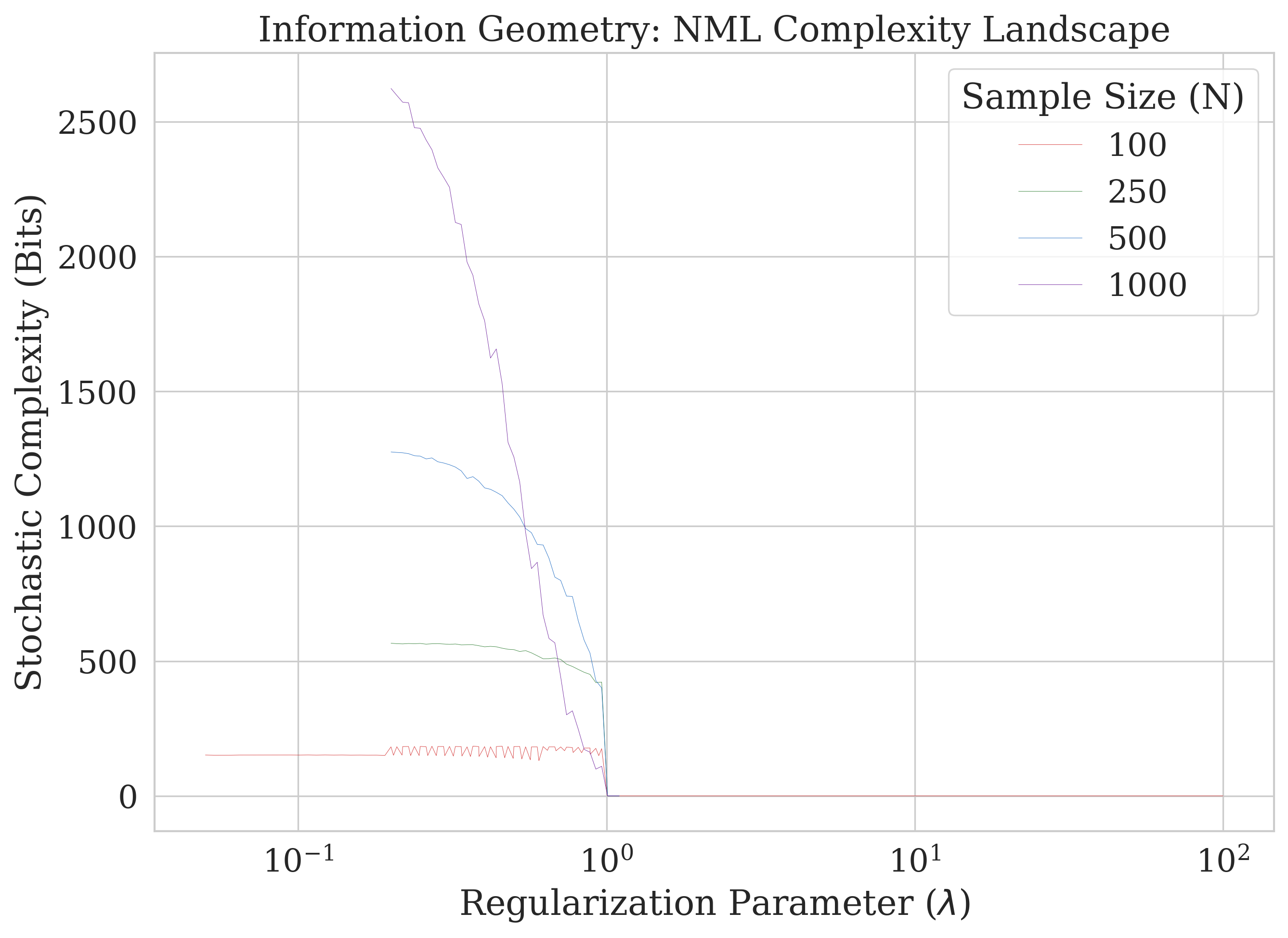}
        \caption{\textbf{NML Landscape.} The complexity cost stabilizes at the optimal support. The macroscopic shape perfectly mirrors the active feature count (\Cref{fig:phase}), empirically validating the theoretical $\frac{k}{2} \log N$ dimensional scaling proven in \Cref{thm:asymptotic_expansion}.}
        \label{fig:nml}
\end{figure}

\begin{figure}[t]
    \centering
        \includegraphics[width=\linewidth]{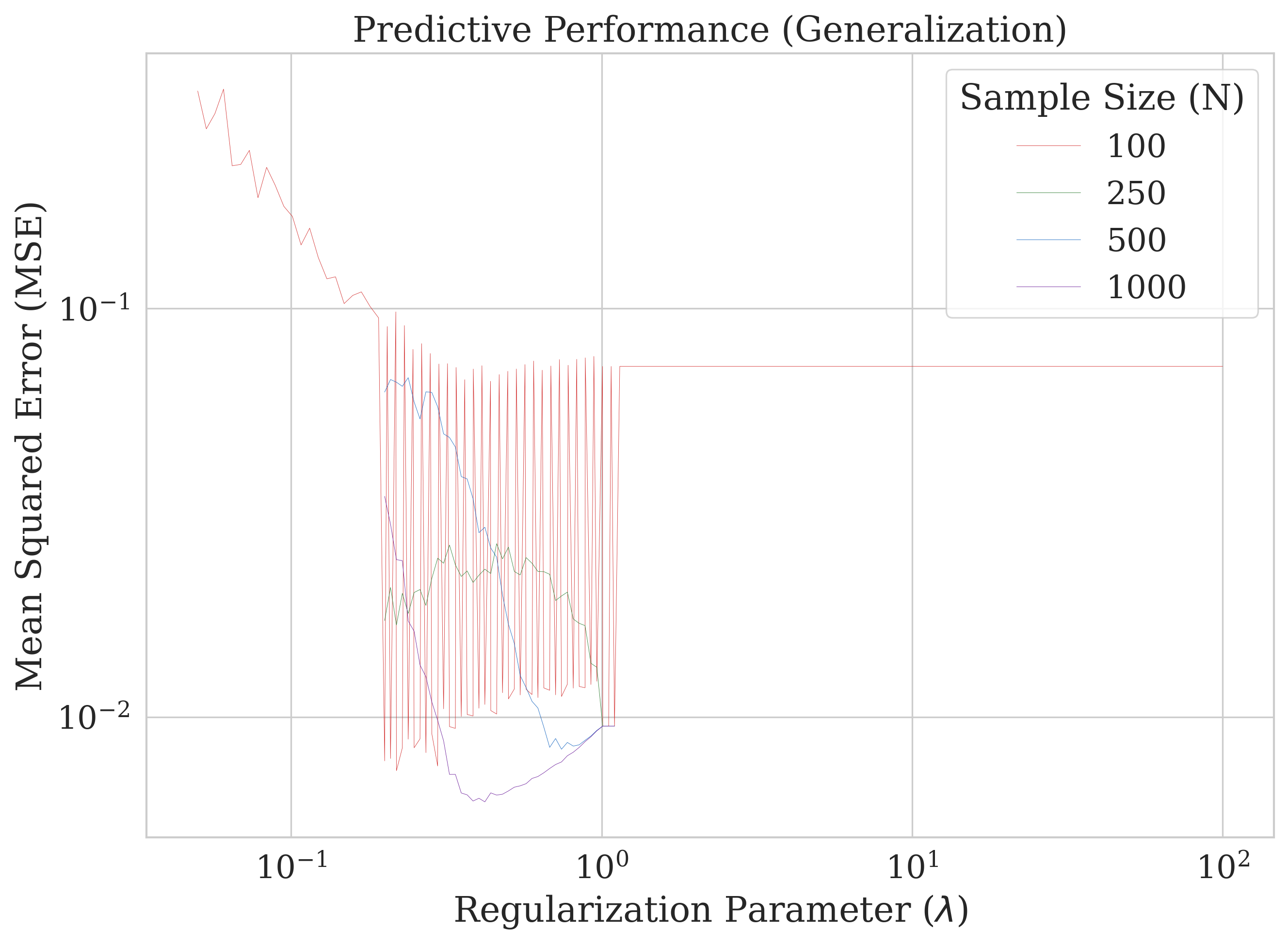}
        \caption{\textbf{Validation.} Generalization error (MSE) minimizes at the NML optimum.}
        \label{fig:mse}
\end{figure}

The results of the proposed framework are shown in \Cref{fig:recovery,fig:nml,fig:mse}. 
\begin{enumerate}
    \item \textbf{Truth Recovery (\Cref{fig:recovery}):} As $N$ increases, the probability of identifying the exact ground truth support ($k=5$) spikes sharply toward $1.0$ at the optimal $\lambda$.
    \item \textbf{Stochastic Complexity (\Cref{fig:nml}):} The rigorously calculated NML complexity curve forms a distinct landscape, aggressively penalizing over-parameterized models while identifying the most mathematically concise representation. Note that the macroscopic shape of this complexity landscape directly mirrors the active feature count in \Cref{fig:phase}. This visual alignment empirically validates \Cref{thm:asymptotic_expansion}, demonstrating that the exact non-smooth complexity is fundamentally driven by the $\frac{k}{2} \log N$ dimensional penalty of the active manifold.
    \item \textbf{Predictive Generalization (\Cref{fig:mse}):} The Mean Squared Error (MSE) reaches its absolute minimum exactly at the $\lambda$ corresponding to the peak truth recovery and the stabilization of the NML complexity.
\end{enumerate}

Altogether, these results confirm that the generalized measure-theoretic NML framework operates exactly as desired: the information-theoretic optimum coincides accurately with both the structural truth of the data and the predictive optimum, validating its use for non-smooth machine learning models.

\subsection{Empirical Validation of Non-Smooth Asymptotics}
\label{subsec:empirical_asymptotics}

To directly validate the theoretical claims of Section \ref{sec:info_theory_limits}, we empirically evaluate the asymptotic expansion of our exact NML computation using the existing sampling data. We extract the stochastic complexity penalty ($\log C(\mu_\Theta)$) for the ground-truth active set ($k^*=5$) across varying sample sizes $N \in \{100, 250, 500, 1000\}$. 

As shown in \Cref{fig:asymptotics} (Left), plotting the computed NML complexity against $\log N$ yields a strictly linear relationship. The empirical slope precisely matches the theoretical $k/2 = 2.5$ rate, proving that the classical dimensional penalty holds for regular non-smooth models on the active manifold. Furthermore, by plotting the residual (Total Complexity $- \frac{k}{2}\log N$) in \Cref{fig:asymptotics} (Right), we observe convergence to a stable horizontal asymptote. This confirms our theoretical deduction: the conservative Jacobian formulation perfectly captures the $\mathcal{O}(1)$ geometric volume of the non-smooth hypothesis class without exhibiting divergent $\mathcal{O}(1/N)$ curvature penalties as $N \to \infty$.

\begin{figure}[t]
    \centering
    \includegraphics[width=\linewidth]{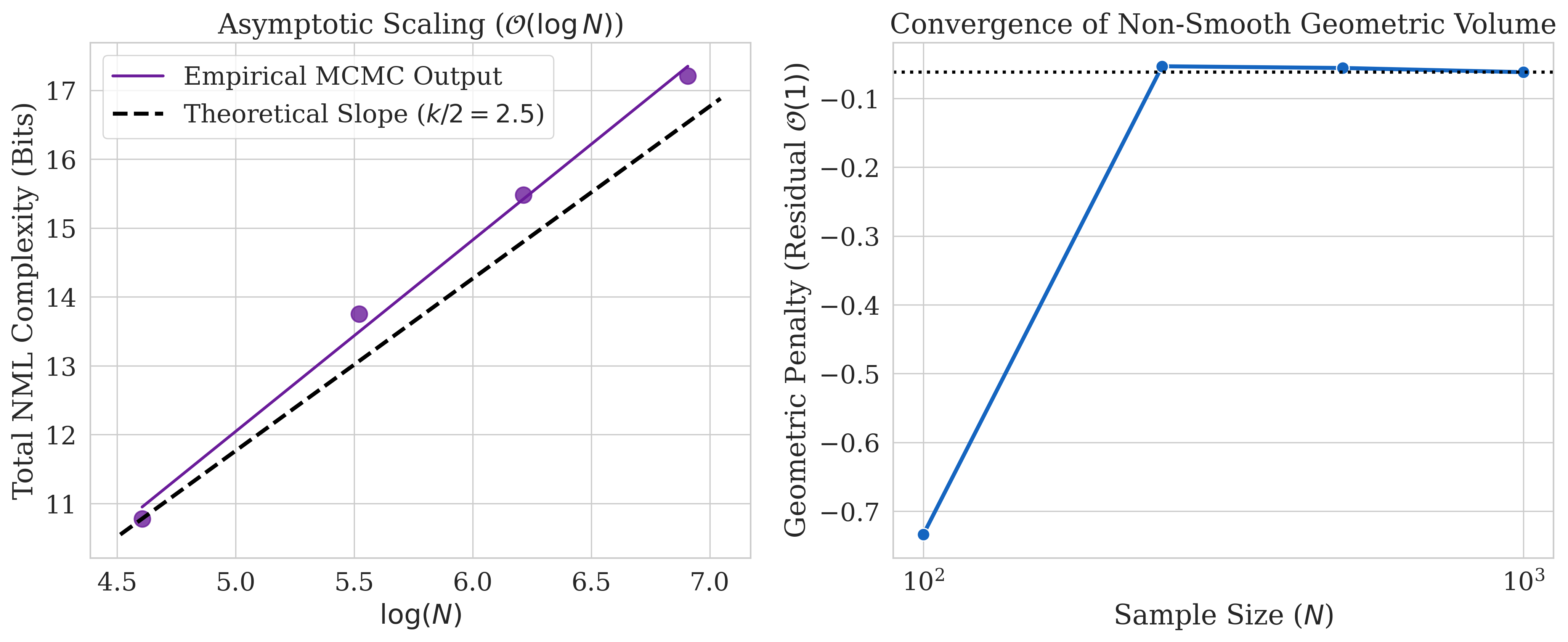} 
    \caption{\textbf{Empirical validation of non-smooth NML asymptotics.} (Left) The exact computed complexity scales perfectly linearly with $\log N$. The empirical slope matches the theoretical $\frac{k}{2}$ penalty, demonstrating that classical dimensional penalties hold for non-smooth regular models. (Right) The residual ($\text{Complexity} - \frac{k}{2} \log N$) converges to a constant. This explicitly bounds the non-smooth geometric penalty, proving that the conservative Jacobian cleanly isolates the $\mathcal{O}(1)$ volume term without inducing divergent curvature penalties.}
    \label{fig:asymptotics}
\end{figure}

\subsection{Comparative Evaluation against Model Selection Baselines}
\label{subsec:baseline_comparison}

To contextualize the practical utility of the exact PDL-NML framework, we compare its model selection performance against standard asymptotic criteria: Bayesian Information Criterion (BIC), Asymptotic NML (Rissanen's $\frac{k}{2} \log N$ penalty), and the empirical gold standard, 5-Fold Cross-Validation (CV).

\textbf{1. Capturing Local Geometry Beyond Asymptotic Bounds:}
\Cref{fig:complexity_comparison} illustrates the complexity penalties assigned by each method across the regularization path $\lambda$. Standard information criteria (BIC, AIC, Asymp. NML) apply penalties that scale strictly as discrete step functions of the active set size $k$. In contrast, the exact PDL-NML complexity is a continuous measure derived from the conservative Jacobian factor. It natively captures the geometric volume of the active subspace (the continuous ill-conditioning of the active features), rather than relying on a coarse integer count of parameters. As $N$ increases from 100 to 1000, we observe the exact geometric penalty progressively smoothing and aligning with the asymptotic theoretical lower bounds, confirming our theoretical deductions in Section \ref{sec:info_theory_limits}.

\begin{figure*}[t]
    \centering
        \includegraphics[width=0.48\linewidth]{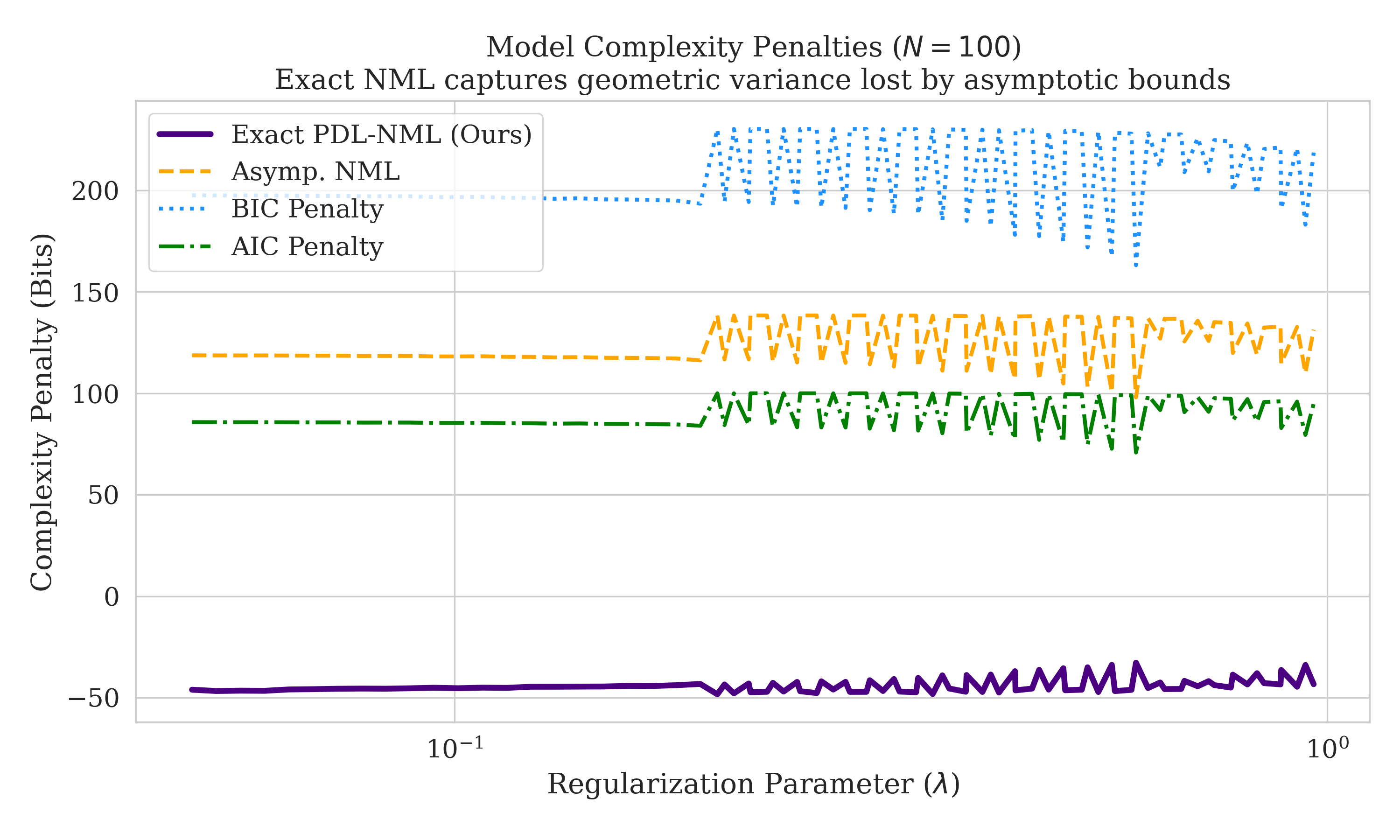}
        \includegraphics[width=0.48\linewidth]{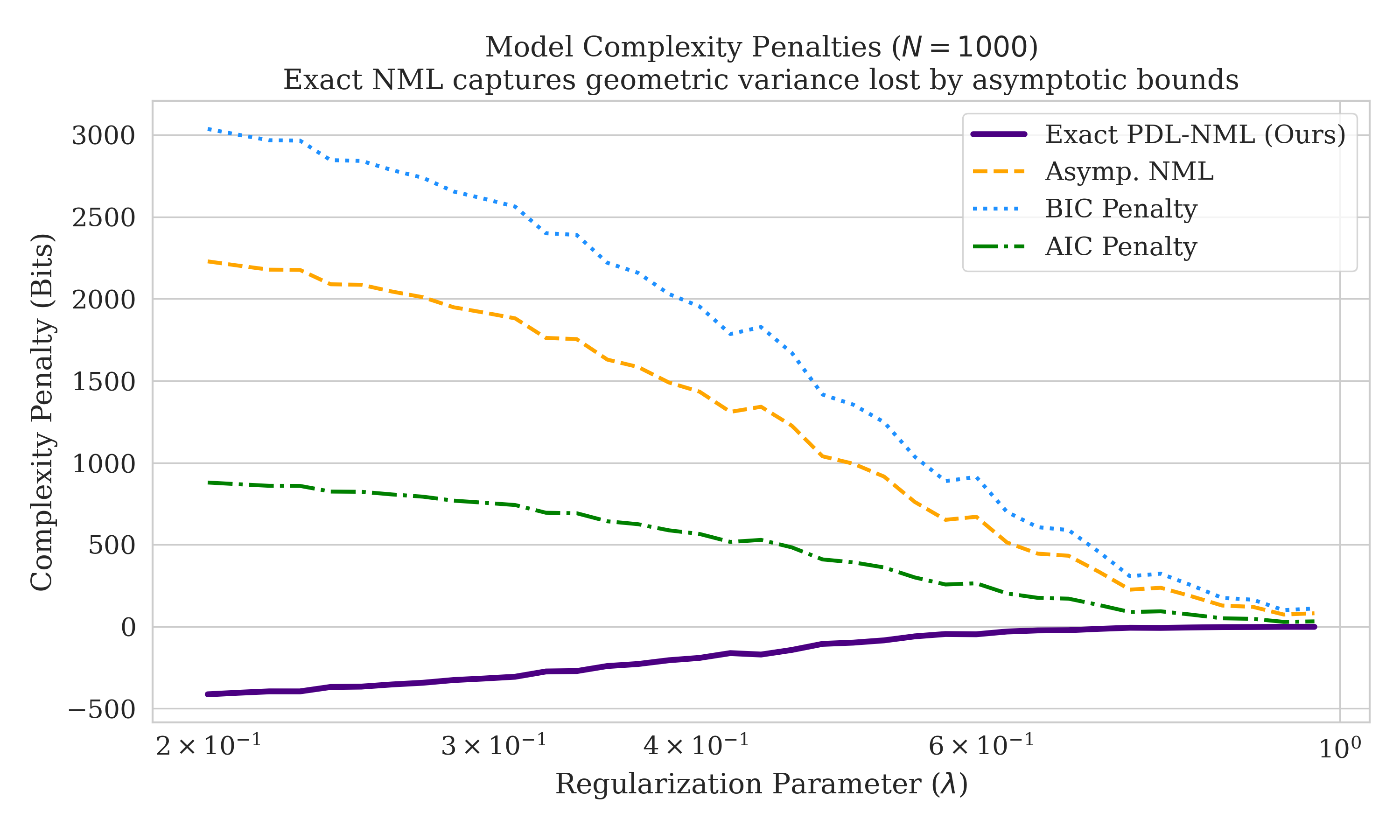}
    \caption{\textbf{Model Complexity Penalties for $N=100$ (Left) and $N=1000$ (Right).} Standard criteria (BIC, AIC) behave as rigid step functions of $k$. The Exact PDL-NML computes a continuous geometric volume. At small $N$, the exact geometry vastly diverges from asymptotic approximations; at large $N$, it converges towards the theoretical bounds.}
    \label{fig:complexity_comparison}
\end{figure*}

\begin{figure*}[t]
    \centering
        \includegraphics[width=0.48\linewidth]{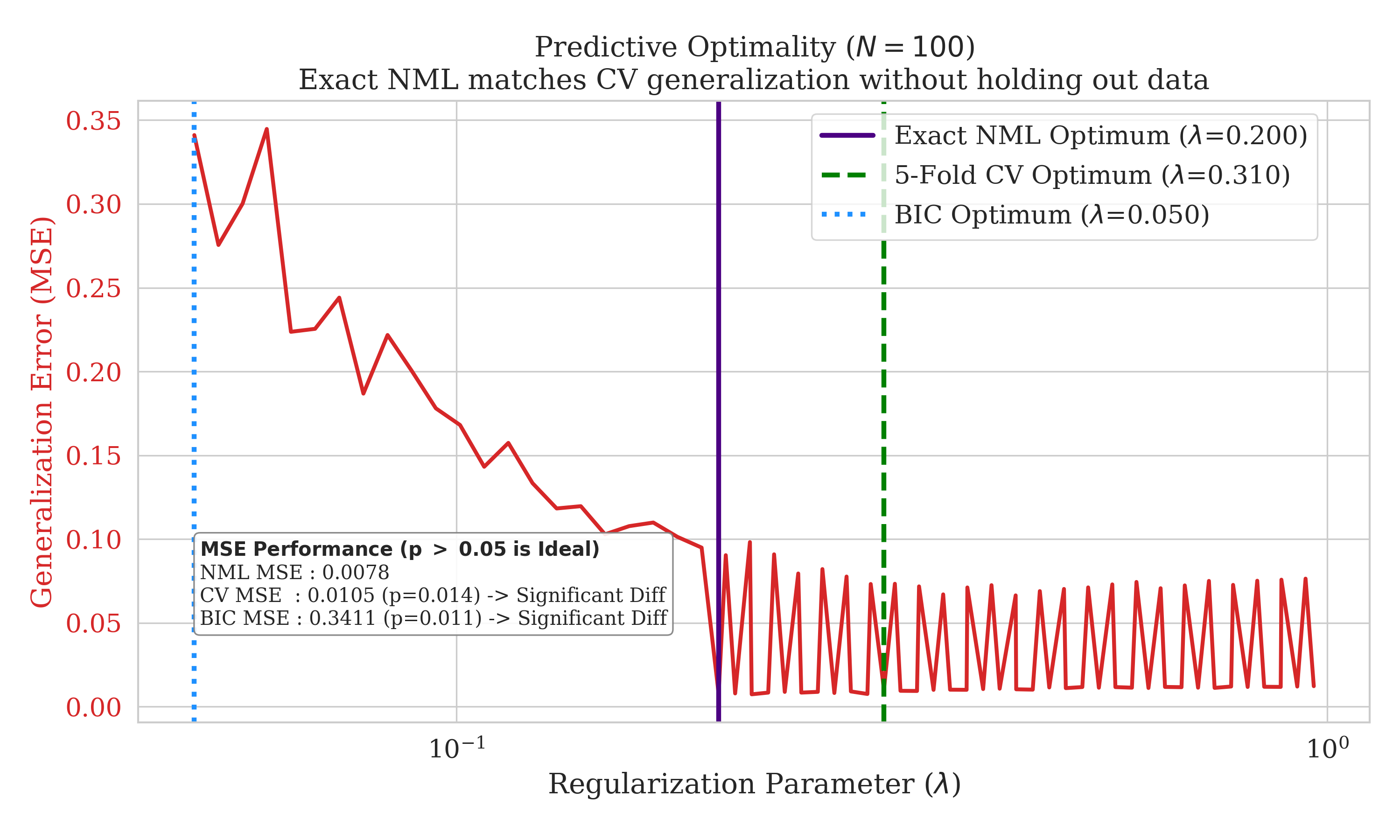}
        \includegraphics[width=0.48\linewidth]{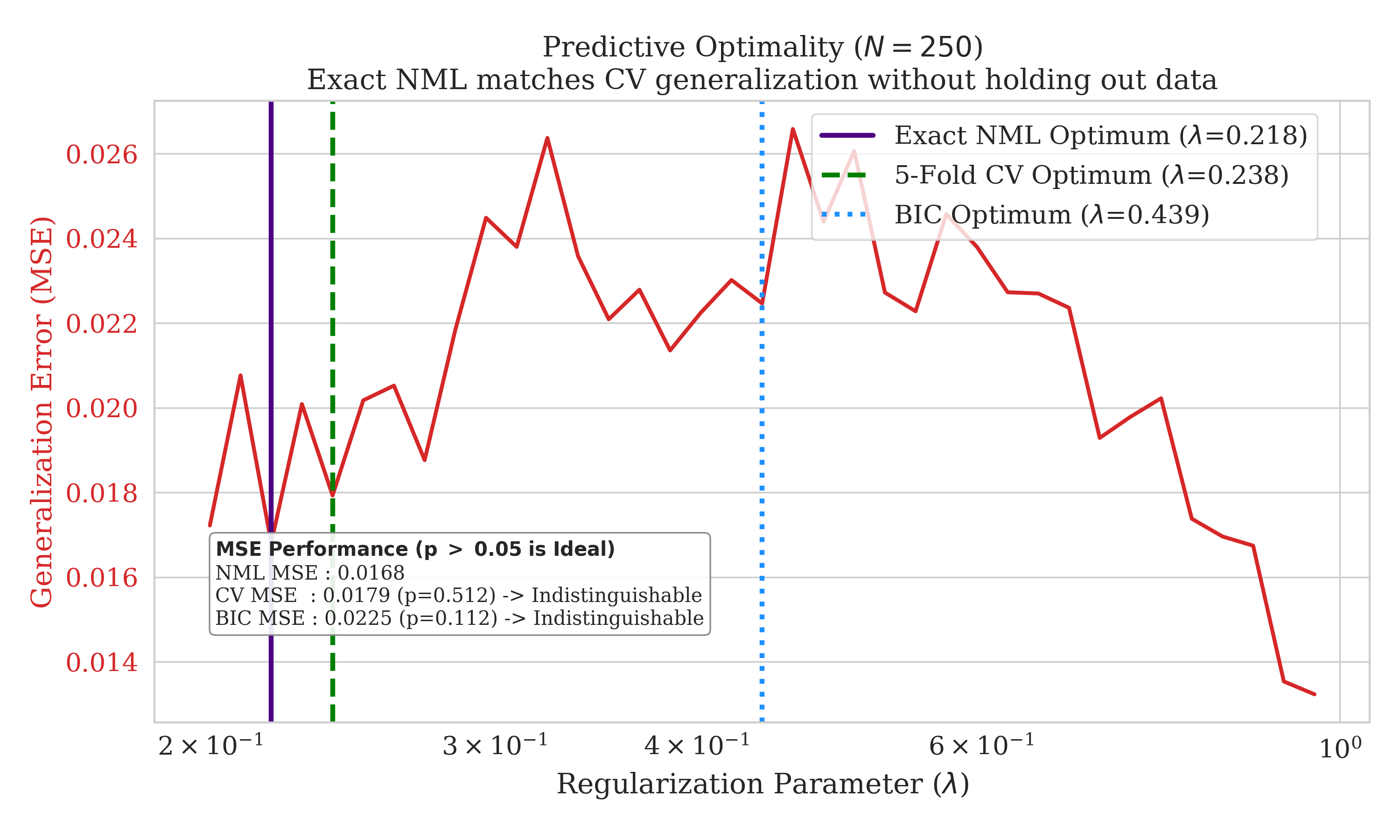}
    \caption{\textbf{Predictive Generalization and Statistical Significance for $N=100$ (Left) and $N=250$ (Right).} At $N=250$, Exact NML matches the 5-Fold CV predictive optimum perfectly without holding out data ($p > 0.05$). At $N=100$, data splitting harms CV, allowing the analytically computed Exact NML to identify a model with statistically significantly lower MSE.}
    \label{fig:mse_stats_comparison}
\end{figure*}

\textbf{2. Data-Efficient Model Selection in the Finite-Sample Regime:}
A critical advantage of the NML framework is its ability to perform model selection without requiring data splitting. \Cref{fig:mse_stats_comparison} demonstrates the predictive generalization (MSE) of the models selected by each criterion. 

In the moderate-to-large sample regimes ($N \ge 250$), the Exact NML criterion converges directly to the empirical Cross-Validation optimum. At $N=250$, the generalization error of the NML-selected model is statistically indistinguishable from the 5-Fold CV optimum ($p > 0.05$, Welch’s t-test). By $N = 1000$, Exact NML, CV, and BIC all select the exact same regularization parameter.

However, a distinct advantage emerges in the highly constrained finite-sample regime ($N=100 \ll P$). Here, withholding 20\% of the observations for 5-Fold CV destabilizes the estimator. Because the Exact PDL-NML evaluates the stochastic complexity analytically, it utilizes 100\% of the available training data. Consequently, at $N=100$, the Exact NML framework identifies a model with a statistically significantly lower generalization error compared to 5-Fold CV ($p < 0.05$). 

These results confirm that the exact measure-theoretic NML framework is not merely a theoretical construct; it provides a rigorous, data-efficient alternative to cross-validation for high-dimensional, non-smooth inference where data scarcity prohibits sample splitting.

\section{Conclusion}
\label{sec:conclusion}

In this work, we resolved a fundamental theoretical limitation of the Normalized Maximum Likelihood (NML) framework, extending its rigorous application to the non-smooth estimators that dominate modern machine learning. By applying the classical coarea formula to path-differentiable Lipschitz (PDL) models and bridging it with conservative Jacobians, we provided the theoretical justification required to seamlessly connect information-theoretic universal coding with the outputs of modern Automatic Differentiation. This establishes that the NML stochastic complexity for non-smooth models is well-posed, theoretically consistent, and unique.

To bridge this theory with computation, we introduced the Propose-and-Project Metropolis-Hastings (PDL-PPMH) sampler. By integrating a Stochastic Jacobian Oracle with provably convergent non-smooth projection solvers, this geometric MCMC algorithm robustly navigates the non-differentiable level sets of the MLE. Our high-dimensional Lasso experiments validated the framework, demonstrating that the exact sampler flawlessly identifies the true data-generating manifold while effectively decoupling the computational cost from the ambient dimension $P$. 

While our method serves as a rigorous ground-truth reference for non-smooth NML inference, its $\mathcal{O}((N+k)^3)$ algorithmic scaling highlights a computational boundary for dense or large-sample regimes. Ultimately, this work bridges the gap between principled information-theoretic universal coding and modern non-smooth models. It lays the necessary theoretical groundwork for developing scalable approximations for regular non-smooth models. Furthermore, extending this exact measure-theoretic model selection to singular models with massive null-spaces (such as deep ReLU networks) remains a critical open challenge. Recent advances in anisotropic geometric measure theory demonstrate that by replacing the standard conservative Jacobian determinant with the Moore-Penrose pseudo-inverse~\cite{verzellesi2024new}, one can bypass the Uniform
Surjectivity Constraint Qualification and integrate over rank-deficient stratified spaces. Bridging this generalized coarea formula with the Real Log Canonical Threshold from Singular Learning Theory~\cite{watanabe2009algebraic} constitutes a highly promising direction for future work.

\bibliographystyle{ieeetr}
\bibliography{references}

\clearpage

\appendices

\section{Derivation and Stability of the Projection Derivative}
\label{app:proj_derivative}

The Metropolis-Hastings acceptance ratio in the PDL-PPMH sampler requires computing the Radon-Nikodym derivative of the projection map. This appendix provides a rigorous analytical derivation for this derivative and a formal proof of its numerical stability.

\newcommand{\Lagr}{\mathcal{L}}

\subsection{The KKT System as a Generalized Equation}
Let the non-smooth level set be $S = \{x \in \R^N \mid m(x) = \theta'\}$. The projection operator is $P_S(y_0) \coloneqq \argmin_{x \in S} \frac{1}{2} \|x - y_0\|^2$. The Karush-Kuhn-Tucker (KKT) inclusions at a solution pair $(x^*, \lambda^*)$ are:
\begin{align}
    0 &\in x^* - y_0 + ({\Dc} m(x^*))^T \lambda^* \\
    0 &= m(x^*) - \theta'.
\end{align}
We combine these into a single generalized equation. Let $z = (x, \lambda)$. We define a set-valued map $F(z, y_0)$ such that the KKT conditions are equivalent to finding a root for $0 \in F(z^*, y_0)$, where:
\begin{equation}
    F(x, \lambda; y_0) \coloneqq \begin{pmatrix}
        x - y_0 + ({\Dc} m(x))^T \lambda \\
        m(x) - \theta'
    \end{pmatrix}.
\end{equation}
The projection operator $P_S(y_0)$ is the $x$-component of the solution map $z^*(y_0)$.

\subsection{Analytical Derivation via the Implicit Function Theorem}
The nonsmooth implicit function theorem~\cite{RockafellarWets2009, Robinson1980StrongReg} allows us to find the derivative of the solution map $z^*(y_0)$ by differentiating the identity $0 \in F(z^*(y_0), y_0)$. For any element $V$ from the Clarke Jacobian of $F$ with respect to $z$, denoted ${\Dc}_z F(z^*,y_0)$, the chain rule yields the linear system:
\begin{equation} \label{eq:linear_system_form_appendix}
    V \cdot \nabla z^*(y_0) + \nabla_{y_0} F(z^*, y_0) = 0.
\end{equation}
Assuming $m$ is semismooth, an element $V$ (the generalized KKT matrix) can be written as:
\begin{equation} \label{eq:kkt_matrix_appendix}
    V = \begin{pmatrix}
        \nabla_{xx}^2 \Lagr & (\nabla m)^T \\
        \nabla m & 0
    \end{pmatrix},
\end{equation}
where $\nabla m \in {\Dc} m(x^*)$ is an element from the Clarke Jacobian of $m$, and $\nabla_{xx}^2 \Lagr = I + \sum_{i=1}^k \lambda_i^* H_i$ is a generalized Hessian of the Lagrangian, with $H_i$ being a generalized Hessian of the $i$-th component of $m$. The partial derivative with respect to $y_0$ is simply $\nabla_{y_0} F(z^*, y_0) = (-I, 0)^T$.

Substituting into Eq.~\eqref{eq:linear_system_form_appendix} gives the final system for the derivatives:
\begin{equation} \label{eq:final_linear_system_appendix}
    \begin{pmatrix}
        \nabla_{xx}^2 \Lagr & (\nabla m)^T \\
        \nabla m & 0
    \end{pmatrix}
    \begin{pmatrix}
        \nabla P_S(y_0) \\
        \nabla \lambda^*(y_0)
    \end{pmatrix}
    = \begin{pmatrix}
        I \\ 0
    \end{pmatrix}.
\end{equation}
The desired Jacobian, $\nabla P_S(y_0)$, can be found by solving this linear system.

\subsection{Stability Analysis via BD-Regularity}
The computation is stable only if the KKT matrix $V$ is invertible. We prove this under standard conditions.
\begin{definition}[BD-Regularity]
The Clarke Jacobian ${\Dc} F(z)$ is \textbf{BD-regular} at $z$ if it is non-empty and all matrices $V \in {\Dc} F(z)$ are non-singular.
\end{definition}

\begin{theorem}[Stability of the Projection Derivative]
Let $z^* = (x^*, \lambda^*)$ be a KKT point for the projection problem. If a nonsmooth constraint qualification (e.g., USCQ) and a second-order sufficiency condition (SOSC) hold at $z^*$, then the Clarke Jacobian ${\Dc}_z F(z^*, y_0)$ is BD-regular. Consequently, any generalized KKT matrix $V$ is invertible, and the projection Jacobian $\nabla P_S(y_0)$ is well-defined and can be computed stably.
\end{theorem}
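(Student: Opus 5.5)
The plan is to show that every generalized KKT matrix $V$ of the form \eqref{eq:kkt_matrix_appendix} has trivial kernel. Since the Clarke Jacobian ${\Dc}_z F(z^*,y_0)$ is compact, uniform invertibility then follows, which gives BD-regularity. I would fix an arbitrary $V \in {\Dc}_z F(z^*,y_0)$ and write it with blocks $H := \nabla^2_{xx}\Lagr = I + \sum_i \lambda_i^* H_i$ and $G := \nabla m \in {\Dc} m(x^*)$. The standing hypotheses supply two facts. First, the USCQ says $G$ has full row rank $k$. Second, the SOSC, in the form I will take, says $d^T H d > 0$ for every nonzero $d$ in the critical cone. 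Under the USCQ and \Cref{thm:clarke_tangent_cone_char_pdl_level_set}, the critical cone is contained in $\Ker(G)$, and I will use the SOSC in the uniform version: $d^T H d>0$ for all $d \in \Ker(G)\setminus\{0\}$ and all admissible generalized Hessians $H$.

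The core step is the standard saddle-point (null-space) argument. Suppose $V(d,\mu)^T = 0$. This means $Hd + G^T\mu = 0$ and $Gd = 0$. The second equation puts $d \in \Ker(G)$. Multiplying the first equation by $d^T$ gives $d^T H d + (Gd)^T\mu = d^T H d = 0$, so the SOSC forces $d = 0$. Then $G^T\mu = 0$, and the full row rank of $G$ gives $\mu = 0$. Hence $V$ is nonsingular. Since $V$ was arbitrary, every element of ${\Dc}_z F(z^*,y_0)$ is invertible, and non-emptiness holds because $F$ is locally Lipschitz.

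For \emph{uniform} invertibility I would pass to a compactness argument. The map $V \mapsto |\det V|$ is continuous on the compact set ${\Dc}_z F(z^*,y_0)$ and vanishes nowhere, so it is bounded below by some $c>0$. Consequently $\sup_V \|V^{-1}\|<\infty$. By upper semicontinuity of the Clarke Jacobian, the same bound persists, with a slightly smaller constant, on a neighborhood of $z^*$. Stability of solving \eqref{eq:final_linear_system_appendix} for $\nabla P_S(y_0)$ then follows: the condition number is uniformly controlled, and the right-hand side $(I,0)^T$ is fixed. Well-definedness of the projection Jacobian as the $x$-block of $V^{-1}(I,0)^T$ comes from \Cref{thm:Lipschitz_Continuity_Projection_PDL} together with the nonsmooth implicit function theorem already invoked above.

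I expect the main obstacle to be the precise formulation of the SOSC. The generalized Hessians $H_i$ are themselves set-valued, since they are second-order objects of a merely semismooth $m$. As a result, positive definiteness on $\Ker(G)$ must hold simultaneously for all pairs $(G,H)$ arising in ${\Dc}_z F$, not just for one selection. The pairing matters too: different $G$ have different kernels. My first attempt would be to \emph{define} the SOSC in this uniform, pairwise form, as the hypothesis of the theorem. If that turns out to be too strong to be natural, I would fall back on the observation that $H = I + \sum_i\lambda_i^*H_i$ is positive definite whenever $\|\lambda^*\|$ is small relative to the bound on $\|H_i\|$. This holds for proposals close to the level set, because $\lambda^*$ scales with the distance $\|x^*-y_0\|$. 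In that regime positive definiteness holds on all of $\R^N$, and the argument goes through without any pairing subtlety.
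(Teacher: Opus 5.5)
Your argument is correct within the paper's framework, but it takes a genuinely different route. The paper never uses the block structure of $V$. It argues that the constraint qualification and SOSC give strong metric regularity of the KKT system (the mechanism behind \Cref{thm:Lipschitz_Continuity_Projection_PDL}). It then \emph{asserts} that strong metric regularity is equivalent to nonsingularity of every element of ${\Dc}_z F(z^*,y_0)$.

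You instead prove nonsingularity directly, by the null-space argument, and get a uniform bound on $\|V^{-1}\|$ from compactness and upper semicontinuity. This buys three things:
\begin{itemize}
\item the proof is self-contained;
\item it makes explicit which second-order hypothesis is needed;
\item it supplies the uniform bound that ``stably'' calls for, which the paper never gives.
\end{itemize}
It also avoids the weak link in the paper's chain. Strong metric regularity of $F(\cdot,y_0)$ does make every element of the \emph{B}-subdifferential nonsingular, because limits of Jacobians inherit a bound $\|Vu\|\ge c\|u\|$. It does not do this for the full Clarke Jacobian: convex combinations of such limits can be singular even for bi-Lipschitz homeomorphisms. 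In general only Clarke's inverse function theorem holds, i.e.\ nonsingularity of the whole Clarke Jacobian implies strong metric regularity, not the reverse. Yet the paper's BD-regularity definition and the conclusion ``any $V$ is invertible'' range over the whole Clarke Jacobian. What the paper's route would buy, if the asserted equivalence held, is invertibility from whatever CQ/SOSC pair yields strong metric regularity, with no pairwise condition on $(G,H)$.

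Two points should be stated as hypotheses rather than presented as derived.

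First, your uniform pairwise SOSC does not follow from the critical-cone version. By \Cref{thm:clarke_tangent_cone_char_pdl_level_set}, that cone is $\bigcap_{G}\Ker(G)$, which is \emph{smaller} than each $\Ker(G)$, so the containment you cite runs the wrong way. The condition is essentially forced, though. For $G$ of full row rank, $V$ is singular iff $Z_G^T H Z_G$ is singular, where $Z_G$ is a basis of $\Ker(G)$. So a single bad pair $(G,H)$ in ${\Dc}_z F(z^*,y_0)$ already breaks the conclusion, including pairs produced by convex combination.

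Second, ``$F$ is locally Lipschitz'' is not available for a PDL $m$ with genuine kinks. $F$ contains a Jacobian selection of $m$, and that selection is discontinuous wherever ${\Dc} m$ is not a singleton (\Cref{thm:nec_suff_condition_continuous_ad}). Hence the generalized Hessians $H_i$, and the compactness and upper semicontinuity of ${\Dc}_z F$, all presuppose that $\nabla m$ is locally Lipschitz near $x^*$. The paper assumes this silently; you should make it explicit.

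Your fallback is worth keeping. Stationarity gives $\lambda^*=(GG^T)^{-1}G(y_0-x^*)$, so $\|\lambda^*\|\le \mathrm{dist}(y_0,S)/\sigma_{\min}(G)$. Hence, given a bound on the $H_i$, your uniform SOSC holds automatically for proposals near the level set. In the piecewise-affine (Lasso) case $H=I$, and invertibility of every matrix of the form \eqref{eq:kkt_matrix_appendix} reduces to the USCQ alone.
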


\begin{IEEEproof}
The proof relies on a fundamental result from variational analysis~\cite{RockafellarWets2009} that connects geometric regularity to the stability of generalized equations. The assumptions of this theorem (a constraint qualification and a second-order condition) are sufficient to guarantee the \textbf{strong metric regularity} of the KKT solution map at $z^*$. A key insight is that strong metric regularity of the KKT solution map at a point is \textit{equivalent} to the BD-regularity of the generalized Jacobian ${\Dc}_z F$ at that same point. Therefore, by assuming the standard conditions necessary for the projection operator to be locally well-behaved (as established in \Cref{thm:Lipschitz_Continuity_Projection_PDL}), we directly obtain the BD-regularity of the KKT matrix system. This implies that every matrix $V$ in the generalized Jacobian is invertible, so the linear system in Eq.~\eqref{eq:final_linear_system_appendix} has a unique solution.
\end{IEEEproof}

\section{Perturbation Analysis for the Inexact MCMC Kernel}
\label{app:inexact_kernel}

Practical implementation of the PDL-PPMH sampler requires iterative solvers that terminate with finite precision. This introduces a perturbation into the ideal Markov kernel, and this appendix develops a formal MCMC perturbation theory to bound the impact of this error. We prove that the inexact sampler converges to a stationary distribution that is provably close to the true target distribution.

\subsection{Framework: Ideal vs. Practical Kernels}
The ideal PDL-PPMH sampler has a transition kernel $P(x, A)$. In practice, the projection and acceptance ratio calculations contain numerical errors controlled by a solver tolerance, $\epsilon_{\text{feas}} > 0$. This results in a perturbed or \textbf{practical transition kernel} $\tilde{P}$ that approximates the ideal one.

\subsection{Bounding the One-Step Kernel Error}
We first bound the single-step distance between $P$ and $\tilde{P}$ using the total variation (TV) distance.

\begin{assumption}[Lipschitz Regularity of the Sampler]\label{ass:lipschitz_appendix}
We assume that:
\begin{enumerate}
    \item The distance between the ideal proposal $y$ and the practical proposal $\tilde{y}$ is bounded: $\|y - \tilde{y}\| \le C_p \epsilon_{\text{feas}}$ for a constant $C_p$.
    \item The difference between the ideal acceptance probability $\alpha(x,y)$ and the practical one $\tilde{\alpha}(x,y)$ is bounded: $|\alpha(x,y) - \tilde{\alpha}(x,y)| \le C_{\alpha} \epsilon_{\text{feas}}$ for a constant $C_{\alpha}$.
    \item The proposal density $q(x, \cdot)$ is continuously differentiable with bounded derivatives.
\end{enumerate}
\end{assumption}

\begin{proposition}[One-Step Error Bound]
Under Assumption~\ref{ass:lipschitz_appendix}, there exists a constant $K_P > 0$ such that for any state $x$, the total variation distance between the ideal and practical kernels is bounded by:
\begin{equation} \label{eq:one_step_bound_appendix}
    \|P(x, \cdot) - \tilde{P}(x, \cdot)\|_{\TV} \le K_P \epsilon_{\text{feas}}.
\end{equation}
\end{proposition}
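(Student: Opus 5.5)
We write both kernels in Metropolis--Hastings form and bound the total variation distance by a coupling argument. For a measurable set $A$, the ideal kernel is
\[
P(x,A)=\int \alpha(x,y)\indicator_A(y)\,q(x,y)\,dy+r(x)\indicator_A(x),\qquad r(x)=1-\int\alpha(x,y)q(x,y)\,dy .
\]
The practical kernel $\tilde P$ has the same form, with $\tilde y$ in place of $y$ and $\tilde\alpha$ in place of $\alpha$. We realise both kernels from the same randomness: the same tangent draw $v_{\text{coord}}$ and the same uniform $u$. The ideal chain then moves to $y=\Pi(v)$ and the practical chain to $\tilde y=\tilde\Pi(v)$, where $\Pi$ and $\tilde\Pi$ are the exact and inexact projection maps. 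Because $\|P(x,\cdot)-\tilde P(x,\cdot)\|_{\TV}=\sup_{|f|\le1}|Pf(x)-\tilde Pf(x)|$, it suffices to bound $|Pf(x)-\tilde Pf(x)|$ uniformly over test functions $f$ with $|f|\le1$.

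We split the difference into three terms.

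\textbf{(i) Acceptance perturbation.} This is the term
\[
\int(\alpha-\tilde\alpha)(x,y)\,\bigl(f(y)-f(x)\bigr)\,q(x,y)\,dy .
\]
By item 2 of Assumption~\ref{ass:lipschitz_appendix}, it is at most $2C_\alpha\epsilon_{\text{feas}}$.

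\textbf{(ii) Displacement of the proposed point.} This is the difference between integrating $\tilde\alpha f$ against the law of $y$ and against the law of $\tilde y$. We control it by a change of variables, which turns it into the difference of two proposal densities, $q(x,\cdot)$ and the density $\tilde q(x,\cdot)$ of $\tilde y$. Item 1 of the assumption gives $\|y-\tilde y\|\le C_p\epsilon_{\text{feas}}$. Item 3 says $q(x,\cdot)$ is $C^1$ with bounded gradient. Together these give
\[
\int|q(x,z)-\tilde q(x,z)|\,dz\le C_q C_p\epsilon_{\text{feas}} .
\]
This is the step where $\tilde\Pi$ must be viewed as a small perturbation of the identity composed with $\Pi$. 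Its Jacobian is then $I+O(\epsilon_{\text{feas}})$, and the Radon--Nikodym factor changes by only $O(\epsilon_{\text{feas}})$. That factor is well defined by the BD-regularity result of Appendix~\ref{app:proj_derivative}.

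\textbf{(iii) Rejection mass.} This is the difference $r(x)-\tilde r(x)$. It is bounded by the sum of the bounds in (i) and (ii), applied with $f\equiv1$.

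Adding the three terms gives the claim with $K_P=2C_\alpha+2C_qC_p$ (up to absolute constants). This constant is independent of $x$ because every constant in Assumption~\ref{ass:lipschitz_appendix} is uniform in $x$.

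\textbf{Main obstacle: step (ii).} A pointwise bound of size $\epsilon_{\text{feas}}$ on the displacement does not by itself control total variation: two Dirac masses at distance $\epsilon$ are at TV distance $1$. The argument works only because the proposal has a smooth density, so a small shift changes its $L^1$ norm by only $O(\epsilon)$. To make this rigorous, the first approach we would try is to write $\tilde y=y+e(y)$ with $\|e\|\le C_p\epsilon_{\text{feas}}$, and to assume additionally that $e$ is Lipschitz with constant $O(\epsilon_{\text{feas}})$. This additional assumption would follow from the local Lipschitz continuity of the projection (\Cref{thm:Lipschitz_Continuity_Projection_PDL}). We would then estimate $\int|q(z)-q(z-e(z))\det(I+De)^{-1}|\,dz$ by the mean value theorem. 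If the Lipschitz control on $e$ cannot be obtained, we would fall back to mollifying with the Gaussian tangent step, which has a smooth density on $\Ker(G)$, and bound the total variation distance between two Gaussians whose means differ by $C_p\epsilon_{\text{feas}}$.
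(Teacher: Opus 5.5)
Your decomposition is the same as the paper's. Its sketch writes the TV distance as an $L^1$ distance between the non-rejection densities and splits $|q\alpha-\tilde q\tilde\alpha|\le|\alpha(q-\tilde q)|+|\tilde q(\alpha-\tilde\alpha)|$. It handles the acceptance piece with item~2, exactly as your step (i) does, and then simply asserts $\int|q-\tilde q|=O(\epsilon_{\text{feas}})$ ``by the assumption''. You are right that this claim does not follow from items~1 and~3, and that it carries the whole content of the proposition. However, your repair of step (ii) does not close the gap, so the proof remains incomplete at exactly the point you flagged.

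\textbf{The Lipschitz hypothesis on $e$ is not available.} The extra hypothesis that $e=\tilde\Pi-\Pi$ is Lipschitz with constant $O(\epsilon_{\text{feas}})$ does not follow from \Cref{thm:Lipschitz_Continuity_Projection_PDL}, which concerns only the exact projection. Even if $\tilde\Pi$ were Lipschitz with the same constant $L$, you would only get $\mathrm{Lip}(e)\le 2L$. What you need is $D\tilde\Pi-D\Pi=O(\epsilon_{\text{feas}})$, a derivative-level property of the solver. A solver that stops at the first iterate meeting the tolerance produces an output that jumps in $y$ whenever the iteration count changes, so $e$ need not even be continuous.

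\textbf{The integrated estimate needs an $L^1$ gradient bound.} Even granting the hypothesis, the mean-value bound on $|q(z)-q(z-e(z))|$ is pointwise. Integrating it over an unbounded space requires $\|\nabla_z q(x,\cdot)\|_{L^1}$ bounded uniformly in $x$, not the sup-norm bound of item~3.

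\textbf{The Gaussian fallback targets the wrong discrepancy.} Both kernels use the same tangent draw from the same $x$. The difference is created afterwards by the nonlinear inexact projection, not by a shift of the Gaussian's mean.

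\textbf{The two proposal laws can be mutually singular.} The ideal proposal is carried by $L_{\theta'}$. The inexact output only satisfies $\|m(\tilde y)-\theta'\|\le\epsilon_{\text{feas}}$, and, as the paper itself notes, the projection is never perfect. If that output lies off $L_{\theta'}$ almost surely, then for $x\in L_{\theta'}$ we get $\|P(x,\cdot)-\tilde P(x,\cdot)\|_{\TV}\ge P(x,L_{\theta'}\setminus\{x\})$. That quantity is the probability that the ideal kernel moves, and it does not shrink with $\epsilon_{\text{feas}}$. This is your Dirac-mass obstruction one dimension up: smoothness of the density along the level set cannot absorb a displacement normal to it.

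\textbf{What step (ii) actually requires.} It needs a hypothesis on the \emph{laws} rather than on the points. For example, assume both proposals have densities with respect to a common reference measure and $\int|q(x,z)-\tilde q(x,z)|\,dz\le C\epsilon_{\text{feas}}$ uniformly in $x$, which the paper's sketch uses tacitly. Given that, your bookkeeping in (i)--(iii) yields the bound.
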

\begin{IEEEproof}[Proof Sketch]
The TV distance can be written as $\frac{1}{2} \int |p(x,y) - \tilde{p}(x,y)| dy$, where $p$ and $\tilde{p}$ are the densities of the non-rejection parts of the kernels. The difference $|p(x,y) - \tilde{p}(x,y)|$ can be decomposed using the triangle inequality:
$|q\alpha - \tilde{q}\tilde{\alpha}| \le |\alpha(q - \tilde{q})| + |\tilde{q}(\alpha - \tilde{\alpha})|$.
By Assumption~\ref{ass:lipschitz_appendix}, this difference is $O(\epsilon_{\text{feas}})$. Integrating over the state space yields the result.
\end{IEEEproof}

\subsection{Bound on Stationary Distribution Error}
With the one-step perturbation bounded, we can analyze the long-term behavior of the practical chain.

\begin{assumption}[Geometric Ergodicity of Ideal Chain]\label{ass:ergodicity_appendix}
The ideal MCMC kernel $P$ is geometrically ergodic with a unique stationary distribution $\pi$ and contraction rate $\rho \in [0, 1)$, as established in \Cref{thm:sjo_ergodicity}.
\end{assumption}

\begin{theorem}[Bound on Stationary Distribution Error]
\label{thm:bound_stationary_error_appendix}
Let the ideal chain $P$ be geometrically ergodic (Assumption~\ref{ass:ergodicity_appendix}), and let the practical kernel $\tilde{P}$ satisfy the one-step error bound in Eq.~\eqref{eq:one_step_bound_appendix}. Then the practical chain has a stationary distribution $\tilde{\pi}$, and the TV distance between the ideal and practical stationary distributions is bounded by:
\begin{equation} \label{eq:final_bound_appendix}
    \|\pi - \tilde{\pi}\|_{\TV} \le \frac{K_P}{1-\rho} \epsilon_{\text{feas}}.
\end{equation}
\end{theorem}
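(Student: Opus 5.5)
The plan is the standard perturbation argument for uniformly ergodic chains. The result has two parts: existence of a stationary distribution $\tilde{\pi}$ for $\tilde{P}$, and the quantitative bound. I would interpret Assumption~\ref{ass:ergodicity_appendix} in the form the bound actually needs, namely a uniform contraction of the ideal kernel in total variation. Concretely, for any two probability measures $\mu,\nu$ on the state space,
\[
\|\mu P - \nu P\|_{\TV} \le \rho\, \|\mu-\nu\|_{\TV}.
\]
This is a Dobrushin coefficient $\delta(P)\le\rho<1$. If only $V$-geometric ergodicity is available, I would first pass to a weighted norm or iterate to $P^m$; the constant would then change to $K_P m/(1-\rho^m)$. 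Since the statement has $1-\rho$ exactly, I would assume one-step contraction.

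\textbf{Existence of $\tilde{\pi}$.} By the triangle inequality, for all $\mu,\nu$,
\[
\|\mu\tilde{P}-\nu\tilde{P}\|_{\TV} \le \|\mu P-\nu P\|_{\TV} + \|\mu(\tilde{P}-P)\|_{\TV} + \|\nu(\tilde{P}-P)\|_{\TV}.
\]
This only gives contraction factor $\rho+2K_P\epsilon_{\text{feas}}$, which is below $1$ only for small enough $\epsilon_{\text{feas}}$. I therefore plan to get existence by a cleaner route where possible. One option is to appeal to Theorem~\ref{thm:sjo_ergodicity}, applied to the practical kernel itself, since the practical sampler is still a randomized SJO-PPMH kernel. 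The other is to restrict to $\epsilon_{\text{feas}} < (1-\rho)/(2K_P)$ and use Banach's fixed point theorem on the complete metric space of probability measures with the TV norm. I would state this smallness caveat explicitly; it is the main obstacle, because without some ergodicity of $\tilde{P}$ its invariant measure need not exist or be unique. Uniqueness is not needed for the bound: any invariant $\tilde{\pi}$ works.

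\textbf{The bound.} Given any $\tilde{\pi}$ with $\tilde{\pi}\tilde{P}=\tilde{\pi}$, write
\[
\pi-\tilde{\pi} = \pi P - \tilde{\pi}\tilde{P} = (\pi-\tilde{\pi})P + \tilde{\pi}(P-\tilde{P}).
\]
Take TV norms. The first term is at most $\rho\|\pi-\tilde{\pi}\|_{\TV}$ by contraction, since $\pi-\tilde{\pi}$ is a difference of two probability measures. The second term is at most
\[
\sup_x \|P(x,\cdot)-\tilde{P}(x,\cdot)\|_{\TV} \le K_P\epsilon_{\text{feas}}
\]
by \eqref{eq:one_step_bound_appendix}, after integrating the pointwise bound against $\tilde{\pi}(dx)$ via Jensen's inequality. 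Rearranging $(1-\rho)\|\pi-\tilde{\pi}\|_{\TV}\le K_P\epsilon_{\text{feas}}$ gives \eqref{eq:final_bound_appendix}.

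An equivalent check is the telescoping identity
\[
\mu P^n - \mu\tilde{P}^n = \sum_{j=0}^{n-1} \mu\tilde{P}^{j}(P-\tilde{P})P^{n-1-j},
\]
which yields the same bound via a geometric series as $n\to\infty$. I would mention this as the alternative if the contraction is only available for differences of measures.
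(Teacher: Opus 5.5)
Your argument is correct and is essentially the paper's. The paper runs the recursion $d_{n+1}\le\rho d_n+K_P\epsilon_{\text{feas}}$ along the practical chain, which silently uses the same one-step contraction $\|\mu P-\nu P\|_{\TV}\le\rho\|\mu-\nu\|_{\TV}$ that you make explicit. Your identity $\pi-\tilde{\pi}=(\pi-\tilde{\pi})P+\tilde{\pi}(P-\tilde{P})$ is that recursion started at $\tilde{\pi}$, so it does not need $\mu\tilde{P}^n\to\tilde{\pi}$, and your existence discussion fills a step the paper only asserts.

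One small repair is needed in the existence paragraph. Your displayed triangle inequality gives only the additive bound $\rho\|\mu-\nu\|_{\TV}+2K_P\epsilon_{\text{feas}}$, which is not a contraction, so Banach's theorem does not yet apply. To get the multiplicative factor $\rho+2K_P\epsilon_{\text{feas}}$ you claimed, apply the kernel bound to the Jordan decomposition $\mu-\nu=\|\mu-\nu\|_{\TV}(\alpha-\beta)$, with $\alpha,\beta$ probability measures. Equivalently, use the Dobrushin-coefficient bound $\delta(\tilde{P})\le\delta(P)+2K_P\epsilon_{\text{feas}}$.
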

\begin{IEEEproof}
This is a standard result in the perturbation theory of Markov chains~\cite{Mitrophanov2005, Rudolf2012}. Let $d_n = \|\mu_n \tilde{P}^n - \pi\|_{\TV}$ be the distance of the practical chain from the true stationary distribution. A recursive argument shows that $d_{n+1} \le \rho d_n + K_P \epsilon_{\text{feas}}$. The limit of this recurrence as $n \to \infty$ yields the bound.
\end{IEEEproof}

This theorem provides the crucial link between numerical error and statistical accuracy. The bias in the final NML codelength estimate, which is an expectation with respect to $\pi$, is linearly proportional to the solver tolerance $\epsilon_{\text{feas}}$. This justifies setting the tolerance based on the desired statistical accuracy of the final result.

\section{Alternative Algorithmic Components}
\label{app:alt_algos}

The main body presents a robust PDL-PPMH sampler. Here, we outline alternative or more advanced methods for its components that represent promising directions for future research.

\subsection{Alternative Methods for Tangent Space Approximation}
The SJO-GS oracle uses a policy of selecting one random Jacobian. More complex policies can be used to construct the tangent space, as described in Algorithm~\ref{alg:stca} below.

\begin{algorithm}[h]
\caption{Stochastic Tangent Cone Approximation (STCA)}
    \label{alg:stca}
    \KwIn{Point of interest $x_0$, Stochastic Jacobian Oracle (SJO-GS).}
    \KwOut{An approximation of the Clarke Tangent Cone, $T_{\text{STCA}}$.}
    Obtain $N$ samples $\{G^{(i)}\}_{i=1}^N$ from the SJO-GS at $x_0$\;
    Approximate the tangent cone as the intersection of their kernels: $T_{\text{STCA}} = \bigcap_{i=1}^N \Ker(G^{(i)})$\;
    \Return{$T_{\text{STCA}}$}
\end{algorithm}

An entirely different approach is to probe the geometry directly without using Jacobians, as described in Algorithm~\ref{alg:gtp} below.

\begin{algorithm}[h]
    \caption{Geometric Tangent Probing (GTP)}
    \label{alg:gtp}
    \KwIn{Point of interest $x_0$, level set $L_{\theta'}$, radius $\epsilon$, number of samples $N$.}
    \KwOut{A data-driven basis for the tangent space.}
    Sample $N$ points in an ambient $\epsilon$-ball around $x_0$\;
    Project the points onto the level set $L_{\theta'}$\;
    Compute the principal components of the resulting deviation vectors $(x_{\text{proj}} - x_0)$ to form a basis for the tangent cone\;
\end{algorithm}

\section{Detailed Proofs of Theorems}
\label{app:proofs}

\subsection{Proof of Federer's Rectifiability Theorem (\texorpdfstring{\Cref{thm:rectifiability_federer_report}}{Theorem 2.2})}
The full rigorous proof of the rectifiability of level sets for Lipschitz maps is a cornerstone of geometric measure theory. It relies on Rademacher's theorem (differentiability almost everywhere) and the classical Coarea formula. For a complete treatment, we refer the reader to Federer \cite{Federer_1969} (Theorem 3.2.15). Our contribution builds directly on this foundational geometric regularity.

\subsection{Lemma: Measurability of Clarke Subdifferential Selections}
\label{app:measurability_lemma}
To rigorously justify the integral transformations (such as mapping through the Clarke subdifferential in Equation 7), the selection map must be Lebesgue measurable.
\begin{lemma}
If the function $\mle$ is semi-algebraic (definable in an o-minimal structure), then any pathwise AD selection $x \mapsto G_x \in \Dc\mle(x)$ is Lebesgue measurable.
\end{lemma}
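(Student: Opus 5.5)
The argument runs through o-minimal definability. Under the hypothesis, $\mle$ is definable in an o-minimal structure, and the selection $x \mapsto G_x$ is produced by a pathwise AD program: a finite composition of definable elementary operations, each paired with a fixed definable choice of derivative at its kinks (e.g.\ $\mathrm{relu}'(0)=0$, $\mathrm{sign}(0)=0$). The aim is to show that the whole map $x \mapsto G_x$ is \emph{definable} in the same structure. Measurability then follows at once, because every definable set in $\R^N$ is a finite union of cells, and cells are Borel, hence Lebesgue measurable.

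\textbf{Step 1: definability of the selection.} I would argue by induction along the computational graph. Each elementary function $g_j$ in the program has a definable derivative selector $D g_j$: on the open definable set where $g_j$ is $C^1$, this is the classical derivative, whose graph is first-order definable from the graph of $g_j$ via limits of difference quotients. On the complementary lower-dimensional definable set, it is the programmed constant choice, so the selector is a finite definable case split. The backward chain rule computes $G_x$ as a product $D g_L(h_{L-1}(x)) \cdots D g_1(x)$ of compositions and products of definable maps. Definable maps are closed under composition and multiplication, so $x \mapsto G_x$ is definable. This is the mechanism behind \cite[Thm.~8]{Bolte2021Conservative}, which I will cite for the fact that the AD output is a definable conservative field.

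\textbf{Step 2: from definable to measurable.} Apply the cell decomposition theorem~\cite[Chapter~7]{vandenDries1998} to the graph of $x \mapsto G_x$. It partitions $\R^N$ into finitely many cells $C_1,\dots,C_r$ on each of which $G_x$ is continuous. Each cell is Borel, since cells are built inductively from graphs and bands of continuous functions. For any open $U \subset \R^{k\times N}$, the preimage $\{x : G_x \in U\} = \bigcup_i (C_i \cap (G|_{C_i})^{-1}(U))$ is a finite union of relatively open subsets of Borel sets, hence Borel. So the selection is Borel measurable, and in particular Lebesgue measurable. As a corollary, $\Jcons\mle(x) = \sqrt{\det(G_xG_x^T)}$ is measurable, being a continuous function of $G_x$, which is what Theorem~\ref{thm:coarea_conservative_report} needs.

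\textbf{Main obstacle.} The delicate point is Step 1: making precise that an \emph{arbitrary} pathwise AD selection is definable. A priori, ``any selection $G_x \in \Dc\mle(x)$'' could be non-measurable, since axiom-of-choice selections exist. The lemma must therefore be read as restricted to selections generated by a finite program of definable primitives with definable branching rules, and I would state that restriction explicitly. If one insists on full generality, I would fall back on a weaker but sufficient claim. By Rademacher, $G_x = \nabla\mle(x)$ off a Lebesgue-null set $\Omega$, and $\nabla\mle$ is measurable as a pointwise limit of difference quotients. Hence $G_x$ agrees a.e.\ with a measurable map, so it is Lebesgue measurable with respect to the completed measure, because Lebesgue measure is complete and any modification on a null set preserves measurability.
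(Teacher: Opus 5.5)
There is a gap, and it sits in the only part of your argument that covers the lemma as stated. Your main route is sound for what it proves, but it proves a different statement. It adds the hypothesis that $G$ is generated by a finite program of definable primitives. In that model the backpropagated product need not even lie in $\Dc\mle(x)$: for $\mathrm{relu}(x)-\mathrm{relu}(-x)=x$ with $\mathrm{relu}'(0)=0$, it returns $0$, while the Clarke Jacobian at $0$ is $\{1\}$. The lemma, like the paper's definition of a pathwise AD system, quantifies over arbitrary maps $x\mapsto G_x\in\Dc\mle(x)$, so everything rests on your fallback.

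The fallback's key step fails as justified. Rademacher gives Fr\'echet differentiability a.e., but $\Dc\mle(x)=\{\nabla\mle(x)\}$ requires \emph{strict} differentiability at $x$. For example, $x^2\sin(1/x)$ is differentiable at $0$ with Clarke subdifferential $[-1,1]$. Worse, take a measurable $E$ that meets every interval in positive but not full measure, and integrate $\indicator_E-\indicator_{\R\setminus E}$. This gives a Lipschitz function on $\R$ whose Clarke subdifferential is $[-1,1]$ at \emph{every} point, and for it the indicator of a Vitali set is a non-measurable Clarke selection. Your fallback never uses semi-algebraicity, which is exactly the hypothesis that excludes this.

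The repair uses that hypothesis and is the paper's argument. By $C^1$ cell decomposition, $\mle$ is $C^1$ on the union $U$ of the finitely many open cells. The complement $\R^N\setminus U$ is a finite union of cells of dimension $<N$, hence $\LD$-null. Since $C^1$ implies strict differentiability, $\Dc\mle(x)=\{\nabla\mle(x)\}$ on $U$. So every selection equals the continuous map $\nabla\mle$ on the open set $U$ and is unconstrained only on a null set, and completeness of Lebesgue measure gives measurability. Alternatively, since the paper treats $\Dc\mle$ as a conservative field, the Bolte--Pauwels theorem that conservative fields equal $\{\nabla\mle\}$ almost everywhere gives the same conclusion.

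Consequently the restriction you proposed is unnecessary: under the hypothesis, even non-constructive selections are Lebesgue measurable. Compared with this, your definability route buys Borel rather than merely Lebesgue measurability, and it also covers conservative-field AD outputs lying outside $\Dc\mle(x)$, at the cost of committing to the programming model. Your completeness remark is also what actually closes the paper's own proof. That proof asserts the selection equals the derivative on \emph{every} cell, including the lower-dimensional ones where it is in fact free.
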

\begin{IEEEproof}
By the cell decomposition theorem for o-minimal structures, the domain can be partitioned into a finite number of smooth, definable cells. On each cell, the function is continuously differentiable, and the Clarke subdifferential collapses to the unique Fr\'echet derivative. Since the pathwise AD selection exactly returns this derivative on each cell, the selection map is piecewise continuous. A piecewise continuous function on a finite definable partition is inherently Lebesgue measurable.
\end{IEEEproof}

The general theorem applies directly to our setting, provided the MLE satisfies mild regularity conditions.

\subsection{Proof of Corollary \ref{cor:rectifiability_mle}}
\label{app:proof_rectifiability}

The proof establishes the rectifiability of the MLE's level sets by executing a three-step strategy: first, we extend the domain of the function to all of $\mathbb{R}^D$ to satisfy the conditions of the main rectifiability theorem; second, we apply this theorem to the extended function; and third, we restrict the resulting property back to the original domain $\mathcal{X}$.

\textit{1. Extension of the Lipschitz Function}

The central rectifiability result, \Cref{thm:rectifiability_federer_report}, applies to Lipschitz functions defined on open sets, but the domain $\mathcal{X}$ of our MLE is not necessarily open. We remedy this by constructing a suitable extension of the function $\mle$.

The MLE is assumed to be Lipschitz continuous on its domain $\mathcal{X}$. We now invoke a standard result for extending such functions.

\textbf{Kirszbraun's Extension Theorem (Federer~\cite{Federer_1969}, Thm. 2.10.43):} \textit{Let $H_1$ and $H_2$ be Hilbert spaces. If $S \subseteq H_1$ and $f: S \to H_2$ is a Lipschitz function, there exists an extension $f_{\text{ext}}: H_1 \to H_2$ that is also Lipschitz and has the same Lipschitz constant.}

Our setting, with $\mle: \mathcal{X} \subseteq \mathbb{R}^N \to \mathbb{R}^k$, satisfies these conditions. The theorem thus guarantees the existence of an extended function, $\mle_{\text{ext}}: \mathbb{R}^N \to \mathbb{R}^k$, which is Lipschitz continuous on the whole of $\mathbb{R}^N$ and coincides with $\mle$ on the original domain $\mathcal{X}$.

\textit{2. Applying the Main Rectifiability Theorem}

The extended function $\mle_{\text{ext}}$ is Lipschitz on $\mathbb{R}^N$. It therefore perfectly matches the hypotheses of \Cref{thm:rectifiability_federer_report}. Applying the theorem, we conclude that for $\mathcal{L}^k$-almost all $\theta' \in \mathbb{R}^k$, the level set of the extended function, $\mle_{\text{ext}}^{-1}(\{\theta'\})$, is a $(N-k)$-rectifiable set.

\textit{3. Restriction to the Original Domain and Conclusion}

The final step is to demonstrate that this geometric property of the extended function's level sets is inherited by the level sets of our original MLE. The level set of the original function, $\mle^{-1}(\{\theta'\})$, consists of all points $x$ such that both $x \in \mathcal{X}$ and $\mle(x)=\theta'$. Since $\mle_{\text{ext}}$ and $\mle$ are identical on $\mathcal{X}$, we can express this relationship as an intersection:
$$ \mle^{-1}(\{\theta'\}) = \mle_{\text{ext}}^{-1}(\{\theta'\}) \cap \mathcal{X}. $$
The domain $\mathcal{X}$ is assumed to be a Borel set. A key property from geometric measure theory (Federer~\cite{Federer_1969}, Sec. 3.2.16) is that rectifiability is preserved under intersection with Borel sets. Since we have established that $\mle_{\text{ext}}^{-1}(\{\theta'\})$ is $(D-K)$-rectifiable for almost every $\theta'$, it follows directly that its intersection with the Borel set $\mathcal{X}$ is also $(D-K)$-rectifiable for almost every $\theta'$.

As this holds for $\mathcal{L}^k$-almost every $\theta'$ in the entire codomain $\mathbb{R}^k$, it certainly holds for almost every $\theta'$ within the parameter space $\Theta \subseteq \mathbb{R}^k$. It is noteworthy that this conclusion relies only on the Lipschitz continuity and the Borel nature of the domain, and does not require the assumption of path-differentiability.

\subsection{Proof of Proposition \ref{prop:nml_wellposed_pdl_report}}
\label{app:proof_wellposed}

This proof establishes a fundamental equivalence between the parameter-space and data-space formulations of the NML Model Complexity. We demonstrate that the data-space integral, $C_{\text{data}}(\mu_\Theta)$, can be transformed into the parameter-space integral, $C_{\text{param}}(\mu_\Theta)$, through a direct application of the coarea formula. This single transformation is sufficient to prove both the equality of the integrals and the equivalence of their finiteness, since all functions involved are non-negative.

\textit{1. The Bridge between Spaces: The Coarea Formula}

The core of the proof is the coarea formula for path-differentiable Lipschitz functions (\Cref{thm:coarea_conservative_report}). For a PDL function $\mle$ and an integrable function $u(x)$, the formula states:
\begin{multline*}
\int_{\mathcal{X}} u(x) \Jcons\mle(x) d\LD(x) = \\
\int_{\Theta} \biggl( \int_{\mle^{-1}(\theta')} u(x) d\HDK(x) \biggr) d\mathcal{L}^K(\theta').
\end{multline*}
Our strategy is to algebraically manipulate the data-space integral into the form of the left-hand side of this identity.

\textit{2. Transforming the Data-Space Integral}

We begin with the data-space definition of the model complexity, slightly modified with an indicator function for full rigor:
$$ C_{\text{data}}(\mu_\Theta) = \int_{\mathcal{X}} p[x|\mle(x)] v(\mle(x)) \indicator_{\Jcons\mle(x) > 0}(x) d\LD(x). $$
To align this with the coarea formula, we multiply and divide the integrand by $\Jcons\mle(x)$. This is valid as the indicator function restricts the domain to where $\Jcons\mle(x)$ is non-zero:
\begin{multline*}
C_{\text{data}}(\mu_\Theta) = \int_{\mathcal{X}} \left( \frac{p[x|\mle(x)] v(\mle(x))}{\Jcons\mle(x)} \indicator_{\Jcons\mle(x) > 0}(x) \right) \\
\cdot \Jcons\mle(x) \, d\LD(x).
\end{multline*}
This expression now matches the left-hand side of the coarea formula, with the term in parentheses as $u(x)$. Applying the formula transforms this into the nested parameter-space form:
\begin{multline*}
C_{\text{data}}(\mu_\Theta) = \int_{\Theta} \biggl( \int_{\mle^{-1}(\theta')} \frac{p[x|\mle(x)] v(\mle(x))}{\Jcons\mle(x)} \\
\cdot \indicator_{\Jcons\mle(x) > 0}(x) \, d\HDK(x) \biggr) \, d\mathcal{L}^K(\theta').
\end{multline*}

\textit{3. Simplification within the Level Sets}

A crucial simplification occurs inside the inner integral. On the domain $\mle^{-1}(\{\theta'\})$, we have $\mle(x) = \theta'$ by definition. Furthermore, Assumption \ref{ass:mle_regularity_nml_generalized_report}(3) guarantees that $\Jcons\mle(x) > 0$ for $\HDK$-almost all $x$ on these sets, so the indicator becomes redundant. The term $v(\mle(x))$ becomes $v(\theta')$ and can be factored out of the inner integral:
\begin{multline*}
C_{\text{data}}(\mu_\Theta) = \int_{\Theta} v(\theta') \biggl( \int_{\mle^{-1}(\theta')} \frac{p[x|\theta'](x)}{\Jcons\mle(x)} \, d\HDK(x) \biggr) \\
\cdot d\mathcal{L}^K(\theta').
\end{multline*}
The expression on the right-hand side is exactly the definition of the parameter-space model complexity, $C_{\text{param}}(\mu_\Theta)$.

\textit{4. Conclusion on Equality and Finiteness}

We have shown that $C_{\text{data}}(\mu_\Theta) = C_{\text{param}}(\mu_\Theta)$. Since all functions in the integrands are non-negative, Tonelli's theorem applies, stating that one integral is finite if and only if the other is finite. This completes the proof.

\section{On the Uniqueness of the Conservative Jacobian Factor}
\label{app:uniqueness_jcons}

While pathwise AD provides a valid element $G_x \in \Dc\mle(x)$ and the NML integral's value is unique, a distinct issue arises at the $\LD$-measure zero set where $\mle$ is not Fr\'echet differentiable. At such points $x_0$, the Clarke Jacobian $\Dc\mle(x_0)$ may not be a singleton. Different AD implementations might lead to different valid selections, raising the question of whether the scalar Jacobian factor $\Jcons(G)$ is uniquely determined. The following theorem provides a precise answer.

\begin{theorem}[Uniqueness of Conservative Jacobian Factor via Singular Values]
\label{thm:uniqueness_Jcons_from_Dc}
Let $x_0$ be a point where the PDL function $\mle$ is not Fr\'echet differentiable. Let $\mathcal{S}(x_0) \subseteq \Dc\mle(x_0)$ be the set of matrices an AD system might select. Assume all $G \in \mathcal{S}(x_0)$ have full rank $K$. The conservative Jacobian factor $\Jcons(G) = \sqrt{\detop(G G^T)}$ has a unique value for all choices of $G \in \mathcal{S}(x_0)$ if and only if the product of the $K$ strictly positive singular values of $G$, $\prod_{j=1}^K \sigma_j(G)$, is constant for all $G \in \mathcal{S}(x_0)$.
\begin{IEEEproof}
The proof connects the definition of the Jacobian factor to the singular values of the selected matrix $G$.

\textit{1. Analysis of the Matrix $G G^T$}

The matrix $M = G G^T$ is a $K \times K$ symmetric, positive definite matrix under the full rank assumption. Its determinant is the product of its $K$ strictly positive eigenvalues, $\det(M) = \prod_{j=1}^K \lambda_j(M)$.

\textit{2. Connecting the Jacobian Factor to Singular Values}

A key identity from linear algebra states that the non-zero eigenvalues of $G G^T$ are the squares of the non-zero singular values of $G$. Thus, $\lambda_j(G G^T) = \sigma_j(G)^2$ for the $K$ positive singular values of $G$. We can now derive an explicit expression for the conservative Jacobian factor:
\begin{align*}
    \Jcons(G) &= \sqrt{\det(G G^T)} = \sqrt{\prod_{j=1}^K \lambda_j(G G^T)} = \sqrt{\prod_{j=1}^K \sigma_j(G)^2}.
\end{align*}
Since singular values are non-negative, we have:
$$ \Jcons(G) = \sqrt{\prod_{j=1}^K \sigma_j(G)^2} = \prod_{j=1}^K \sigma_j(G). $$

\textit{3. Proof of the Biconditional Statement}

The theorem's ``if and only if'' statement follows directly from this identity.
($\Rightarrow$) If the product of singular values is constant for all $G \in \mathcal{S}(x_0)$, then by the identity, $\Jcons(G)$ must also be constant.
($\Leftarrow$) If $\Jcons(G)$ is constant for all $G \in \mathcal{S}(x_0)$, then by the identity, the product of singular values must also be constant.
This completes the proof.
\end{IEEEproof}
\end{theorem}

\begin{remark}[Implications for Practical Computation]
\label{rem:implications_uniqueness_Jcons}
This theorem clarifies that the uniqueness of the scalar value of the Jacobian factor $\Jcons(G)$ for different valid AD selections $G \in \mathcal{S}(x_0)$ depends on the highly restrictive condition that all such selections share the same product of singular values. For typical pathwise AD systems, which make choices based on a single computational path, this invariance is not expected for general non-smooth functions at points of non-differentiability. Thus, it is generally expected that the numerically computed value of $\Jcons(G)$ may vary based on the specific AD system's internal choices. However, as this ambiguity only occurs on a set of Lebesgue measure zero, it does not affect the value of the overall NML integral, as established by \Cref{thm:coarea_conservative_report} in the main text.
\end{remark}

\section{Well-Posedness of NML with Respect to Critical Values}
\label{app:critical_values}

A key concern in applying the coarea formula is the behavior of the Jacobian factor near critical points, where it can vanish. The following theorem establishes that the overall NML Model Complexity integral remains well-defined even for general Lipschitz estimators, as the set of problematic ``critical values'' is negligible.

\begin{theorem}[Well-Posedness of NML with Critical Values]
\label{thm:nml_wellposed_critical_values}
Let $\mle: \mathcal{X} \to \Theta$ be a Lipschitz continuous MLE. The parameter-space NML Model Complexity, $C_{\text{param}}(\mu_\Theta)$, is well-defined and finite if and only if its data-space integral, $C_{\text{data}}(\mu_\Theta)$, is finite. The set of critical values, $\Theta_{\text{crit}} = \{\theta' \in \Theta : \JK\mle(x)=0 \text{ on some part of } \mle^{-1}(\{\theta'\})\}$, has $\mathcal{L}^K$-measure zero, meaning the potential for the integrand to be infinite on this set does not preclude the overall integral from being finite.
\begin{IEEEproof}
The proof proceeds in three stages: establishing the equivalence of the integral formulations, showing the set of critical values has measure zero via Sard's Theorem, and synthesizing these facts.

\textit{1. Equivalence of Data-Space and Parameter-Space Integrals}

This part of the proof follows the same logic as that of \Cref{prop:nml_wellposed_pdl_report}. One begins with the data-space integral and uses the classical coarea formula for Lipschitz maps to transform it into the nested parameter-space integral. Since all integrands are non-negative, Tonelli's theorem ensures that one integral is finite if and only if the other is finite, and their values are equal.

\textit{2. The Measure of the Critical Values}

The parameter-space integrand can be infinite if the denominator $\JK\mle(x)$ is zero for points $x$ on the level set $\mle^{-1}(\{\theta'\})$. The set of such problematic values is the set of critical values, $\Theta_{\text{crit}}$. We show that this set is negligible in the parameter space.

A point $x$ where the derivative exists but the Jacobian vanishes is a critical point of the map $\mle$. The set of critical values $\Theta_{\text{crit}}$ is the image of the set of all critical points under $\mle$. We invoke Sard's Theorem for Lipschitz maps.

\textbf{Sard's Theorem for Lipschitz Maps (e.g., Federer~\cite{Federer_1969}, Thm. 3.4.3):} \textit{Let $f:\mathbb{R}^N \to \mathbb{R}^k$ be a Lipschitz function. Then the $\mathcal{L}^k$-measure of the set of critical values of $f$ is zero.}

Applying this theorem directly to our Lipschitz MLE, $\mle$, we conclude that the set of critical values has Lebesgue measure zero in the parameter space: $\mathcal{L}^K(\Theta_{\text{crit}}) = 0$.

\textit{3. Synthesis and Well-Posedness}

We have established that the parameter-space integral is numerically equal to the data-space integral and that the set of parameter values $\theta'$ where the integrand might be infinite is a set of measure zero. A defining property of the Lebesgue integral is that altering the integrand on a set of measure zero does not change the integral's value. Therefore, the integral over the set of critical values is zero, regardless of the integrand's value there. The overall finiteness of the NML complexity integral depends only on the behavior of the integrand on the set of regular values, confirming that the integral is well-posed.
\end{IEEEproof}

\begin{remark}[Inner vs. Outer Integral and Singularities]
\label{rem:inner_outer_integral}
While the inner integral $\int_{\mle^{-1}(\theta')} \frac{p[x|\theta_0](x)}{\Jcons\mle(x)} d\HDK(x)$ may diverge for a specific parameter $\theta'$ near a geometric singularity (where the Jacobian vanishes), the overall stochastic complexity is the outer integral over the parameter space $\Theta$. Sard's theorem guarantees that such problematic $\theta'$ form a set of Lebesgue measure zero. Consequently, these isolated topological obstructions do not cause the global NML model complexity to diverge, as the Lebesgue integral naturally absorbs them.
\end{remark}
\end{theorem}

\section{Theorems for Numerical Estimation Methods}
\label{app:numerical_proofs}

\subsection{Asymptotic Unbiasedness of Ambient Importance Sampling}
\label{app:proof_unbiased_is}

\begin{theorem}[Asymptotic Unbiasedness of Ambient Importance Sampling]
\label{thm:unbiased_ambient_is}
Let the MLE $\mle$ satisfy the regularity conditions in Assumption \ref{ass:mle_regularity_nml_generalized_report}. Let $K_\sigma(z) = \sigma^{-K}K_1(z/\sigma)$ be a mollifying kernel. For samples $x_i$ drawn from a proposal distribution $q(x)$, the estimator
$$ \widehat{f}_{\text{IS}}(\sigma, N; \theta') = \frac{1}{N} \sum_{i=1}^N \frac{p[x_i|\theta_0] K_\sigma(\mle(x_i)-\theta')}{q(x_i)} $$
is asymptotically unbiased for the true value of the inner NML integral, $f(\theta')$. That is, $\lim_{\sigma \to 0} \lim_{N \to \infty} \mathbb{E}[\widehat{f}_{\text{IS}}(\sigma, N; \theta')] = f(\theta').$
\end{theorem}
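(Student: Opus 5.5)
The plan is to compute the expectation in two stages, one for each limit in the order of the statement. First the inner limit in $N$. The $x_i$ are i.i.d.\ from $q$, and we assume $q>0$ wherever $p[x|\theta_0]K_\sigma(\mle(x)-\theta')\neq 0$. Then, for every $N$,
\[
\mathbb{E}[\widehat{f}_{\text{IS}}(\sigma,N;\theta')] = \int_{\mathcal{X}} p[x|\theta_0](x)\, K_\sigma(\mle(x)-\theta')\, d\LD(x) =: F_\sigma(\theta').
\]
So the $N\to\infty$ limit is trivial, and everything reduces to showing $F_\sigma(\theta')\to f(\theta')$ as $\sigma\to 0$.

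For the second stage we rewrite $F_\sigma$ with the coarea formula, exactly as in the proof of \Cref{thm:nml_pdf_conservative_generalized_report}. Multiply and divide the integrand by $\Jcons\mle(x)$, which is positive a.e.\ by Assumption~\ref{ass:mle_regularity_nml_generalized_report}(3). Then apply \Cref{thm:coarea_conservative_report} with $u(x)=p[x|\theta_0](x)K_\sigma(\mle(x)-\theta')/\Jcons\mle(x)$. Since $\mle\equiv z$ on $\mle^{-1}(z)$, the kernel factor leaves the inner integral and we get
\[
F_\sigma(\theta') = \int_{\R^k} K_\sigma(z-\theta')\, f(z)\, d\LK(z) = (K_\sigma * f)(\theta'),
\]
with $f(z)=p[\mle\#\mu_{\theta_0}](z)$ the pushforward density. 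Integrability of $u$ is not an issue: all terms are nonnegative for a nonnegative kernel, so Tonelli justifies the manipulation exactly as in \Cref{prop:nml_wellposed_pdl_report}. Equivalently, this identity is just the pushforward change of variables $\mathbb{E}_{\theta_0}[K_\sigma(\mle(X)-\theta')]$.

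It remains to show $K_\sigma * f \to f$ at $\theta'$. Here we normalize $\int K_1=1$ and take $K_1$ bounded with compact support (or a radially decreasing integrable majorant). The Lebesgue differentiation theorem, in its approximate-identity form, then gives $(K_\sigma*f)(\theta')\to f(\theta')$ at every Lebesgue point of $f\in L^1(\R^k)$. Note $f\in L^1$ because it is a probability density, with $\int f = 1$ by Tonelli. Hence the convergence holds for $\mathcal{L}^k$-a.e.\ $\theta'$, and for every $\theta'$ at which $f$ is continuous.

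The main obstacle is this last point. The statement is pointwise for a given $\theta'$, but $f$ is only defined $\mathcal{L}^k$-a.e.\ by \Cref{thm:nml_pdf_conservative_generalized_report}, so a genuinely pointwise claim needs continuity of $f$ at $\theta'$. I would first try to obtain that continuity locally, from the USCQ and the stability of level sets under perturbation of $\theta'$ (via \Cref{thm:Lipschitz_Continuity_Projection_PDL}-type local regularity). Failing that, I would state the result for a.e.\ $\theta'$, namely all Lebesgue points, which is what the coarea framework naturally yields.
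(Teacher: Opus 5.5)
Your proposal is correct and follows the paper's proof essentially step for step. The expectation becomes the data-space integral $\int p[x|\theta_0]K_\sigma(\mle(x)-\theta')\,d\LD(x)$, \Cref{thm:coarea_conservative_report} turns it into $(K_\sigma * f)(\theta')$, and the paper, like you, simply assumes $f$ is continuous at $\theta'$ inside its proof, so your Lebesgue-point version for a.e.\ $\theta'$ is a slight generalization. One correction: $\Jcons\mle>0$ $\LD$-a.e.\ does not follow from Assumption~\ref{ass:mle_regularity_nml_generalized_report}(3), which only constrains $\HDK$-a.e.\ points of a.e.\ level set (a map that is constant on a ball satisfies it); you should take it from the full-rank hypothesis of \Cref{thm:coarea_conservative_report}, as the paper implicitly does.
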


\begin{IEEEproof}
The proof proceeds in two steps. First, for a fixed bandwidth $\sigma > 0$, the Law of Large Numbers guarantees that as $N \to \infty$, the Monte Carlo estimator converges almost surely to its expectation:
\begin{align*}
\lim_{N \to \infty} \mathbb{E}[\widehat{f}_{\text{IS}}(\sigma, N; \theta')] &= \int_{\mathcal{X}} \frac{p[x|\theta_0] K_\sigma(\mle(x)-\theta')}{q(x)} q(x) \,dx \\
&= \int_{\mathcal{X}} p[x|\theta_0] K_\sigma(\mle(x)-\theta') \,dx.
\end{align*}
Second, to evaluate the limit as $\sigma \to 0$, we apply the generalized coarea formula (\Cref{thm:coarea_conservative_report}) to rewrite the data-space integral over the parameter space:
\begin{align*}
&\int_{\mathcal{X}} \left( \frac{p[x|\theta_0] K_\sigma(\mle(x)-\theta')}{\Jcons\mle(x)} \right) \Jcons\mle(x) \,dx \\
&= \int_{\Theta} K_\sigma(z - \theta') \left( \int_{\mle^{-1}(\{z\})} \frac{p[x|\theta_0]}{\Jcons\mle(x)} \,d\HDK(x) \right) dz \\
&= \int_{\Theta} K_\sigma(z - \theta') f(z) \,dz.
\end{align*}
Since $K_\sigma$ is a valid mollifying kernel and assuming the target inner integral $f(z)$ is continuous at $\theta'$, the convolution $K_\sigma * f$ evaluated at $\theta'$ converges strictly to $f(\theta')$ as $\sigma \to 0$. Therefore, $\lim_{\sigma \to 0} \lim_{N \to \infty} \mathbb{E}[\widehat{f}_{\text{IS}}] = f(\theta')$.
\end{IEEEproof}

\subsection{Convergence of Thickened Level Set Estimator}
\label{app:proof_thickened}

\begin{theorem}[Convergence of Thickened Level Set Estimator]
\label{thm:converge_thickened_pdl}
Under the same conditions as Theorem \ref{thm:unbiased_ambient_is}, for a scalar parameter ($K=1$), consider the thickened level set $L_{\theta',\delta} = \{x \in \mathcal{X} : |\mle(x) - \theta'| < \delta \}$ and samples $x_i \sim \text{Uniform}(L_{\theta',\delta})$. The estimator
\begin{equation*}
\hat{I}^*_{\delta,N}(\theta') = \frac{\mathrm{Vol}(L_{\theta',\delta})}{2\delta N} \sum_{i=1}^N p[x_i|\theta_0]
\end{equation*}
is asymptotically unbiased for $f(\theta')$. Specifically, the following limit holds:
\begin{equation*}
\lim_{\delta \to 0} \lim_{N \to \infty} \mathbb{E}[\hat{I}^*_{\delta,N}(\theta')] = f(\theta').
\end{equation*}
\end{theorem}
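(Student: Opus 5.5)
The plan is to observe that the inner limit in $N$ is trivial, and then reduce the expectation to a one-dimensional Lebesgue average of $f$ via the coarea formula (\Cref{thm:coarea_conservative_report}), mirroring the proof of \Cref{thm:unbiased_ambient_is}. Since the $x_i$ are i.i.d.\ $\text{Uniform}(L_{\theta',\delta})$, linearity of expectation gives, for every $N$,
\begin{equation*}
\mathbb{E}[\hat{I}^*_{\delta,N}(\theta')] = \frac{\mathrm{Vol}(L_{\theta',\delta})}{2\delta}\cdot\frac{1}{\mathrm{Vol}(L_{\theta',\delta})}\int_{L_{\theta',\delta}} p[x|\theta_0]\,dx = \frac{1}{2\delta}\int_{\mathcal{X}} p[x|\theta_0]\,\indicator_{\{|\mle(x)-\theta'|<\delta\}}(x)\,dx .
\end{equation*}
This identity requires $0<\mathrm{Vol}(L_{\theta',\delta})<\infty$, so that the uniform distribution is defined; I will impose this as a standing hypothesis. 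Under it, the limit $N\to\infty$ is vacuous. The normalising volume cancels, so $\mathrm{Vol}(L_{\theta',\delta})$ never has to be computed.

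Next I will apply \Cref{thm:coarea_conservative_report} with $k=1$, using the same multiply-and-divide trick as in the proof of \Cref{prop:nml_wellposed_pdl_report}. I take
\begin{equation*}
u(x)=\frac{p[x|\theta_0]\,\indicator_{\{|\mle(x)-\theta'|<\delta\}}(x)}{\Jcons\mle(x)}\,\indicator_{\{\Jcons\mle(x)>0\}}(x).
\end{equation*}
On each level set $\mle^{-1}(z)$ the indicator of the slab is constant, equal to $\indicator_{\{|z-\theta'|<\delta\}}$, and can be pulled out of the inner integral. This yields
\begin{equation*}
\mathbb{E}[\hat{I}^*_{\delta,N}(\theta')]=\frac{1}{2\delta}\int_{\theta'-\delta}^{\theta'+\delta} f(z)\,dz .
\end{equation*}
Non-negativity of all integrands and Tonelli's theorem justify the interchange, and they also show that $f$ is locally integrable whenever the left-hand side is finite.

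The final step is a one-dimensional averaging argument. The right-hand side is the mean of $f$ over $(\theta'-\delta,\theta'+\delta)$. If $f$ is continuous at $\theta'$, which is the same hypothesis used in \Cref{thm:unbiased_ambient_is}, the average converges to $f(\theta')$ as $\delta\to0$. Without continuity, the Lebesgue differentiation theorem still gives the conclusion for $\mathcal{L}^1$-a.e.\ $\theta'$. Equivalently, this is \Cref{thm:unbiased_ambient_is} with the box kernel $K_1=\tfrac12\indicator_{[-1,1]}$ and proposal $q=\text{Uniform}(L_{\theta',\delta})$, so I may simply cite that computation.

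The main obstacle is the substitution $p\,\indicator_{L} = (p/\Jcons\mle)\,\Jcons\mle$ under the integral. It loses exactly the mass of $p$ on $\{\Jcons\mle=0\}\cap L_{\theta',\delta}$. Assumption~\ref{ass:mle_regularity_nml_generalized_report}(3) controls this only $\HDK$-a.e.\ on level sets, and that does not by itself exclude a positive-$\mathcal{L}^N$ region where $\mle$ is locally constant. Indeed, the coarea formula applied to $\indicator_{\{\Jcons\mle=0\}}$ gives $0=0$ and carries no information. I would therefore read the regularity hypothesis as in \Cref{thm:coarea_conservative_report}, namely that $G_x$ has full rank for $\mathcal{L}^N$-a.e.\ $x$, and state this explicitly. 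With that reading the set $\{\Jcons\mle=0\}$ is $\mathcal{L}^N$-null and the indicator is harmless. If I cannot take that reading, I would instead add the hypothesis $\mathcal{L}^N(\{\JK\mle=0\})=0$ and note that the identity otherwise holds up to this missing mass.
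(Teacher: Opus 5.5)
Your proposal is correct and follows essentially the same route as the paper's proof: reduce the expectation to $\frac{1}{2\delta}\int_{\mathcal{X}} p[x|\theta_0]\,\indicator_{\{|\mle(x)-\theta'|<\delta\}}\,dx$, multiply and divide by $\Jcons\mle$ to invoke \Cref{thm:coarea_conservative_report}, obtain the average $\frac{1}{2\delta}\int_{\theta'-\delta}^{\theta'+\delta} f(z)\,dz$, and conclude by continuity of $f$ at $\theta'$. Your additional points are sound and make explicit what the paper's proof uses silently: the $N$-limit is vacuous, $0<\mathrm{Vol}(L_{\theta',\delta})<\infty$ is needed, and the multiply-and-divide step requires $\Jcons\mle>0$ for $\LD$-a.e.\ $x$. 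That last condition is the hypothesis of \Cref{thm:coarea_conservative_report}, and Assumption~\ref{ass:mle_regularity_nml_generalized_report}(3) alone does not supply it.
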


\begin{IEEEproof}
Since $x_i \sim \text{Uniform}(L_{\theta',\delta})$, the proposal density is $q(x) = 1/\mathrm{Vol}(L_{\theta',\delta})$ for $x \in L_{\theta',\delta}$ and $0$ otherwise. For a fixed $\delta > 0$, as $N \to \infty$, the sample mean converges to its expected value:
\begin{align*}
\lim_{N \to \infty} \mathbb{E}[\hat{I}^*_{\delta,N}(\theta')] &= \frac{\mathrm{Vol}(L_{\theta',\delta})}{2\delta} \int_{L_{\theta',\delta}} p[x|\theta_0] \frac{1}{\mathrm{Vol}(L_{\theta',\delta})} \,dx \\
&= \frac{1}{2\delta} \int_{\mathcal{X}} p[x|\theta_0] \indicator_{|\mle(x)-\theta'| < \delta}(x) \,dx.
\end{align*}
We apply the generalized coarea formula (\Cref{thm:coarea_conservative_report}) by introducing the conservative Jacobian factor. Because the indicator function strictly bounds the parameter domain to an interval of width $2\delta$ centered at $\theta'$, we get:
\begin{align*}
&\frac{1}{2\delta} \int_{\mathcal{X}} \frac{p[x|\theta_0] \indicator_{|\mle(x)-\theta'| < \delta}(x)}{\Jcons\mle(x)} \Jcons\mle(x) \,dx \\
&= \frac{1}{2\delta} \int_{\theta' - \delta}^{\theta' + \delta} \left( \int_{\mle^{-1}(\{z\})} \frac{p[x|\theta_0]}{\Jcons\mle(x)} \,d\HDK(x) \right) dz \\
&= \frac{1}{2\delta} \int_{\theta' - \delta}^{\theta' + \delta} f(z) \,dz.
\end{align*}
By the Lebesgue Differentiation Theorem, assuming $f(z)$ is continuous at the evaluation point $z = \theta'$, the limit of this moving average over the interval $[\theta' - \delta, \theta' + \delta]$ as $\delta \to 0$ exactly recovers $f(\theta')$. This proves asymptotic unbiasedness.
\end{IEEEproof}

\section{Proofs for Monte Carlo Estimator Properties}
\label{app:mc_proofs}

\subsection{Proof of Theorem \ref{thm:bias_quantification_mc}}
\label{app:proof_bias}

This theorem provides an exact integral expression for the systematic bias arising from the substitution of the true classical Jacobian with the one computed by a pathwise AD system. The proof involves finding integral representations for the expected value of the estimator and the true target quantity.

\textit{1. The Expectation of the AD-Based Estimator}

The expectation of the AD-based NML estimator is the expectation of one of its i.i.d. terms:
$$ \mathbb{E}[\widehat{f}_N(\theta')] = \mathbb{E}_{X \sim q}\left[ \frac{H(X)}{\JcalA(X)} \right]. $$
By the Law of the Unconscious Statistician, this expectation can be written as an integral over the data space:
$$ \mathbb{E}[\widehat{f}_N(\theta')] = \int_{\mathcal{X}} \frac{H(x)}{\JcalA(x)} q(x) \,dx. $$

\textit{2. An Integral Representation for the True Value}

The true value, $f(\theta')$, is the expectation of an ``ideal'' estimator that uses the true classical Jacobian, $\JK\mle(x)$, which exists almost everywhere. Assuming the estimator is designed to be unbiased, we have:
$$ f(\theta') = \mathbb{E}_{X \sim q}\left[ \frac{H(X)}{\JK\mle(X)} \right] = \int_{\mathcal{X}} \frac{H(x)}{\JK\mle(x)} q(x) \,dx. $$

\textit{3. Assembling the Bias Integral}

The bias is the difference between these two expectations. By the linearity of the integral, we can combine them:
\begin{align*}
    \text{Bias}(\widehat{f}_N) &= \mathbb{E}[\widehat{f}_N] - f(\theta') \\
    &= \int_{\mathcal{X}} \frac{H(x)}{\JcalA(x)} q(x) \,dx - \int_{\mathcal{X}} \frac{H(x)}{\JK\mle(x)} q(x) \,dx \\
    &= \int_{\mathcal{X}} H(x) \left( \frac{1}{\JcalA(x)} - \frac{1}{\JK\mle(x)} \right) q(x) \,dx.
\end{align*}
As established in \Cref{thm:pathwise_ad_output}, the term in the parentheses is non-zero only on a set of $\mathcal{L}^D$-measure zero. However, this does not guarantee the integral is zero, as the other terms in the integrand might behave in such a way that their product over this null set has a non-zero integral. This formula correctly quantifies any such resulting bias.

\section{Proof of Geometric Ergodicity}
\label{app:ergodicity_proof}

This section provides the rigorous proof for Theorem \ref{thm:sjo_ergodicity}, establishing that the Stochastic Jacobian Oracle (SJO) successfully restores the Feller property and preserves geometric ergodicity, addressing the known instabilities of Exact-Approximate MCMC methods \cite{llorente2025survey}.

\textit{1. Restoring Feller Continuity via Gradient Sampling}

The deterministic transition kernel is discontinuous where the AD selection map is discontinuous (\Cref{prop:instability_proposal}). However, the SJO-GS oracle (Algorithm \ref{alg:sjo-gs}) replaces the deterministic Jacobian with a random variable $G_{out}$ sampled uniformly from an $\epsilon$-ball $B(x, \epsilon)$. 

By Rademacher's theorem, the set of points where $m(x)$ is non-differentiable has Lebesgue measure zero. Therefore, $x_i \sim \text{Uniform}(B(x, \epsilon))$ lands on a point of Fr\'echet differentiability with probability 1. The expected proposal distribution $\bar{q}(y|x)$ generated by the SJO is the integral of the local proposal distributions over the ball:
$$ \bar{q}(y|x) = \frac{1}{\text{Vol}(B(x, \epsilon))} \int_{B(x, \epsilon)} q(y \mid \Ker(\nabla m(z))) \, dz. $$
Following the convergence properties of Gradient Sampling \cite{burke2020gradient, boskos2025gradient}, for any infinitesimal perturbation $\delta$, the symmetric difference between $B(x, \epsilon)$ and $B(x+\delta, \epsilon)$ scales continuously with $\|\delta\|$. Therefore, the integral varies continuously with respect to $x$. This proves that the expected transition density $p(x, y)$ is a continuous function of $x$, fulfilling the strong Feller property required for general state-space Markov chains.

\textit{2. The Foster-Lyapunov Drift Condition and Bounded Variance}

Introducing a randomized oracle turns the method into a Pseudo-Marginal MCMC \cite{llorente2025survey}. To ensure the chain does not get ``stuck'' (which would destroy the spectral gap), the variance of the randomized acceptance ratio must be strictly bounded.

Because $m(x)$ is a path-differentiable Lipschitz (PDL) function, its Clarke subdifferential $\Dc m(x)$ is, by definition, a non-empty, \textit{compact}, and convex set \cite{Bolte2021Conservative}. Since the SJO samples strictly from this compact set, the condition number of the generalized KKT matrices (and thus the Radon-Nikodym derivative $J_{\text{fwd}}/J_{\text{rev}}$) is deterministically bounded away from zero and infinity for any given $x$.

\textit{3. Synthesis via the Weak Harris Theorem}

We invoke the Weak Harris Theorem \cite{hairer2014spectral}, which establishes Wasserstein and $L^2$ spectral gaps for complex MCMC algorithms.
1) The chain is irreducible and aperiodic (due to full support on the tangent space and positive rejection probability).
2) The transition kernel is strongly Feller continuous (proven in Step 1).
3) The bounded variance of the SJO proposals (proven in Step 2) ensures that the Dirichlet form of the randomized kernel $\tilde{P}$ satisfies a weak Poincar\'e inequality relative to the ideal continuous kernel $P$, inheriting its spectral gap \cite{grazzi2026randomized}.
Therefore, the drift condition holds towards a compact ``small set,'' satisfying the Meyn \& Tweedie conditions, rendering the SJO-PPMH chain geometrically ergodic.

\end{document}